\documentclass[11pt,a4paper]{article}

\usepackage[utf8]{inputenc}
\usepackage[T2A]{fontenc}

\usepackage{amsmath,amssymb,amsthm}
\usepackage{geometry}
\usepackage{hyperref}
\usepackage{booktabs}
\usepackage{enumitem}
\usepackage{algorithm}
\usepackage{algorithmic}
\usepackage{natbib}
\usepackage{graphicx}
\usepackage{mathtools}
\usepackage{url}
\usepackage{textcomp}
\usepackage{bbm}

\DeclareMathOperator{\tr}{tr}
\newtheorem{theorem}{Theorem}[subsection]
\newtheorem{lemma}[theorem]{Lemma}
\newtheorem{corollary}[theorem]{Corollary}
\newtheorem{proposition}[theorem]{Proposition}

\newtheorem{definition}{Definition}[subsection]
\newtheorem{assumption}{Assumption}[subsection]

\title{\Large \textbf{Ground-Truth-First Benchmarking of Reinforcement Learning via Converse Optimality: From Smooth, Fully Observed Systems to Non-Smooth or Partially Observed Tasks} \\[2ex]}

\author{%
  \large Gr\'egoire Ouerdane\textsuperscript{1,2} \quad
  Sinan Ibrahim\textsuperscript{1,2} \quad
  Stefan Streif\textsuperscript{3} \quad
  Pavel Osinenko\textsuperscript{1,2,4}\\[1.5ex]
  \footnotesize
  \textsuperscript{1}Center for Engineering Systems and Sciences, Skoltech, 30 Bolshoy Boulevard, bld.~1, Moscow, 121205, Russia\\[0.1ex]
	\footnotesize
  \textsuperscript{2}Central University, 7 Gasheka St., bld.~1, Moscow, 123056, Russia\\[0.3ex]
	\footnotesize
  \textsuperscript{3}Technische Universit\"at Chemnitz, Stra\ss{}e der Nationen 62, D-09111 Chemnitz, Germany\\[0.3ex]
	\footnotesize
  \textsuperscript{4}Sirius University of Science and Technology, 1 Olimpiyskiy Avenue, Sirius, 354340, Russia\\[0.7ex]
	\footnotesize
  \footnotesize
  Contact email: \texttt{p.osinenko@gmail.com}%
}

\begin{document}

\date{}

\maketitle

\begin{abstract}
    Benchmarking in reinforcement learning is usually relative: algorithms are ranked by returns on environments with unknown optima. Distance to optimality remains unobservable, while conclusions may depend on reward design, stochasticity, hyperparameters, environment selection, and simulator details. We develop a ground-truth-first framework based on converse optimality, constructing each task with a certified optimal value function and policy withheld from learners and used only for evaluation. The framework unifies two modes: sensitivity benchmarking samples certified environments from continuous-time generators to assess robustness to task variation, while oracle-referenced comparison uses native discrete-time generators to evaluate policies via optimality gap, regret, and policy error. Continuous-time models support physically interpretable sensitivity studies but introduce dependence on the simulator step ($\Delta t$) and numerical scheme. Native discrete-time models align the environment, oracle, and learner on the same transition law, eliminating external integration choices as a confounder. We reproduce quantitative results covering optimizer sensitivity and oracle-referenced comparisons on serial planar arms, dynamically extended nonholonomic vehicles, and quadrotor-inspired tasks. The methodology extends beyond smooth affine Gaussian systems: virtual-input pullback handles non-affine raw-action interfaces and finite-covariance non-Gaussian disturbances. A non-smooth converse optimality formulation enables certified benchmarks with kinked value functions and switching surfaces. For partially observed evaluation, an information-state formulation reduces benchmarking to converse optimality on a sufficient statistic and provides an information-penalty reference. Code, fixtures, and diagnostics are available on the public project page.
\end{abstract}

\tableofcontents

\section{Introduction}

Performance evaluation has become the default empirical language of reinforcement learning (RL). A new method is introduced, evaluated on benchmark environments, and compared against a changing set of baselines. This protocol is valuable, but it leaves a basic control-theoretic quantity hidden: \emph{distance to optimality}. When the optimal policy is unknown, a benchmark can tell us that one algorithm achieved a larger return than another; it cannot tell us whether that algorithm is near optimal, whether all compared methods are poor, or whether the ranking is an artifact of reward scaling, random seeds, task weighting, environment selection, or simulator conventions. This limitation is not superficial; it is fundamentally structural. Deep RL conclusions are well-known to depend strongly on implementation choices, evaluation protocols, random seeds, hyperparameters, and statistical reporting practices \citep{Henderson2018DeepRLThatMatters,Agarwal2021Precipice,patterson2024empirical}. For example, aggregate rankings may fluctuate significantly when tasks are weighted or normalized differently \citep{balduzzi2018re}. Recent critiques therefore argue that benchmark scores alone are insufficient evidence for broad algorithmic claims; controlled scientific tests are needed to explain how and when algorithms work \citep{jordan2024position}. For RL control, two particular questions arise: \emph{does a learning pipeline remain reliable when the environment varies?} and \emph{how far is a learned policy from the true optimum on a fixed task?} Conventional benchmarks struggle with both when the optimum is unavailable.

Converse optimality offers a direct alternative. It reverses the usual direction of optimal control: rather than solving a Bellman or Hamilton--Jacobi--Bellman (HJB) equation, one starts from a desired value function and policy and constructs environments for which they are optimal. The environment is therefore generated together with its certificate, turning evaluation from relative scoring to absolute calibrated measurement: optimality gaps, regret, Bellman residuals, and sensitivity across distributions of certified tasks. The central claim of this paper is that converse optimality transforms RL benchmarks into \emph{known-answer experimental instruments}: it tells not only which method returns the most, but why a method fails, how far it is from optimal, whether conclusions survive controlled environment variation, and whether the strongest baseline is genuinely near the certified optimum.

We organize this view into two complementary modes. In sensitivity benchmarking, a continuous-time generator defines a distribution over certified systems, turning environment variation into an experimental variable that tests pipeline consistency; the learner experiences a discretized version whose behavior depends on the integration step, which is precisely what sensitivity studies are meant to probe. In oracle-referenced comparison, native discrete-time fixtures define the transition law, oracle $(V^\star,\pi^\star)$, and evaluation protocol in the same formalism, removing the discretization confound and enabling absolute evaluation by optimality gap, regret, Bellman residual, and policy error.

\begin{figure}[h]
\centering
\includegraphics[width=0.99\linewidth]{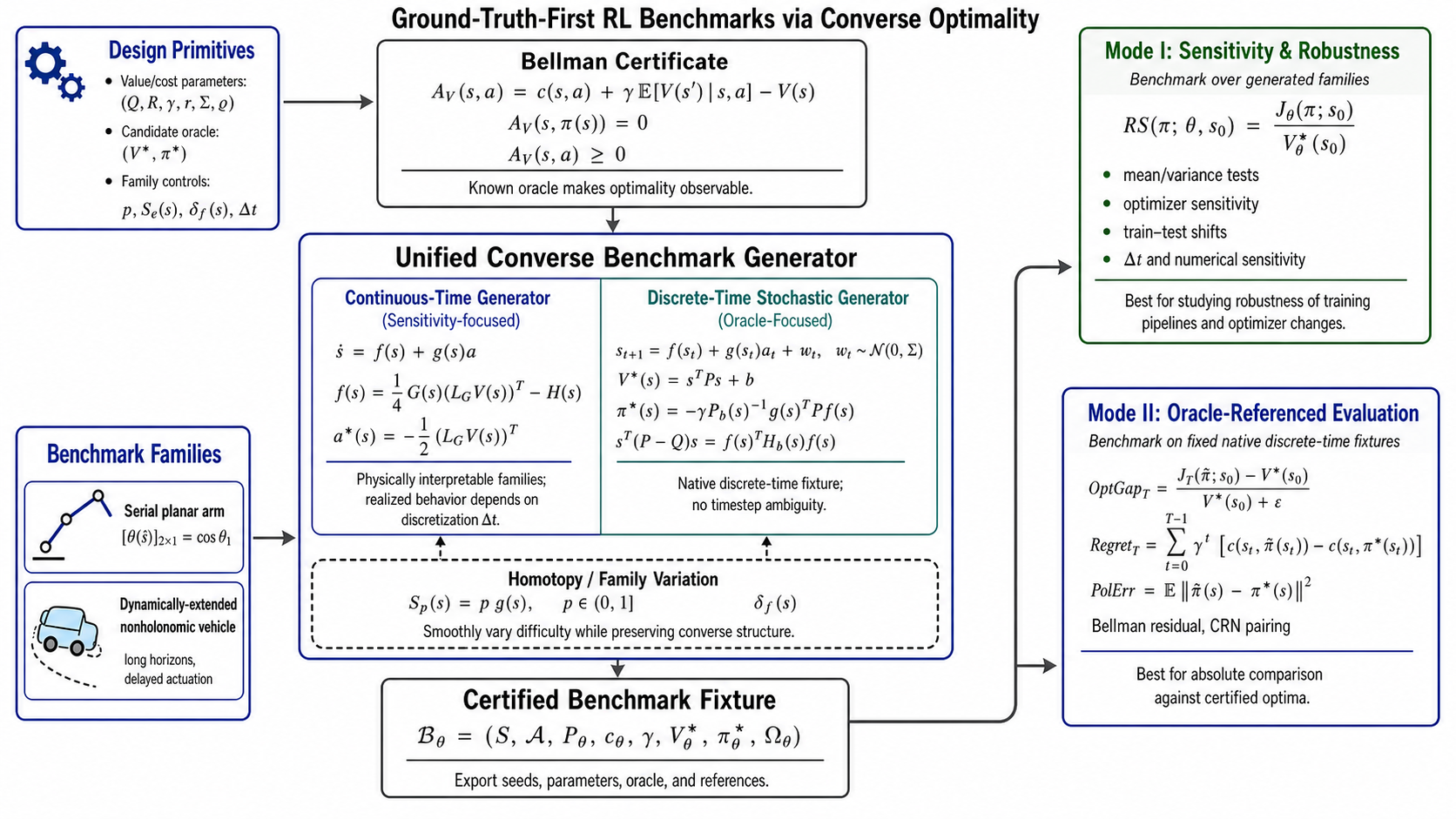}
\caption{Ground-truth-first benchmark-generation pipeline. A converse construction samples certified dynamics and costs, validates the optimality certificate, exports an environment fixture, trains learners without oracle access, and evaluates learned policies against the hidden pair $(V^\star,\pi^\star)$.}
\label{fig:pipeline}
\end{figure}

Time formalism is part of this design. Continuous-time generators produce physically interpretable non-linear systems, but the MDP experienced by a digital learner depends on the integration step $\Delta t$ and numerical scheme. This dependence is not a defect; it makes continuous‑time generators well‑suited for sensitivity studies. Native discrete-time generators define the transition law, oracle, and evaluation protocol in the same formalism, removing the discretization confound. Thus, continuous‑time generators are natural for sensitivity, whereas native discrete‑time fixtures are natural for absolute algorithm comparison.

The framework is not limited to smooth, affine, Gaussian, fully observed examples. The present work extends it along three independent axes. First, non-affine raw-action interfaces and finite-covariance non-Gaussian disturbances are handled by virual-input pullback, and demonstrated by the example of the ConverseQuad benchmark. Second, non-smooth value functions are managed via viscosity solutions and Clarke subdifferentials, enabling benchmarks with kinked values and switching surfaces. Third, partially observed evaluation is reduced to converse optimality on a sufficient information state, with a linear-quadratic Gaussian (LQG) information-penalty reference. These extensions share the same certificate logic; they differ in the interface the learner sees and the regularity of the certified value.

Our contribution is methodological. We review the foundations of converse optimality and contribute: (i) a ground-truth-first fixture formalism exporting certified values and policies; (ii) a separation between continuous-time sensitivity families and native discrete-time oracle-referenced fixtures; (iii) an extension to non-affine and non-Gaussian tasks, including ConverseQuad; (iv) a non-smooth converse theory via viscosity solutions and Clarke subdifferentials; (v) an information-state formulation for partially observed benchmarks; and (vi) a validation and evaluation protocol using Bellman consistency, closed-loop checks, and Common Random Numbers (CRN) schedules, demonstrated on optimizer sensitivity and on robotic control families. Section~\ref{sec:related-work} reviews related work, Sections~\ref{sec:fixtures}--–\ref{sec:mode2} develop the framework, Sections~\ref{sec:ext-nonaffine}--–\ref{sec:ext-pomdp} present the extensions, and Sections~\ref{sec:discussion}--–\ref{sec:conclusion} discuss limitations and open problems. The accompanying project page provides code, fixtures, results, and diagnostics.

\section{Related Work}
\label{sec:related-work}

\paragraph{RL benchmark suites.}
Shared environment suites---ALE, MuJoCo, Gym, Procgen, MiniGrid, bsuite, D4RL~\citep{bellemare2013arcade,todorov2012mujoco,brockman2016openai,cobbe2020leveraging,chevalier2023minigrid,osband2019behaviour,fu2020d4rl}---enable large‑scale empirical comparison but generally lack known optimal policies. Their primary signal is relative return, and a method may outperform baselines while remaining far from optimal.

\paragraph{Empirical reliability and statistical practice.}
RL conclusions depend heavily on implementation details, random seeds, and statistical reporting~\citep{Henderson2018DeepRLThatMatters,Agarwal2021Precipice,patterson2024empirical}. Benchmark scores alone are insufficient evidence for broad claims~\citep{jordan2024position}. We adopt this diagnostic perspective but change the experimental substrate: generating tasks with certified optima makes absolute error observable.

\paragraph{Classical optimal control and inverse optimal control.}
Classical optimal control fixes dynamics and cost and solves for a value function and policy, typically via the HJB equation or dynamic programming \citep{Bellman1957,lewis2012reinforcement}. This is intractable for general nonlinear systems due to the curse of dimensionality. Inverse optimal control and inverse reinforcement learning take the opposite direction: they infer a cost or reward from observed trajectories or demonstrations \citep{ng2000algorithms,abbeel2004apprenticeship}. Converse optimality differs from both. It neither solves a forward problem nor infers a cost from data; it prescribes a value function and constructs dynamics and state costs for which the prescribed pair is provably optimal. This makes it uniquely suited to benchmark generation, because the optimal reference is an input to the construction rather than an output of an intractable computation.

\paragraph{Converse optimality.}
Converse optimality was formalized for continuous-time deterministic systems by Nevistic and Primbs \citep{nevistic1996constrained}, using HJB certificates, and extended to discrete-time systems by Göhrt et al.\ \citep{gohrt2019converse}, using Bellman-equation conditions. Yaremenko et al.\ \citep{yamerenko2025generating} turned the discrete-time construction into a coordinate-free parametric generator for RL benchmarks, and Ibrahim et al.\ \citep{ibrahim2026benchmarking} extended it to stochastic control-affine systems with direct oracle metrics. We build on this line. We reuse its certificate logic, its quadratic specializations, and its CRN evaluation protocol, and we extend it in three directions: non-smooth values, non-affine raw actions with non-Gaussian disturbances, and information-state formulations for partially observed evaluation.

\paragraph{Non-smooth optimal control and viscosity solutions.}
Non-smooth value functions arise naturally in optimal control whenever the control enters linearly without a quadratic penalty (bang-bang control), when the horizon is minimized rather than discounted, when state or control constraints are active, or when the dynamics switch between modes. In these regimes the value function is typically only locally Lipschitz, with kinks along switching manifolds where the classical gradient does not exist. The standard machinery replaces pointwise derivatives with viscosity inequalities \citep{crandall1983viscosity,bardi1997optimal} and classical gradients with Clarke subdifferentials \citep{clarke1990optimization,clarke2001nonsmooth}; when the resulting optimal feedback is discontinuous, the closed loop is interpreted as a differential inclusion in the Filippov sense \citep{filippov1960differential}. Existing converse constructions, however, assume candidate values that are at least $C^1$ or $C^2$ \citep{nevistic1996constrained,gohrt2019converse,yamerenko2025generating}, which excludes precisely the switching-surface and bang-bang structures that make learning hard. We therefore develop a non-smooth converse theory in which the smooth HJB certificate is replaced by a viscosity–Clarke condition, yielding certified benchmarks whose optimal values have known kinks and whose optimal policies may be discontinuous.

\paragraph{Partial observability and information-state reduction.}
Partially observed control is classically reduced to a fully observed problem on an information state: a statistic of the observation–action history that is sufficient for dynamic programming \citep{smallwood1973optimal,kaelbling1998planning}. In finite-state POMDPs this statistic is a belief vector over the hidden state; in linear–Gaussian problems it is the Kalman filter's conditional mean and covariance; in general nonlinear continuous-state models it is an infinite-dimensional belief density. Recurrent policies approximate this reduction in practice \citep{hausknecht2015deep,kapturowski2018recurrent}, and memory-oriented benchmarks probe it empirically \citep{morad2023popgym}, but they do not provide a certified optimal reference. A further complication is that a full-state oracle may exploit information the learner cannot access from its history, so comparing a history-based policy against a latent-state optimum conflates learning error with an information penalty. No converse construction has been formulated on an information state with a certified output-feedback oracle. We address this by reducing partially observed benchmarking to converse optimality on a sufficient statistic, certifying the oracle in the learner's own information set, and isolating the information penalty analytically in an LQG reference.

\section{Ground-Truth-First Fixtures}
\label{sec:fixtures}

A standard RL benchmark provides an environment but not the optimal return. A \emph{ground-truth-first} benchmark adds a certificate that a particular value function \(V^{\star}\) and policy \(\pi^{\star}\) are indeed optimal. We work within the usual Markov decision process (MDP) framework with state space \(\mathcal{S}\), action space \(\mathcal{A}\), transition kernel \(P\), stage cost \(c\), and discount \(\gamma\). For a stationary policy \(\pi\), the value function is
\[
V^{\pi}(s) = \mathbb{E}\!\left[ \sum_{k=0}^{\infty} \gamma^{k} c(s_{k}, \pi(s_{k})) \;\Big|\; s_{0}=s \right],
\]
and the optimal value function satisfies \(V^{\star}(s) = \inf_{\pi} V^{\pi}(s)\). We state the certificate for this discounted, fully observed, smooth discrete-time case, and generalize it in Sections~\ref{sec:ext-nonaffine}--\ref{sec:ext-pomdp} to non-smooth values, non-affine interfaces, and information states.

\paragraph{Cost and reward conventions.}
All certificates are stated in the \emph{cost} convention, where lower is better and \(V^{\star}\) is nonnegative for nonnegative stage costs. When results are reported in the \emph{reward} convention \(r = -c\), so that \(V_{r}^{\star} = -V^{\star}\) and the optimality gap \(\mathrm{Gap}^{r} = -\mathrm{Gap}^{c}\), so that zero is optimal and more negative values indicate larger suboptimality. The convention used in each experiment is stated explicitly in the corresponding caption. This dual convention matters because the Bellman certificate is naturally a cost identity, whereas standard RL libraries report rewards.

\paragraph{Optimality certificate.}
To check optimality, we use the \emph{Bellman residual}
\[
\mathcal{R}_{V}(s,a) = c(s,a) + \gamma \int V(s') \, P(\mathrm{d}s' \mid s,a) - V(s).
\]
The \emph{optimality certificate} states: if for all states and actions \(\mathcal{R}_{V}(s,\pi(s)) = 0\) and \(\mathcal{R}_{V}(s,a) \ge 0\), then \(V = V^{\star}\) and \(\pi\) is optimal (see Appendix~\ref{app:smooth-proofs}). The certificate is checked analytically where possible, and numerically on a declared validation set \(\mathcal{G}\) with tolerance \(\epsilon_{B}\). Accepted fixtures record the maximum residual, the validation domain, and the numerical tolerance in the validation record. This makes the certificate independently auditable: an external party can reload a fixture, recompute the residual, and verify that the prescribed pair is consistent with the transition law and cost.

\paragraph{Fixtures.}
A \emph{benchmark fixture}
\[
\mathfrak{B}_{\theta} = (\mathcal{S}, \mathcal{A}, P_{\theta}, c_{\theta}, \gamma, V_{\theta}^{\star}, \pi_{\theta}^{\star}, \Omega_{\theta})
\]
packages the certified MDP with a validation record. The tuple may additionally carry the time formalism, the regularity class of \(V^{\star}\), the action interface the learner sees, and the observation map. The agent trains on the MDP \((\mathcal{S}, \mathcal{A}, P_{\theta}, c_{\theta}, \gamma)\); \(V_{\theta}^{\star}\) and \(\pi_{\theta}^{\star}\) are withheld as evaluation oracles. When \(\theta\) is fixed, the fixture provides a single known-answer task. When \(\theta\) is sampled from a distribution, one obtains a \emph{distribution over certified MDPs}, enabling the two modes in Table~\ref{tab:modes}.

\begin{table}[t]
\centering
\footnotesize
\begin{tabular}{p{0.12\linewidth}p{0.15\linewidth}p{0.34\linewidth}p{0.25\linewidth}}
\toprule
Mode & Time formalism & Scientific question & Typical outputs \\
\midrule
Sensitivity / robustness & Continuous-time generator & Does a learning pipeline remain reliable when the environment varies? & Relative suboptimality distributions, mean and variance tests, train--test shift studies \\
Oracle-referenced comparison & Native discrete-time fixture & How far is a learned policy from the certified optimum on a fixed fixture? & Optimality gap, regret, Bellman residual, policy error, paired CRN confidence intervals \\
\bottomrule
\end{tabular}
\caption{Two complementary benchmark modes enabled by ground-truth-first fixtures. Mode~I is typically instantiated with a continuous-time generator and Mode~II with a native discrete-time fixture; Section~\ref{sec:mode1} and Section~\ref{sec:mode2} explain why.}
\label{tab:modes}
\end{table}

\paragraph{What the extensions change.}
The extensions in Sections~\ref{sec:ext-nonaffine}--\ref{sec:ext-pomdp} modify one element of this picture each, while leaving the fixture concept unchanged. Non-smooth values replace the pointwise Bellman residual with a viscosity--Clarke condition, so that the certificate remains valid at kinks and switching surfaces. Non-affine raw actions and finite-covariance non-Gaussian disturbances change the action interface and the disturbance law, while preserving the same certificate in virtual-input coordinates. Partial observability replaces the state \(s\) with a sufficient information state \(b\), so that the oracle is attainable from the learner's history rather than from a hidden state; this also makes the information penalty separable from the learning gap. In all three cases, the task and its certificate are constructed together, and the reference pair is withheld until evaluation. The remainder of the paper develops the two modes (Sections~\ref{sec:mode1}--\ref{sec:mode2}), the three extensions (Sections~\ref{sec:ext-nonaffine}--\ref{sec:ext-pomdp}), and validation and reproducibility protocols.

\section{Mode I -- Continuous-Time Sensitivity Benchmarking}
\label{sec:mode1}

The certificate of Section~\ref{sec:fixtures} guarantees optimality when certain conditions hold. To turn that certificate into a usable benchmark, one needs \emph{converse generator templates}---recipes that, given a desired value function \(V\) and policy, produce the dynamics and costs that satisfy the certificate. This section develops Mode~I for continuous-time sensitivity studies. Mode~II (Section~\ref{sec:mode2}) develops native discrete-time fixtures for absolute comparison, and Sections~\ref{sec:ext-nonaffine}--\ref{sec:ext-pomdp} extend both to non-affine interfaces, non-smooth values, and partial observability. Modes~I and~II together cover the smooth, fully observed, affine-action case; the extensions relax each of these in turn, while leaving the certificate logic unchanged.

\subsection{Continuous-time converse generators for sensitivity}

Consider a continuous-time, control-affine system \(\dot{s}=f(s)+g(s)a\) with state \(s\in\mathbb{R}^{n}\) and action \(a\in\mathbb{R}^{m}\). We prescribe a positive definite \(C^{1}\) value function \(V\). A converse construction~\citep{yamerenko2025generating} uses two designer-chosen smooth fields: a vector field \(H(s)\) shaping the drift, and a matrix field \(G(s)\) directing the control. The generator outputs
\[
  g(s)=G(s),\quad
  f(s)=\frac{1}{4}G(s)\big(\mathcal{L}_{G}V(s)\big)^{\!\top}-H(s),\quad
  c(s,a)=\mathcal{L}_{H}V(s)+\|a\|_{2}^{2},
\]
where \(\mathcal{L}_{G}V(s)\) is the Lie derivative of \(V\) along the columns of \(G\). A completion of squares (Appendix~\ref{app:ct_derivation}) gives the Hamiltonian as \(\|a+\tfrac12(\mathcal{L}_{G}V(s))^{\!\top}\|_{2}^{2}\), yielding the certified optimal policy \(a^{\star}\!(s)=-\tfrac12(\mathcal{L}_{G}V(s))^{\!\top}\).

\paragraph{From generator to sensitivity benchmark.}
The fields \(H\) and \(G\) are \emph{knobs} that parametrize a rich family of certified nonlinear control problems (stable/unstable, underactuated/fully actuated, stiff/slack). Sampling them from a distribution produces a \emph{distribution over certified MDPs}---the substrate for sensitivity and robustness studies. Because the certificate is continuous-time, the digital RL agent experiences a discretized version whose dynamics depend on the step size \(\Delta t\) and integration method. This dependency is not a bug: it makes environment variation and numerical realization part of the experimental variable, precisely what sensitivity studies require. It is also why Mode~I is used for sensitivity and not for absolute comparison. Mode~II, and its extensions in Sections~\ref{sec:ext-nonaffine}--\ref{sec:ext-nonsmooth}, removes this confound by working natively in discrete time, so that learner, oracle, and metrics share the same transition law.

\subsection{Sensitivity evaluation protocol}
\label{subsec:sensitivity_protocol}

A continuous-time generator defines a distribution over certified environments indexed by \(\theta\). For a sampled environment \(\theta\) and a learning procedure, a trained policy \(\pi_{\theta}\) is obtained. The \emph{relative suboptimality} (RS) normalizes the incurred cost by the known optimal cost:
\[
  \mathrm{RS}(\pi_{\theta}; \theta, s_{0}) = \frac{J_{\theta}(\pi_{\theta}; s_{0})}{V_{\theta}^{\star}(s_{0}) + \epsilon},
\]
where \(J_{\theta}\) is the cumulative cost and \(\epsilon>0\) is a small constant that guards against division by zero. RS is task-aware: it measures excess cost as a multiple of the optimum, independent of raw task scale. The cost convention is defined in Section~\ref{sec:fixtures}; reward-convention plots flip the sign.

Sampling \(N\) environments and training on each yields a sample \(\{\mathrm{RS}_{i}\}\). Two procedures are compared by their RS distributions---the question is not who wins on one task, but whether one procedure systematically produces a different suboptimality profile across tasks. An \(F\)-test is applied for variance (consistency) and a \(t\)-test for means (average performance). The protocol is: sample environments, train, compute RS with the known \(V_{\theta}^{\star}\), and test. This turns environment variation from a nuisance into a controlled, statistically measurable experimental variable.

\begin{figure}[t]
    \centering
    \includegraphics[width=0.9\linewidth]{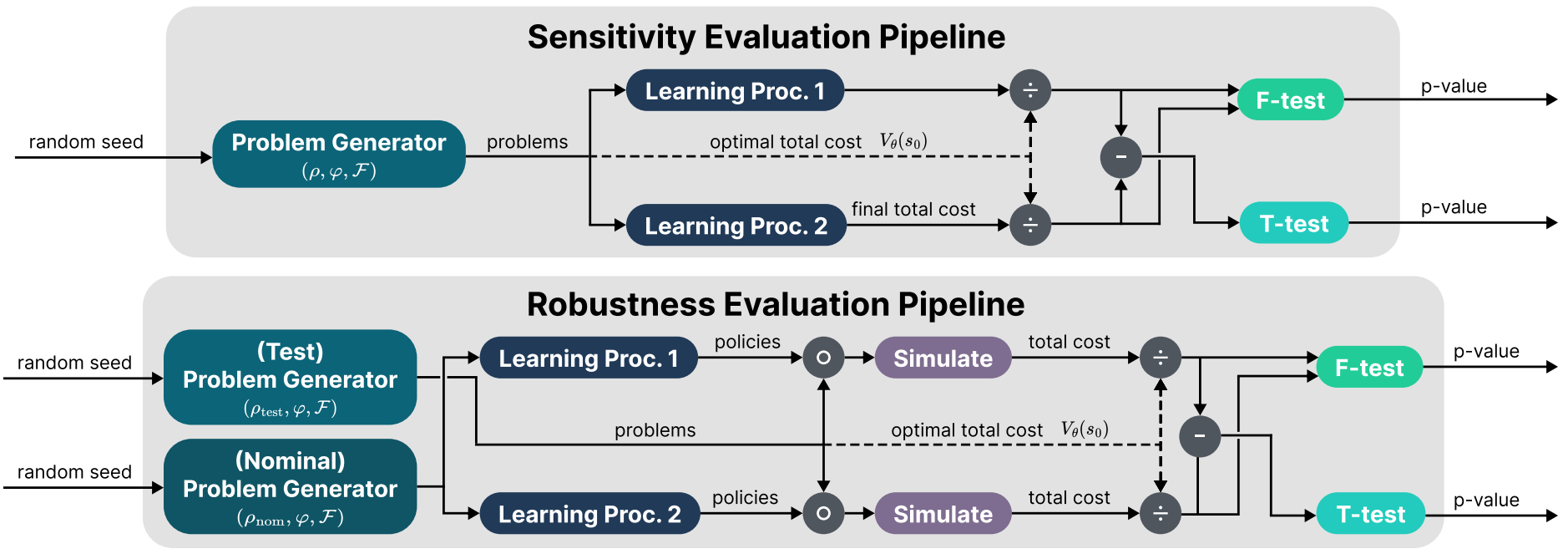}
    \caption{Reproduced from Yaremenko \textit{et al.}~\citep{yamerenko2025generating}. Sensitivity (top) and robustness (bottom) test structures. The symbol $\circ$ indicates that the policy is substituted into the test problem, closing the loop. The problem sample is split evenly at the junction.}
    \label{fig:sens-pipeline}
\end{figure}

\subsection{Sensitivity experiment}

The sensitivity protocol is illustrated with the continuous-time case study of a generator producing a sample of \(160\) unstable three-dimensional control problems, each with a known optimal value~\citep{yamerenko2025generating}. Two PPO pipelines, identical in all respects except the optimizer (Adam \emph{vs.}~RAdam), were trained on independent splits of \(80\) environments.

Relative suboptimality is computed for each trained policy \(\pi_{\theta}\). An \(F\)-test comparing the two RS distributions gave a variance ratio \(F \approx 2.59\) (\(p \approx 1.7\times 10^{-5}\)), rejecting the null of equal variance at any conventional significance level.

This result does not mean one optimizer is uniformly ``better.'' It shows that the optimizer choice materially alters the \emph{spread} of suboptimality across environments---a measure of pipeline \emph{consistency}. A return-only benchmark would have obscured this effect entirely. The converse-generated family makes sensitivity a measurable and statistically testable property. The result is a reported sensitivity study; independent reproduction requires the original generator schedule and training seeds, which are discussed in Appendix~\ref{app:reproducibility}.

\begin{table}[t]
\centering
\footnotesize
\begin{tabular}{p{0.29\linewidth}p{0.61\linewidth}}
\toprule
Quantity & Value \\
\midrule
Generator family & Unstable 3D continuous-time converse-optimal systems \\
Learning algorithm & PPO \citep{schulman2017proximal} \\
Perturbation under test & Adam vs.\ RAdam optimizer \\
Environment sample size & 160 generated problems, split into 80 per optimizer arm \\
Normalization & Relative suboptimality using each environment's known optimal value \\
Test statistic & Variance ratio $F\approx2.59$ \\
$p$-value & $\approx1.7\times10^{-5}$ \\
Interpretation & Optimizer choice materially changed outcome variance across certified environments \\
\bottomrule
\end{tabular}
\caption{Continuous-time sensitivity case study. Two PPO pipelines differed only in optimizer. The measured quantity is relative suboptimality over a generated family of certified environments. Values are reported from the source; the stated test concerns the variance of relative suboptimality.}
\label{tab:sensitivity}
\end{table}

\section{Mode II -- Native Discrete-Time Oracle-Referenced Comparison}
\label{sec:mode2}

Mode~II uses native discrete-time generators. Unlike Mode~I, the transition law itself is the benchmark: learner, oracle, and metrics share the same map, which removes the discretization confound and makes these fixtures suitable for absolute comparison. We proceed from deterministic to stochastic dynamics, then to example families, then to the evaluation protocol. Extensions to non-affine actions and non-Gaussian noise (Section~\ref{sec:ext-nonaffine}), non-smooth values (Section~\ref{sec:ext-nonsmooth}), and partial observability (Section~\ref{sec:ext-pomdp}) preserve the same certificate logic; only the interface and the regularity class change.

\subsection{Deterministic discrete-time generator}

Consider a deterministic, control-affine, discrete-time system \(s_{k+1} = f(s_k) + g(s_k)\,a_k\), with state \(s\in\mathbb{R}^{n}\) and action \(a\in\mathbb{R}^{m}\). The stage cost \(c(s,a) = q(s) + \rho(a)\) is separable, with \(q \ge 0\) and \(\rho\) strictly convex with \(\rho(0)=0\). For a given input-coupling function \(g\) and control cost \(\rho\), the converse optimality problem asks: given a candidate optimal value function \(V^{\star}\), find a drift \(f\) such that \(V^{\star}\) satisfies the Bellman equation for the considered system.

G\"ohrt \textit{et al.}~\citep{gohrt2019converse} provided a complete answer. A pair \((V^{\star}, q)\) is \emph{admissible}---i.e., there exists a drift \(f\) realizing \(V^{\star}\) as the optimal value function---if and only if \(V^{\star}(s) - q(s) \ge 0\) for all \(s\in\mathbb{R}^{n}\). The sufficiency proof is constructive and uses a homotopy argument (see Appendix~\ref{app:det-homotopy}). When the value function, the state cost, and the control cost are quadratic, with \(V^{\star}(s) = s^{\!\top} P s\), \(q(s) = s^{\!\top} Q s\), \(\rho(a) = a^{\!\top} R a\) and \(P,Q,R \succ 0\), the admissible drift can be written explicitly as
\begin{equation}
  f(s) = H(s)^{-1/2}\,S(s)\,(P-Q)^{1/2}\,s,
  \qquad S(s)^{\!\top} S(s) = I,
  \label{eq:det_mn_param}
\end{equation}
where \(H(s) = \big(P^{-1} + g(s)R^{-1}g(s)^{\!\top}\big)^{-1}\). The orthogonal field \(S(s)\) can be chosen freely and serves as a design knob for injecting nonlinear coupling while preserving optimality. The optimal policy is \(a^{\star}(s) = -\big(R + g(s)^{\!\top} P g(s)\big)^{-1} g(s)^{\!\top} P f(s)\).

\subsection{Stochastic discrete-time generator}

The converse framework can also be extended to stochastic systems \(s_{k+1}=f(s_k)+g(s_k)a_k+w_k\) with additive Gaussian noise \(w_k\sim\mathcal{N}(0,\Sigma)\)~\citep{ibrahim2026benchmarking}. The optimal value function satisfies the stochastic Bellman equation
\[
V^{\star}(s)=\inf_{a}\big\{c(s,a)+\gamma\,\mathbb{E}\big[V^{\star}(f(s)+g(s)a+w)\big]\big\}.
\]
An analogous necessary-and-sufficient optimality condition holds (see Appendix~\ref{app:smooth-proofs}). The cost convention of Section~\ref{sec:fixtures} is used throughout.

For benchmark generation, the most useful instance is the \emph{quadratic-Gaussian} case with value function \(V^{\star}(s)=s^{\!\top}Ps+b\), noise certainty equivalence offset \(b\), and quadratic stage costs. The stochastic Bellman equation reduces to a single \emph{energy equation}
\begin{equation}
s^{\!\top}(P-Q)s=f(s)^{\!\top}H_{\gamma}(s)f(s),
\label{eq:energy}
\end{equation}
with an explicit state-dependent metric \(H_{\gamma}\succ 0\), which any drift must satisfy to make \(V^{\star}\) optimal. Equation~\eqref{eq:energy} is not a constraint to be checked after the fact; it is the \emph{design equation}. Any drift satisfying it yields a certified fixture, and the free parameter---the orthogonal field \(S(s)\) below---injects nonlinearity without breaking the certificate. Two explicit parameterizations exist: a \emph{metric-normalized} form \(f(s)=H_{\gamma}^{-1/2}S(s)(P-Q)^{1/2}s\) with an orthogonal field \(S(s)\), and a \emph{square-root} form. Both inject bounded nonlinear coupling while preserving the energy identity exactly. A control-strength homotopy \(g_p(s)=p\,g(s),\;p\in(0,1]\) varies authority, conditioning, and task difficulty without invalidating the certificate; larger \(p\) improves the oracle but amplifies sensitivity to action error, the \emph{control-authority paradox} analysed in Section~\ref{subsec:oracle_experiment}.

In native discrete time, the transition law itself is the benchmark: learner, oracle, and metrics share the same map. This eliminates discretization confounds and makes these fixtures ideal for absolute, calibrated algorithm comparison (Mode~II).

\subsection{Example families}
\label{subsec:families}

The following families instantiate a general recipe rather than exhaust it. Each family highlights a particular design dimension such as state-dependent actuation, dynamic extension, or non-affine interfaces that the generator can accommodate. Complexity is introduced through certified parameters: link count, module count, dynamic-extension depth, coupling strength, control authority, actuator coordinates, and disturbance structure. A certified latent MDP can be wrapped with partial or image-valued observations, but an output-feedback oracle requires an information-state certificate (Section~\ref{sec:ext-pomdp}); a hidden-state oracle is not automatically attainable from the learner's history.

\paragraph{State-dependent actuation and nonlinear coupling.}
As a concrete example, consider a first family which models a serial \(n\)-link planar arm with state \(s=[\theta_1,\omega_1,\ldots,\theta_n,\omega_n]^\top\in\mathbb{R}^{2n}\) and action \(a=[\tau_1,\ldots,\tau_n]^\top\in\mathbb{R}^n\). The input map is configuration dependent with \([g(s)]_{2i,i}=\cos\theta_i\) for \(i=1,\ldots,n\), with all other entries zero. The drift is generated by
\begin{equation}
f_p(s)=H_\gamma^{(p)}(s)^{-1/2}S_p(s)(P-Q)^{1/2}s.
\label{eq:arm-f}
\end{equation}
The orthogonal field \(S_p(s)\) is a product of Givens rotations containing \(\tanh\) terms that create bounded velocity-dependent and neighbor-coupling effects. A sinusoidal term gives a gravity-like bias. The family is non-linear, scalable in dimension, and certified by construction.

\paragraph{Dynamic extension and delayed actuation.}
The second example illustrates dynamic extension with a nonholonomic vehicle family~\citep{ibrahim2026benchmarking}, where for module \(i\), the state is \(s_i=[x_i,y_i,\phi_i,v_i^{(0)},\ldots,v_i^{(r_v-1)},\omega_i^{(0)},\ldots,\omega_i^{(r_\omega-1)}]^\top\) and the action is \(a_i=[a_{v,i},a_{\omega,i}]^\top\). The kinematic states evolve as
\begin{equation}
x_{i,k+1} = x_{i,k}+\tau v^{(0)}_{i,k}\cos\phi_{i,k},\quad
y_{i,k+1} = y_{i,k}+\tau v^{(0)}_{i,k}\sin\phi_{i,k},\quad
\phi_{i,k+1} = \phi_{i,k}+\tau \omega^{(0)}_{i,k}.
\label{eq:nv-kin}
\end{equation}
Velocity chains shift upward, with control entering only at the highest derivative. Modules are stacked through the block-diagonal matrix \(G\), and obtain the overall state \(s_{k+1}=f_p(s_k)+pGu_k+w_k\). Dynamic extension introduces delayed actuation, long credit-assignment paths, and stiff closed-loop behavior, stressing RL algorithms differently from the arm family.

\paragraph{Non-affine actuation and non-Gaussian gusts.}
A third family, ConverseQuad, is presented in Section~\ref{sec:ext-nonaffine}. It exposes a state-dependent aerial actuation model through a nonlinear raw-action coordinate and admits finite-covariance non-Gaussian disturbances while preserving the certified optimum in virtual-input coordinates.

\subsection{Oracle-referenced evaluation protocol}
\label{subsec:oracle_protocol}

Fixtures are validated (see Appendix~\ref{app:reproducibility}) before export. Four absolute metrics are measured.

\paragraph{Normalized optimality gap (tail-corrected).}
The tail-corrected cost gap is
\[
    \mathrm{Gap}_{T}^{c}(\hat{\pi};s_0)
    =
    \frac{
    \mathbb{E}\!\left[
    \sum_{k=0}^{T-1}\gamma^k c(s_k,\hat{\pi}(s_k))
    +
    \gamma^T V^\star(s_T)
    \right]
    -
    V^\star(s_0)
    }{
    |V^\star(s_0)|+\epsilon
    },
\]
where \(\epsilon>0\) guards against division by zero. The tail term removes truncation bias, yielding zero for the optimal policy (up to numerical precision) and nonnegative values for suboptimal policies. Under the reward convention, the optimality gap is \(G^\star = -V^\star\), so zero is optimal and more negative values indicate larger suboptimality. Each plot states explicitly whether \(\mathrm{Gap}_{T}^{c}\) or \(\mathrm{Gap}_{T}^{r}\) is used.

\paragraph{Discounted regret.}
Regret accumulates the per-step excess cost incurred by \(\hat{\pi}\) relative to the optimal policy. It is a trajectory-level diagnostic that complements the scalar gap. Lower cost regret is better.
\[
    \mathrm{Regret}_{T}^{c}(\hat{\pi};s_0)
    =
    \mathbb{E}\!\left[
    \sum_{k=0}^{T-1}
    \gamma^k
    \Big(
    c(s_k,\hat{\pi}(s_k))
    -
    c(s_k,\pi^\star(s_k))
    \Big)
    \right].
\]
If a reward-convention regret is plotted, it is defined by \(\mathrm{Regret}_{T}^{r}=-\mathrm{Regret}_{T}^{c}\), and the sign convention is stated in the corresponding caption.

\paragraph{Policy error and Bellman residual.}
Policy error is \(\mathbb{E}_{s\sim\mu}\|\hat{\pi}(s)-\pi^{\star}(s)\|_{2}^{2}\). The Bellman residual \(\mathcal{R}_{\hat{V}}\) is reported when a critic \(\hat{V}\) is available.

\paragraph{CRN.}
All policies are evaluated on a shared pre-generated schedule of start states and noise streams. Paired bootstrap over trajectories yields confidence intervals, ensuring that differences reflect algorithms, not noise realizations.

\subsection{Oracle-referenced experiment}
\label{subsec:oracle_experiment}

PPO, A2C, SAC, TD3, and DDPG are evaluated on ConverseArm and NVDEx native discrete-time fixtures under a CRN schedule. The three regimes probe different failure modes: the \texttt{serial\_random} tests broad occupancy coverage with diverse initial states, and \texttt{serial\_fixed} tests narrow stationary distributions around a fixed initial state, and NVDEx tests long-horizon delayed actuation (\(T=700\)). Figure~\ref{fig:arm-reward} shows that the oracle-referenced ranking is strongly regime-dependent. On the serial-random fixtures, TD3 has the best point estimate in both optimality gap and regret, followed by SAC and PPO; A2C and DDPG remain substantially farther from the certified optimum.

\begin{figure}[t]
    \centering
    \includegraphics[width=1\linewidth]{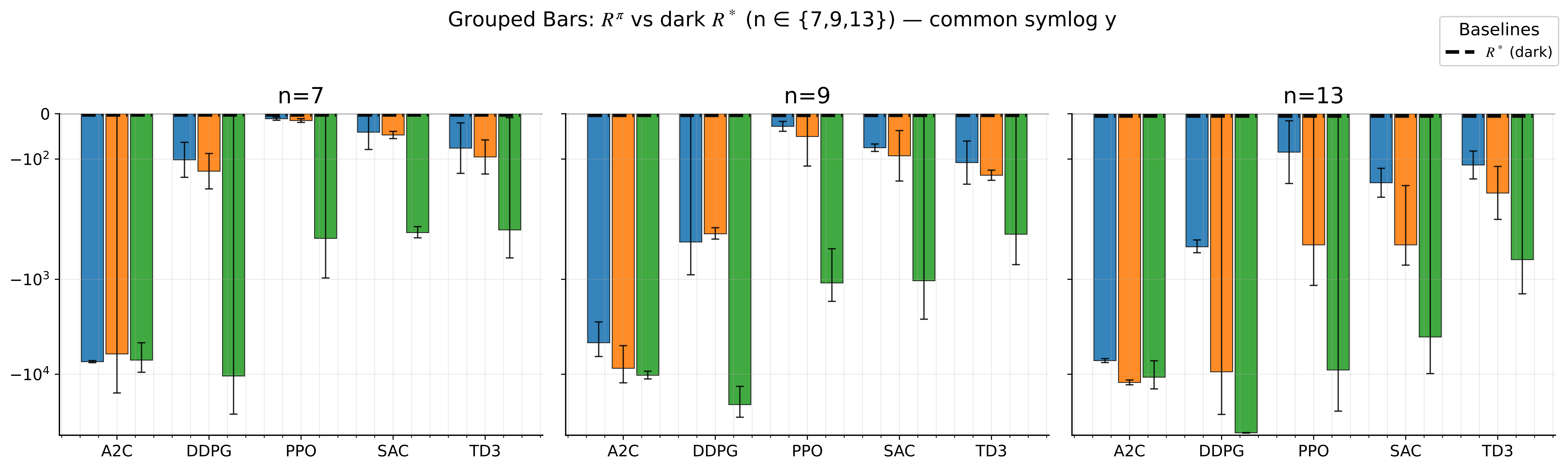}
    \caption{Comparing algorithm absolute performance. Discounted reward bars \((r = -c)\) with 95\% CIs under CRN. Colors correspond to \(p \in \{0.5, 0.6, 0.8\}\).}
    \label{fig:arm-reward}
\end{figure}

The ranking changes on the serial-fixed fixtures. PPO is closest to the oracle by a large margin, while SAC and TD3 exhibit large regret despite their strong performance in many standard continuous-control benchmarks. This indicates that the fixed-initial-state regime creates a narrow, stationary occupancy distribution in which PPO's clipped on-policy updates are more stable, whereas off-policy bootstrapping and replay do not provide the same advantage.

The NVDEx regime shows the strongest separation: PPO is near the certified optimum, while DDPG, SAC, TD3, and A2C remain far from zero. This reflects the vehicle's dynamic extension: delayed actuation, a longer horizon, and error propagation through feedback-sensitive dynamics. The result does not claim universal PPO superiority; rather, it shows that this fixture exposes a failure mode where delayed actuation and long-horizon conditioning challenge off-policy baselines under the reproduced protocol.

Together, Figures~\ref{fig:arm-reward} and~\ref{fig:signed_gap_distributions} show that certified fixtures turn RL evaluation from relative ranking into diagnostic measurement. No single algorithm dominates: TD3 excels on serial-random, while PPO excels on serial-fixed and NVDEx. Distributional plots expose lower-tail failures hidden by means. Because the optimum is known, one can measure not only which method performs best, but also how far each is from optimality and how error is distributed across runs and fixtures.

\begin{figure}[t]
    \centering
    \includegraphics[width=1.0\linewidth]{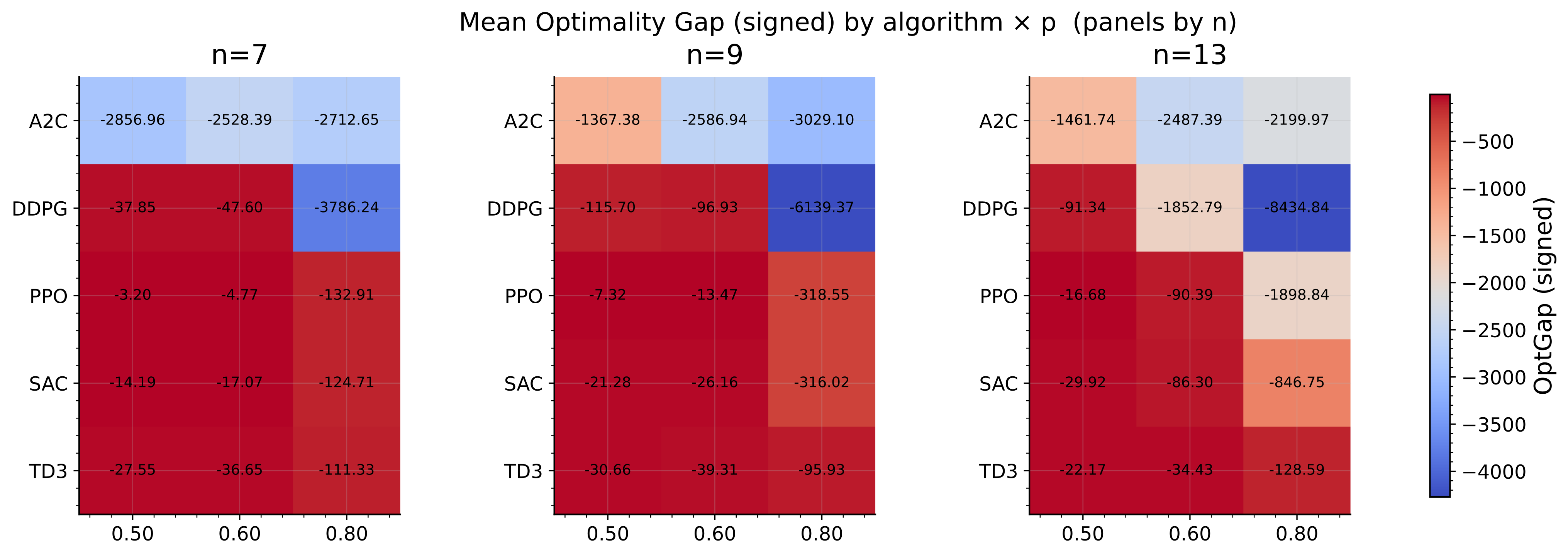}
    \caption{Serial \(n\)-link planar arm (ConverseArm): absolute oracle-referenced performance (optimality gap) across algorithms (rows) and control authority \(p\) (columns), with separate panels for \(n\).}
    \label{fig:arm-gap-heatmap}
\end{figure}

\begin{figure}[t]
    \centering
    \includegraphics[width=1.0\linewidth]{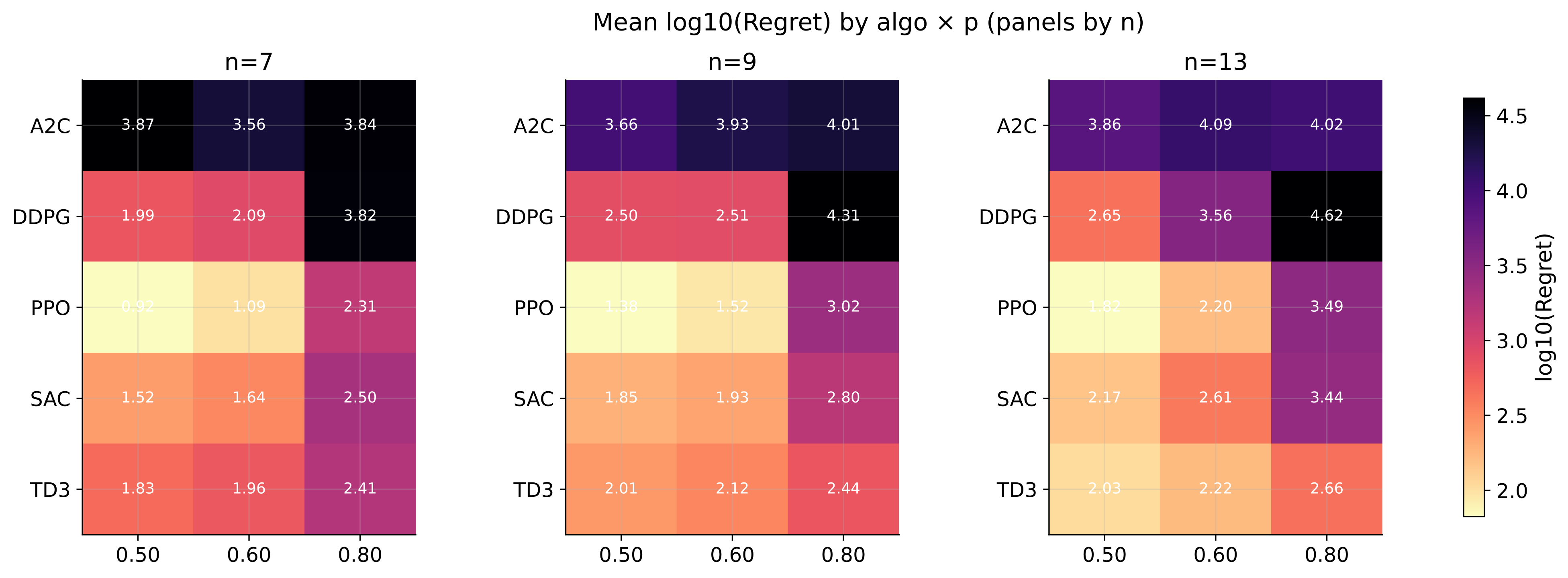}
    \caption{Serial \(n\)-link planar arm (ConverseArm): absolute oracle-referenced performance (regret) across algorithms (rows) and control authority \(p\) (columns), with separate panels for \(n\).}
    \label{fig:arm-regret-heatmap}
\end{figure}

\begin{figure*}[t]
\centering
\includegraphics[width=1.0\textwidth]{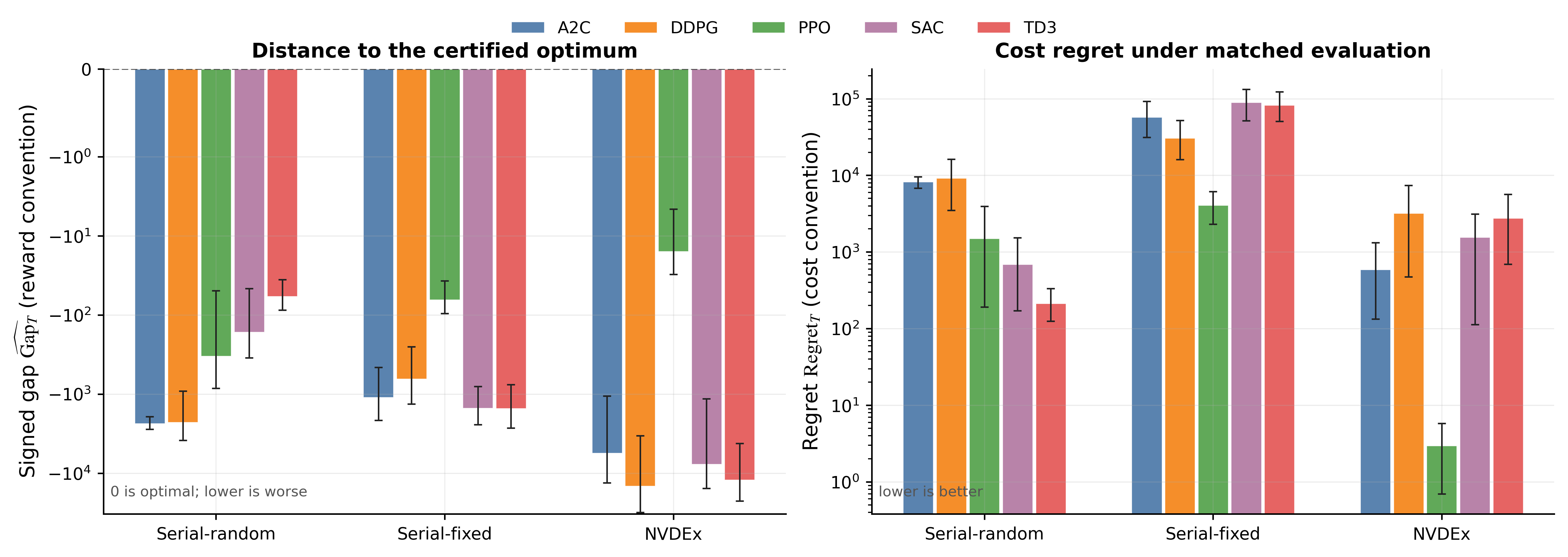}
\caption{Left: reward-convention optimality gap to the certified optimum, where zero is optimal and more negative values are worse. Right: cost regret under matched CRN evaluation, where lower is better. Bars show means and error bars show paired-bootstrap 95\% confidence intervals. The logarithmic vertical scales show order-of-magnitude differences across algorithms and regimes.}
\label{fig:table3_companion}
\end{figure*}

In the serial-random regime, A2C is concentrated far below zero, while DDPG has a wide distribution with a severe negative tail. PPO, SAC, and TD3 are shifted closer to the oracle, with TD3 having the best central performance. This supports the interpretation that randomized initial states favour algorithms that can exploit broad transition coverage, while still revealing instability in DDPG.

In the serial-fixed regime, PPO's distribution is shifted closest to zero, whereas SAC and TD3 exhibit lower-tail failures. Thus, the performance change from serial-random to serial-fixed is not only a change in mean score; it is a change in the shape of the error distribution. The fixed regime appears to favour stable on-policy improvement over replay-based reuse.

The NVDEx panel is the clearest example of why oracle-referenced distributions are useful. PPO remains concentrated near the optimum, while the other baselines show broad distributions and severe negative tails. Since NVDEx contains dynamic extension and delayed actuation, these tails indicate that small policy or value-estimation errors can be amplified over long horizons. The certified value function makes this failure severity measurable in oracle units.

\section{Extension I: ConverseQuad -- Non-Affine Actuation and Non-Gaussian Gusts}
\label{sec:ext-nonaffine}

The fixture methodology is not limited to affine actions or Gaussian disturbances. This section develops \textbf{ConverseQuad}, a certified aerial family in which the RL agent acts through a nonlinear raw-actuator coordinate while the optimum is constructed in a latent affine coordinate and pulled back exactly. The family exposes a failure mode distinct from ConverseArm and NVDEx: policy errors are transformed by the actuator Jacobian before entering the dynamics, so methods that propagate critic-gradient errors through the action map are more exposed, and finite-covariance non-Gaussian gusts increase trajectory variability without changing the certified optimum. The construction is summarized in the following subsections; the state decomposition, input map, and energy identity are derived inline in Sections~\ref{subsec:quad-virtual}--\ref{subsec:quad-noise}. This section summarizes the construction and reports the results available in the source paper~\citep{ibrahim2026benchmarking}.

\paragraph{What this family tests.}
ConverseQuad isolates one interface change and one disturbance change while leaving the rest of the quadratic-Gaussian (QG) construction of Section~\ref{sec:mode2} intact. On the interface side, the agent acts in raw actuator coordinates \(u\), and the environment is non-affine in \(u\); the certified optimum is defined in a virtual coordinate \(\bar u = \phi(u)\) and pulled back exactly. Because the actuator map is smooth but nonlinear, a policy error in raw coordinates is transformed by the Jacobian \(\phi'(u) = 1 + 3\eta u^{2}\) before entering the dynamics, which amplifies critic-gradient errors for methods that backpropagate through the action. On the disturbance side, the noise law is any zero-mean finite-covariance distribution, not necessarily Gaussian; a Gaussian mixture is used in the reported experiments to model gust-like perturbations. This stresses value estimation rather than control, since the Bellman minimizer is unchanged.

\subsection{Virtual Actuator Coordinates}
\label{subsec:quad-virtual}

Let the error state be
\begin{equation}
s=
[x,y,z,\phi,\theta,\psi,v_x,v_y,v_z,\omega_x,\omega_y,\omega_z]^\top
\in\mathbb{R}^{12}.
\label{eq:quad-state}
\end{equation}
The first three components are the translational position error; \(\phi,\theta,\psi\) are the roll, pitch, and yaw errors; \(v_x,v_y,v_z\) are the linear velocity errors; and \(\omega_x,\omega_y,\omega_z\) are the angular velocity errors. Let \(\bar u=[\bar u_1,\bar u_2,\bar u_3,\bar u_4]^\top\in\mathbb{R}^4\) denote a virtual thrust-deviation input around hover; the first component is a collective thrust perturbation, and the remaining three are roll, pitch, and yaw moment inputs. With sampling time \(\tau>0\), mass \(m>0\), and inertias \(J_x,J_y,J_z>0\), define
\begin{equation}
g_Q(s)=\tau g_{\mathrm{cont}}(s),
\label{eq:gQ}
\end{equation}
where
\begin{equation}
g_{\mathrm{cont}}(s)=
\begin{bmatrix}
0_{6\times 4}\\[0.2em]
-\frac{1}{m}\sin\theta & 0 & 0 & 0\\
\frac{1}{m}\sin\phi\cos\theta & 0 & 0 & 0\\
\frac{1}{m}(\cos\phi\cos\theta-1) & 0 & 0 & 0\\
0 & \frac{1}{J_x} & 0 & 0\\
0 & 0 & \frac{1}{J_y} & 0\\
0 & 0 & 0 & \frac{1}{J_z}
\end{bmatrix}.
\label{eq:gQcont}
\end{equation}
The first six rows are zero because position and attitude errors do not change instantaneously; thrust couples to translational velocity through the attitude-dependent terms in rows seven to nine, and moments couple to angular velocity in rows ten to twelve. The virtual-input dynamics are
\begin{equation}
s_{k+1}=f_p(s_k)+p\,g_Q(s_k)\bar u_k+w_k,\qquad p\in(0,1].
\label{eq:virtual-quad}
\end{equation}
The parameter \(p\) is the control-authority homotopy of Section~\ref{sec:mode2}.

\subsection{Raw Action Interface}
\label{subsec:quad-raw}

The RL agent acts through a raw command \(u\in\mathbb{R}^4\), mapped to the virtual input by
\begin{equation}
\bar u=\phi(u),\qquad \phi_i(u_i)=u_i+\eta u_i^3,\qquad \eta>0.
\label{eq:phi-app}
\end{equation}
Since \(\phi_i'(u_i)=1+3\eta u_i^2>0\), \(\phi\) is a component-wise \(C^2\) diffeomorphism. The map models a stiffening actuator: small commands are nearly linear, while large commands are amplified. The environment seen by the RL agent is
\begin{equation}
s_{k+1}=f_p(s_k)+p\,g_Q(s_k)\phi(u_k)+w_k.
\label{eq:raw-quad}
\end{equation}
For \(\eta>0\), this dynamics is non-affine in the raw action because
\begin{equation}
\frac{\partial^2}{\partial u_i^2}\big(g_Q(s)\phi(u)\big)=6\eta u_i\,g_{Q,i}(s),
\label{eq:nonaffine-second}
\end{equation}
where \(g_{Q,i}(s)\) is the \(i\)th column of \(g_Q(s)\). The parameter \(\eta\) is a difficulty knob: \(\eta\to 0\) recovers the affine QG case of Section~\ref{sec:mode2}, and larger \(\eta\) increases the curvature of the actuator map.

\subsection{Cost Pullback and Oracle Preservation}
\label{subsec:quad-pullback}

The virtual-action quadratic cost is
\begin{equation}
c_{\bar u}(s,\bar u)=s^\top Qs+\bar u^\top R\bar u, \qquad Q\succeq0,\;R\succ0.
\label{eq:virtual-cost}
\end{equation}
The raw-action cost is the pullback
\begin{equation}
c_u(s,u)=s^\top Qs+\phi(u)^\top R\phi(u).
\label{eq:raw-cost}
\end{equation}
Therefore,
\begin{align}
&\min_u
\left\{c_u(s,u)+\gamma\mathbb{E} V^\star(f_p(s)+p g_Q(s)\phi(u)+w)\right\}\nonumber\\
&\qquad = \min_{\bar u} \left\{s^\top Qs+\bar u^\top R\bar u +\gamma\mathbb{E} V^\star(f_p(s)+p g_Q(s)\bar u+w)
\right\}.
\label{eq:equiv-min}
\end{align}
Thus the raw-action MDP is action-coordinate equivalent to the virtual affine MDP. If \(\bar u^\star_p(s)\) is the certified virtual optimal action, then
\begin{equation}
u^\star_p(s)=\phi^{-1}(\bar u^\star_p(s)).
\label{eq:raw-oracle}
\end{equation}

For \(V^\star(s)=s^\top Ps+b\), define
\begin{align}
B_Q^{(p)}(s)&=R+\gamma p^2 g_Q(s)^\top P g_Q(s), \label{eq:BQ}\\
H_Q^{(p)}(s)&=\gamma\Big(P-\gamma p^2Pg_Q(s)(B_Q^{(p)}(s))^{-1}g_Q(s)^\top P\Big).
\label{eq:HQ}
\end{align}
These reduce to \(B(s)\) and \(H_\gamma(s)\) of Section~\ref{sec:mode2} when \(g_Q\) is replaced by \(g\) and \(p=1\). Choose
\begin{equation}
f_p(s)=\big(H_Q^{(p)}(s)\big)^{-1/2}S_p(s)(P-Q)^{1/2}s, \qquad S_p(s)^\top S_p(s)=I.
\label{eq:quad-fp}
\end{equation}
Then
\begin{equation}
f_p(s)^\top H_Q^{(p)}(s)f_p(s)=s^\top(P-Q)s,
\label{eq:quad-energy}
\end{equation}
so the Bellman energy identity holds exactly. The optimal virtual input is
\begin{equation}
\bar u_p^\star(s)= -\gamma p\big(B_Q^{(p)}(s)\big)^{-1}g_Q(s)^\top P f_p(s).
\label{eq:virtual-oracle}
\end{equation}

\subsection{Finite-Covariance Non-Gaussian Gusts}
\label{subsec:quad-noise}

Let \(w_k\) be any zero-mean disturbance with finite covariance \(\Sigma\). For example,
\begin{equation}
w_k\sim (1-\epsilon)\mathcal N(0,\Sigma_1)+\epsilon \mathcal N(0,\Sigma_2), \qquad \Sigma=(1-\epsilon)\Sigma_1+\epsilon\Sigma_2.
\label{eq:mixture-noise}
\end{equation}
The Gaussian mixture is only an illustration; the certificate does not require Gaussianity. Since \(V^\star\) is quadratic,
\begin{equation}
\mathbb{E}[V^\star(z+w)] = z^\top Pz+\operatorname{tr}(P\Sigma)+b.
\label{eq:moment-only}
\end{equation}
The Bellman minimizer therefore depends on the disturbance only through its covariance. The value offset remains
\begin{equation}
b=\frac{\gamma}{1-\gamma}\operatorname{tr}(P\Sigma).
\label{eq:b-mixture}
\end{equation}

The certificate requires the disturbance to be zero-mean, of finite covariance, and independent across time. Heteroscedastic, state-dependent, or temporally correlated noise is not covered by the present construction; extending the certificate to such disturbances would require modifying the energy identity and the value offset accordingly.

\subsection{Reported Results}
\label{subsec:quad-results}

The results in this subsection are reproduced from the source paper~\citep{ibrahim2026benchmarking}; no new training runs were performed for this manuscript. Table~\ref{tab:conversequad} reports the oracle-referenced metrics for PPO, A2C, SAC, TD3, and DDPG on ConverseQuad under CRN, using the same optimality-gap and regret definitions as Section~\ref{subsec:oracle_protocol}. The reward-convention optimality gap is reported; zero is optimal and more negative values indicate larger suboptimality.

\begin{table}[t]
\centering
\footnotesize
\begin{tabular}{lcccc}
\toprule
Algorithm & Optimality gap & Optimality gap 95\% CI & Regret & Regret 95\% CI \\
\midrule
A2C  & $-1.44$              & $[-1.76,\,-1.16]$          & $2.13\times 10^{1}$ & $[1.84\times 10^{1},\,2.47\times 10^{1}]$ \\
DDPG & $-2.18$              & $[-2.53,\,-1.85]$          & $3.35\times 10^{1}$ & $[2.92\times 10^{1},\,3.86\times 10^{1}]$ \\
PPO  & $-5.92\times 10^{-1}$ & $[-7.40\times 10^{-1},\,-4.68\times 10^{-1}]$ & $8.52$ & $[7.47,\,9.66]$ \\
SAC  & $-6.09\times 10^{-1}$ & $[-7.64\times 10^{-1},\,-4.76\times 10^{-1}]$ & $9.03$ & $[7.60,\,1.06\times 10^{1}]$ \\
TD3  & $-7.62\times 10^{-1}$ & $[-9.30\times 10^{-1},\,-6.24\times 10^{-1}]$ & $1.23\times 10^{1}$ & $[1.00\times 10^{1},\,1.51\times 10^{1}]$ \\
\bottomrule
\end{tabular}
\caption{ConverseQuad oracle-referenced metrics under CRN. Reproduced from~\citep{ibrahim2026benchmarking}. Optimality gap uses the reward convention: zero is optimal and more negative values are worse; regret is in cost units, where smaller is better. Intervals are the source's reported \(95\%\) paired-bootstrap intervals.}
\label{tab:conversequad}
\end{table}

ConverseQuad exposes a failure mode distinct from ConverseArm and NVDEx. The certificate is constructed in a virtual thrust/moment input, whereas the agent acts through a nonlinear raw-actuator coordinate. Policy errors are therefore transformed by the actuator map before entering the dynamics, and deterministic actor-critic methods become more exposed to critic-gradient and action-coordinate errors. In the reported results, PPO and SAC form the leading group on both metrics, TD3 is intermediate, and A2C and DDPG are least reliable. The finite-covariance non-Gaussian disturbances increase trajectory and return variability while leaving the oracle unchanged, producing noisier value estimates for learned critics. The ordering should be read as a diagnostic of the raw-actuator interface under the reproduced protocol, not as a universal ranking: on ConverseArm, the ranking is regime-dependent, with TD3 leading under diverse initial states and PPO leading under fixed initial states and on NVDEx.

The remaining question---how the ranking changes as the actuator nonlinearity \(\eta\) and the mixture parameter \(\epsilon\) are varied---is not resolved by the reported results and is left to future work. A systematic sweep over these two knobs would separate the contribution of interface nonlinearity from disturbance non-Gaussianity, both of which are present simultaneously in the current ConverseQuad configuration.

\begin{figure*}[h]
\centering
\vspace{-0.5em}
\begin{minipage}[t]{0.64\textwidth}
  \vspace{35pt}
  \centering
  \includegraphics[width=\linewidth]{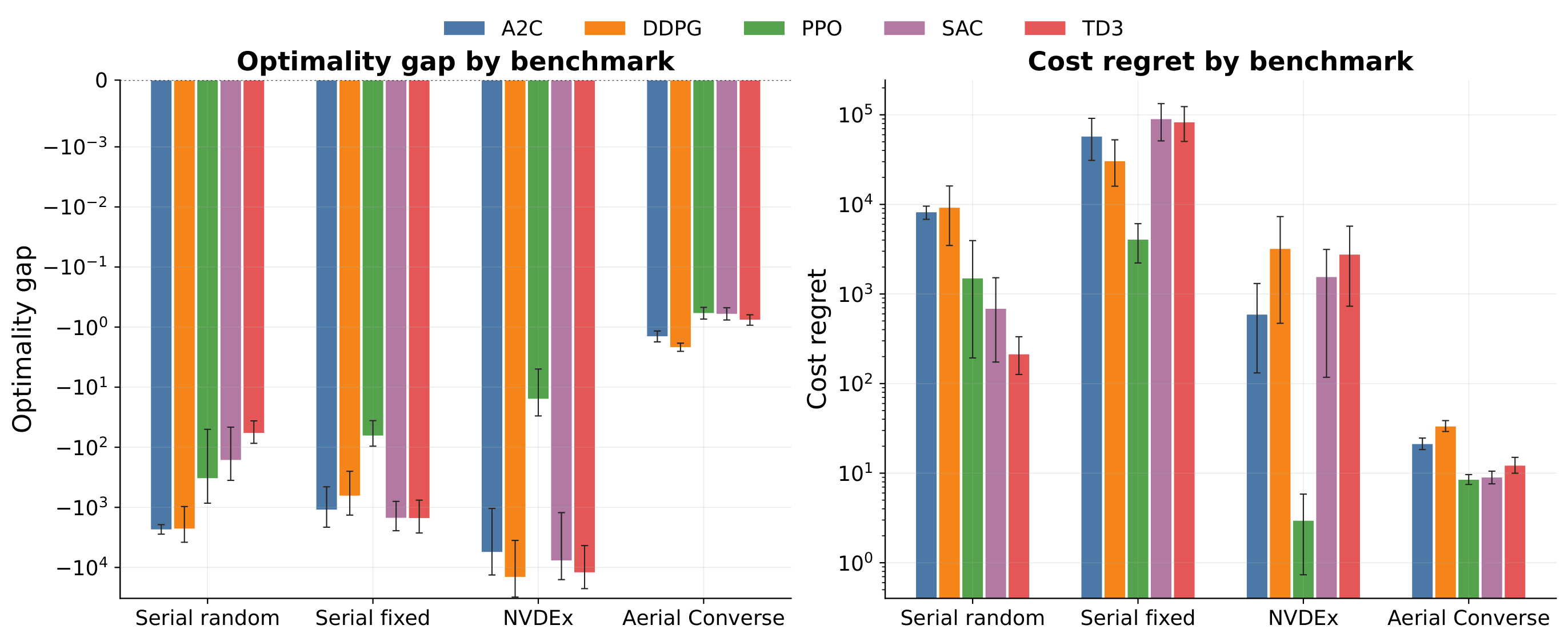}\\[-0.8em]
  {\scriptsize (a)}
\end{minipage}\hfill
\begin{minipage}[t]{0.34\textwidth}
  \vspace{0pt}
  \centering\includegraphics[width=\linewidth]{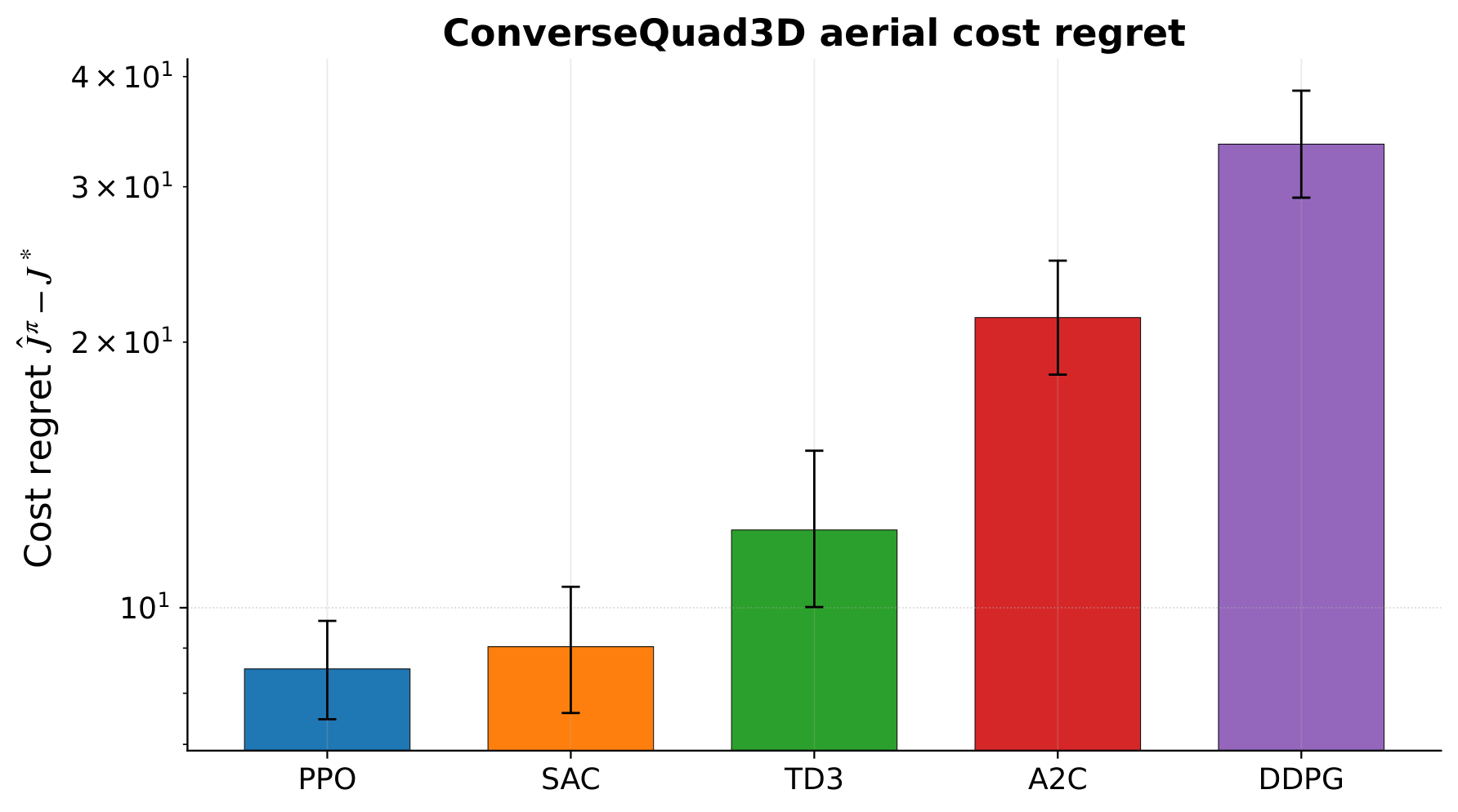}\\[-0.8em]
  {\scriptsize (b)}
  \vspace{0.25em}
  \includegraphics[width=\linewidth]{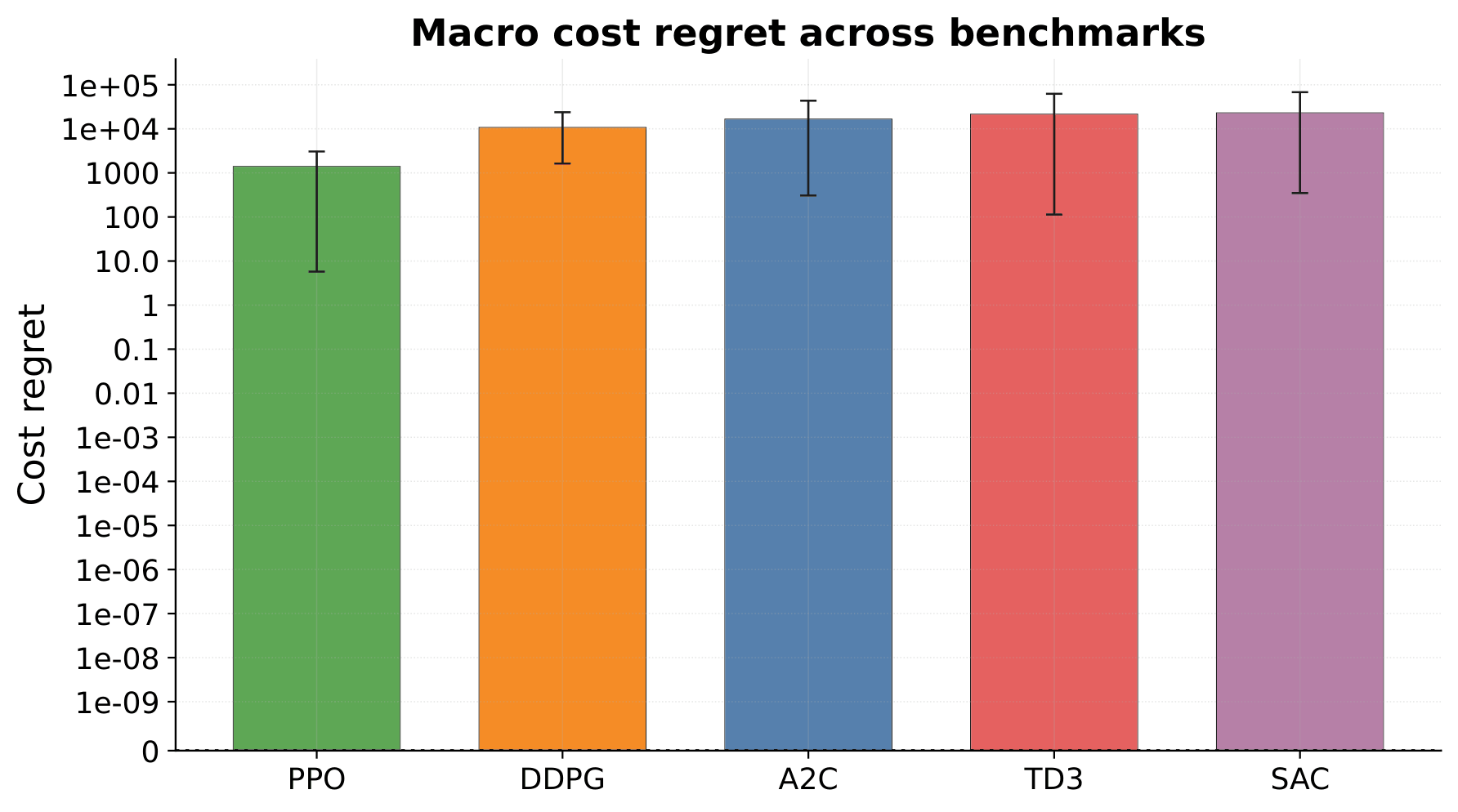}\\[-0.8em]
  {\scriptsize (c)}
\end{minipage}
\vspace{-0.4em}
{\caption{Compact oracle-referenced summaries. 
(a) Per-experiment optimality gap and regret show how algorithm rankings change with occupancy measure, stiffness, and actuation geometry.
(b) \texttt{ConverseQuad} regret under nonlinear raw actuation and non-Gaussian disturbances.
(c) Macro optimality gap; the closer the values are to zero, the better.}}
\label{fig:summary_bars}
\vspace{-0.6em}
\end{figure*}

\paragraph{Forward link.}
The next extension (Section~\ref{sec:ext-nonsmooth}) changes the regularity class of \(V^\star\) itself, rather than the action interface or the disturbance law. The certificate logic is unchanged: the task and its reference pair are constructed together, and the reference is withheld until evaluation.

\section{Extension II: Non-Smooth Converse Optimality}
\label{sec:ext-nonsmooth}

The fixture viewpoint extends to value functions that are merely locally Lipschitz, such as piecewise-quadratic templates with kinks along switching surfaces. This section develops \emph{non-smooth converse optimality}. We replace the smooth HJB certificate of Sections~\ref{sec:mode1}--\ref{sec:ext-nonaffine} with a viscosity--Clarke condition: classical gradients are replaced by Clarke subdifferentials, and the pointwise HJB equation is replaced by its viscosity formulation. The construction is deterministic and continuous-time. A homotopy parameter selects a generalized gradient branch on the switching surface, generating families of systems that share the same value function but differ in the sharpness of the non-smooth behaviour. We work the construction end-to-end for a two-dimensional piecewise-quadratic value function with a straight switching line, and we validate the resulting fixture analytically. 

\subsection{Motivation: why non-smooth values matter}
\label{subsec:nonsmooth-motivation}

Bang-bang control, minimum-time problems, saturated policies, state and control constraints, and hybrid or switching systems all produce optimal value functions that are locally Lipschitz but not differentiable everywhere. When the control enters linearly and the cost does not penalize control quadratically, the optimal policy is typically bang-bang, and the value function has kinks along switching manifolds where the classical gradient does not exist~\citep{clarke2001nonsmooth}. In these regimes, the classical HJB equation is ill-defined, and the smooth converse constructions of Sections~\ref{sec:mode1}--\ref{sec:ext-nonaffine} do not apply.

Without a certified ground truth, a learner's poor performance on such a task cannot be attributed to the non-smoothness of the optimum rather than to approximation or optimization error. Standard RL benchmarks can exhibit non-smooth value functions, but absent a reference they cannot distinguish genuine non-smoothness from overfitting. Non-smooth converse optimality closes this gap: it produces tasks whose optimal value has a known kink and whose optimal policy may be discontinuous, so that a learned policy can be measured against a certified reference in exactly the regime that stresses smoothness assumptions.

\subsection{Background: viscosity solutions and Clarke subdifferentials}
\label{subsec:nonsmooth-background}

\paragraph{Viscosity solutions.}
When the optimal value function is not differentiable, the classical HJB equation is replaced by its viscosity formulation. Consider a first-order Hamilton--Jacobi equation
\[
F(s, V(s), DV(s)) = 0,\quad s\in\Omega,
\]
where \(\Omega\subset\mathbb{R}^{n}\) is open and \(F:\Omega\times\mathbb{R}\times\mathbb{R}^{n}\to\mathbb{R}\) is continuous. A continuous function \(V\) is a \emph{viscosity subsolution} if for every \(C^{1}\) test function \(\phi\) and every local maximum \(s_{0}\) of \(V-\phi\),
\[
F(s_{0}, V(s_{0}), D\phi(s_{0})) \le 0,
\]
and a \emph{viscosity supersolution} if for every local minimum \(s_{0}\) of \(V-\phi\),
\[
F(s_{0}, V(s_{0}), D\phi(s_{0})) \ge 0.
\]
\(V\) is a \emph{viscosity solution} if it is both~\citep{crandall1983viscosity,bardi1997optimal}. For optimal control, the optimal value function is the unique viscosity solution of
\[
\sup_{a\in\mathbb{A}}\big\{-\nabla V^{*}(s)\cdot f(s,a)-c(s,a)\big\}=0,
\]
subject to appropriate boundary conditions. Uniqueness follows from the comparison principle and is what allows the viscosity formulation to certify optimality without classical differentiability.

\paragraph{Clarke subdifferentials.}
For a locally Lipschitz function \(V\), the Clarke generalized directional derivative is
\[
V^{\circ}(s;v) := \limsup_{y\to s,\;t\downarrow 0}\frac{V(y+tv)-V(y)}{t},
\]
and the Clarke subdifferential is
\[
\partial_{C}V(s) := \big\{\zeta\in\mathbb{R}^{n} : V^{\circ}(s;v)\ge\langle\zeta,v\rangle\ \forall v\in\mathbb{R}^{n}\big\}.
\]
The set \(\partial_{C}V(s)\) is nonempty, compact, and convex. It reduces to the singleton \(\{\nabla V(s)\}\) where \(V\) is smooth, and to the convex hull of limiting gradients from adjacent smooth regions at a switching surface. A locally Lipschitz \(V\) is a viscosity solution of the HJB equation if and only if for every \(s\) and every \(p\in\partial_{C}V(s)\) the corresponding HJB inequality holds~\citep{clarke2001nonsmooth}. This equivalence is what makes the viscosity framework constructive: it allows the optimality certificate to be checked pointwise using generalized gradients rather than test functions.

When the optimal policy is discontinuous, the closed-loop dynamics \(\dot{s}=f(s)+g(s)a^{*}(s)\) is interpreted as a differential inclusion in the sense of Filippov~\citep{filippov1988existence}. Solutions exist and are unique under mild regularity, so a discontinuous optimal feedback still defines a well-posed benchmark.

\paragraph{Relation to the smooth case.}
When \(V\) is continuously differentiable, \(\partial_{C}V(s)=\{\nabla V(s)\}\) and the viscosity--Clarke condition collapses to the classical pointwise HJB equation used in Sections~\ref{sec:mode1}--\ref{sec:ext-nonaffine}. The non-smooth formulation is therefore a strict generalization of the smooth converse theory, not a replacement for it.

\subsection{Problem Setup and Assumptions}
\label{subsec:nonsmooth-setup}

Let the state space be $\mathbb{R}^n$ and the control space be $\mathbb{A} \subset \mathbb{R}^m$. We consider control-affine dynamical systems of the form
\begin{equation}
\dot{s}(t) = f(s(t)) + g(s(t))a(t), \quad s(0) = s_0,
\label{eq:nonsmooth-dynamics}
\end{equation}
where $a(\cdot)$ is an admissible control signal taking values in $\mathbb{A}$. We use the cost convention of Section~\ref{sec:fixtures}: the running cost $c : \mathbb{R}^n \times \mathbb{A} \to \mathbb{R}$ is nonnegative, and lower cost is better.

For a stationary policy $\pi : \mathbb{R}^n \to \mathbb{A}$, the infinite-horizon cost functional is
\begin{equation}
V^{\pi}(s_0) = \int_0^{\infty} c\big(s(t), \pi(s(t))\big)\,dt,
\label{eq:nonsmooth-value}
\end{equation}
and the optimal value function is
\begin{equation}
V^{*}(s_0) = \inf_{\pi} V^{\pi}(s_0).
\label{eq:nonsmooth-optimal}
\end{equation}
In the converse optimality setting we instead begin with a prescribed candidate $V : \mathbb{R}^n \to \mathbb{R}$ and aim to construct $(f, g, c)$ such that $V = V^{*}$ and the minimizer of the Hamiltonian is a certified optimal feedback. Throughout this section, $V$ denotes the prescribed candidate, and $V^{*}$ denotes the true optimal value; the goal of the construction is to make them coincide.

\begin{assumption}[System and control structure]\label{as:nonsmooth-system}
\begin{enumerate}[label=(\alph*)]
\item The control set $\mathbb{A} \subset \mathbb{R}^m$ is compact and nonempty.
\item The dynamics $f : \mathbb{R}^n \to \mathbb{R}^n$ and the control matrix field $g : \mathbb{R}^n \to \mathbb{R}^{n \times m}$ are continuous and locally Lipschitz in the state variable: for any compact $S \subset \mathbb{R}^n$ there exist $L_{f,S}, L_{g,S} > 0$ such that for all $s_1, s_2 \in S$,
\[
\|f(s_1) - f(s_2)\| \le L_{f,S} \|s_1 - s_2\|, \qquad
\|g(s_1) - g(s_2)\| \le L_{g,S} \|s_1 - s_2\|.
\]
\item The running cost $c(s,a)$ is continuous, nonnegative, and bounded from below.
\item The control cost $a \mapsto c(s,a)$ is coercive for each $s$, in the sense that $\inf_{a \in \mathbb{A}} c(s,a) \to \infty$ as $\|a\| \to \infty$ whenever $\mathbb{A}$ is unbounded; when $\mathbb{A}$ is compact this condition is automatic.
\end{enumerate}
\end{assumption}

Compactness of the control set guarantees the existence of minimizing controls in the pointwise optimization problems arising in the HJB equation. Lipschitz continuity of the dynamics ensures existence and uniqueness of trajectories for admissible controls. Coercivity of the control cost is used in Lemma~\ref{lem:nonsmooth-first-order} to ensure that the pointwise minimizer is unique and that the stationarity condition is both necessary and sufficient. These assumptions are standard in dynamic programming and HJB theory and are required for the validity of the dynamic programming principle.

\begin{assumption}[Value function regularity]\label{as:nonsmooth-value}
The candidate value function $V : \mathbb{R}^n \to \mathbb{R}$ satisfies:
\begin{enumerate}[label=(\alph*)]
\item $V$ is continuous and locally Lipschitz.
\item $V$ is therefore differentiable almost everywhere (Rademacher), and at points of non-differentiability the Clarke subdifferential $\partial_{C}V(s)$ exists and is nonempty.
\item For the candidate optimal feedback $a^{*}(s)$ constructed in this section, the corresponding closed-loop trajectory $s(\cdot)$ satisfies
\[
\int_0^{\infty} c\big(s(t), a^{*}(s(t))\big)\,dt < +\infty.
\]
\end{enumerate}
\end{assumption}

Local Lipschitz continuity is the natural regularity class for value functions arising in optimal control: it guarantees almost-everywhere differentiability, and it is preserved under the operations that appear in the Bellman equation. The Clarke subdifferential then provides a generalized gradient suitable for non-smooth analysis and yields HJB conditions consistent with viscosity solution theory (Section~\ref{subsec:nonsmooth-background}). We do not require $V$ to be convex or piecewise quadratic; those are special cases used for the worked example in Section~\ref{subsec:nonsmooth-example}.

\begin{definition}[Viscosity HJB condition)]\label{def:nonsmooth-viscosity}
The triple $(f, g, c)$ realizes $V$ as an optimal value function if the following conditions hold:
\begin{enumerate}[label=(\roman*)]
\item \textbf{(Subsolution)} For every $s \in \mathbb{R}^n$ there exists $p \in \partial_{C}V(s)$ such that
\[
\inf_{a \in \mathbb{A}} \big\{ c(s,a) + p \cdot (f(s) + g(s)a) \big\} \le 0.
\]
\item \textbf{(Supersolution)} For every $s \in \mathbb{R}^n$ and every $p \in \partial_{C}V(s)$,
\[
\inf_{a \in \mathbb{A}} \big\{ c(s,a) + p \cdot (f(s) + g(s)a) \big\} \ge 0.
\]
\item \textbf{(Measurable optimal feedback)} There exists a measurable function $a^{*} : \mathbb{R}^n \to \mathbb{A}$ and a measurable selection $p(s) \in \partial_{C}V(s)$ such that for almost every $s$,
\[
a^{*}(s) \in \arg\min_{a \in \mathbb{A}} \big\{ c(s,a) + p(s) \cdot (f(s) + g(s)a) \big\},
\]
and the minimum value equals zero for almost every $s$.
\end{enumerate}
\end{definition}

Definition~\ref{def:nonsmooth-viscosity} is the non-smooth analogue of the Bellman residual certificate of Section~\ref{sec:fixtures}. In the smooth case, the residual $\mathcal{R}_{V}(s,a) \ge 0$ for all $a$ and $\mathcal{R}_{V}(s,\pi(s)) = 0$ are replaced by the subsolution and supersolution inequalities above, evaluated on elements of the Clarke subdifferential. If the viscosity HJB condition holds, then $V$ is a viscosity solution of the Hamilton--Jacobi--Bellman equation associated with the control problem~\citep{bardi1997optimal,fleming2006controlled}, and by uniqueness of viscosity solutions $V = V^{*}$. The feedback $a^{*}$ in condition~(iii) is then an optimal policy. Well-posedness of the closed-loop system when $a^{*}$ is discontinuous is established separately in Lemma~\ref{lem:nonsmooth-measurable}.

Given a non-smooth candidate $V$ satisfying Assumption~2, three questions arise, and they structure the rest of the section: does there exist a triple $(f,g,c)$ meeting the viscosity condition (Section~\ref{subsec:nonsmooth-theory}); can explicit formulas for $(f,g,c)$ be given in terms of $V$ and its Clarke subdifferential (Section~\ref{subsec:nonsmooth-procedure}); and can the construction be parameterized to produce families of systems that all share the same $V$ (Section~\ref{subsec:nonsmooth-example})?

When $V$ is $C^{1}$, $\partial_{C}V(s) = \{\nabla V(s)\}$ and the viscosity condition collapses to the classical pointwise HJB equation. The non-smooth formulation is therefore a strict generalization of the smooth converse theory of Sections~\ref{sec:mode1}--\ref{sec:ext-nonaffine} and of the discrete-time constructions in~\citep{nevistic1996constrained,gohrt2019converse}.

\subsection{Theoretical Results}
\label{subsec:nonsmooth-theory}

We now establish the theoretical foundations of non-smooth converse optimality for deterministic control-affine systems. Throughout, we assume that the candidate value function \(V\) satisfies Assumption~\ref{as:nonsmooth-value} (locally Lipschitz, with Clarke subdifferential \(\partial_{C}V(s)\) defined everywhere), and that the system components \((f,g,c)\) satisfy Assumption~\ref{as:nonsmooth-system}. Full proofs are given in Appendix~\ref{app:nonsmooth-proofs}; we state the results and give proof sketches here.

\begin{lemma}[First-order optimality condition]
\label{lem:nonsmooth-first-order}
Let Assumptions~\ref{as:nonsmooth-system}--\ref{as:nonsmooth-value} hold, and let the stage cost be \(c(s,a)=q(s)+\rho(a)\) with \(\rho\in C^{2}\) strictly convex. Assume that the action space is unconstrained, \(\mathbb{A}=\mathbb{R}^{m}\), and that \(V\) is a viscosity solution of the HJB equation. Suppose there exist measurable functions \(a^{*}:\mathbb{R}^{n}\to\mathbb{R}^{m}\) and \(p:\mathbb{R}^{n}\to\mathbb{R}^{n}\) with \(p(s)\in\partial_{C}V(s)\) such that the minimum in the viscosity HJB condition of Definition~\ref{def:nonsmooth-viscosity} is attained and equals zero for almost every \(s\). Then, for almost every \(s\), the first-order condition for the pointwise minimization over \(a\) holds:
\begin{equation}
\nabla_{a}\rho(a)\big|_{a=a^{*}(s)} + g(s)^{\top} p(s) = 0.
\label{eq:nonsmooth-first-order}
\end{equation}
\end{lemma}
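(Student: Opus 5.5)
The plan is to reduce the claim to elementary unconstrained convex minimization, pointwise in \(s\). Fix a point \(s\) in the full-measure set on which condition~(iii) of Definition~\ref{def:nonsmooth-viscosity} holds, and write \(p=p(s)\in\partial_{C}V(s)\). Consider the Hamiltonian in the action variable,
\[
h_{s}(a) := q(s)+\rho(a)+p\cdot\big(f(s)+g(s)a\big), \qquad a\in\mathbb{R}^{m}.
\]
With \(s\) and \(p\) frozen, \(h_{s}\) is the sum of the constant \(q(s)+p\cdot f(s)\), the \(C^{2}\) strictly convex function \(\rho\), and the linear function \(a\mapsto (g(s)^{\top}p)\cdot a\). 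Hence \(h_{s}\) is \(C^{1}\) (indeed \(C^{2}\)) and strictly convex on \(\mathbb{R}^{m}\), with
\[
\nabla_{a}h_{s}(a)=\nabla\rho(a)+g(s)^{\top}p .
\]
Nothing about the non-smoothness of \(V\) enters here: the generalized gradient only appears as a fixed vector \(p\) once \(s\) is fixed.

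The key step uses the hypothesis that \(a^{*}(s)\) attains the minimum of \(h_{s}\) over \(\mathbb{A}=\mathbb{R}^{m}\), so \(a^{*}(s)\) is an interior minimizer. Fermat's rule for a differentiable function at an unconstrained minimizer then gives \(\nabla_{a}h_{s}(a^{*}(s))=0\), which is exactly \eqref{eq:nonsmooth-first-order}. Since this holds for every \(s\) in the full-measure set, the identity holds for almost every \(s\).

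To make the equivalence the paper promises explicit, I would add the converse. By strict convexity, any \(a\) satisfying \(\nabla\rho(a)+g(s)^{\top}p=0\) is the unique global minimizer, so the stationarity condition is also sufficient. This is where the coercivity clause of Assumption~\ref{as:nonsmooth-system}(d) is used: it guarantees that a minimizer exists at all when \(\mathbb{A}\) is unbounded, which the hypothesis already grants here.

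The only delicate point is measure-theoretic bookkeeping rather than analysis. The statement is ``almost every \(s\)'', and the exceptional null set is inherited entirely from condition~(iii); no differentiability of \(V\) at \(s\) is needed, because \(p(s)\) is merely a measurable selection of \(\partial_{C}V(s)\). The conditions that the minimum equals zero and that \(V\) is a viscosity solution are not needed for the first-order identity itself. Their role is to certify optimality elsewhere, so I would note that they are carried along but unused in this lemma.
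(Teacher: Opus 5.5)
Your argument is correct and matches the paper's: fix $s$, reduce to minimizing $\rho(a)+(g(s)^{\top}p(s))\cdot a$ over $\mathbb{R}^{m}$, and apply the unconstrained first-order necessary condition at the attained minimizer $a^{*}(s)$. Your version is slightly tighter than the paper's, which also asserts that this function is coercive and that strict convexity forces $\nabla^{2}\rho\succ 0$; neither holds in general (take $\rho(a)=\sqrt{1+\|a\|^{2}}$ with $\|g(s)^{\top}p(s)\|>1$, or $\rho(a)=\|a\|^{4}$ at $a=0$), and for the same reason your side remark that Assumption~\ref{as:nonsmooth-system}(d) secures a minimizer fails once the linear term is added, which is harmless because attainment is part of the hypothesis.
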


\emph{Proof sketch.} For a fixed \(s\) where the minimum is attained, the function \(a\mapsto \rho(a)+p(s)\cdot g(s)a\) is strictly convex and coercive on \(\mathbb{R}^{m}\), so its unique minimizer \(a^{*}(s)\) satisfies the unconstrained first-order condition. The state cost \(q(s)\) and the drift term \(p(s)\cdot f(s)\) do not depend on \(a\). The full argument is in Appendix~\ref{app:nonsmooth-proofs}. \(\square\)

\paragraph{Remark.} When \(\mathbb{A}\) is a compact subset of \(\mathbb{R}^{m}\) and the minimizer lies on the boundary, condition~\eqref{eq:nonsmooth-first-order} is replaced by a variational inequality. The stationarity conditions in Lemma~\ref{lem:nonsmooth-sufficient} and Theorem~\ref{thm:nonsmooth-main} below are therefore stated under the convention that (i) is either satisfied by an interior minimizer or holds with equality of directional derivatives on the feasible cone at \(a^{*}(s)\).

\begin{lemma}[Sufficient condition for optimality]
\label{lem:nonsmooth-sufficient}
Let Assumptions~\ref{as:nonsmooth-system}--\ref{as:nonsmooth-value} hold. Let \(V\) be locally Lipschitz, and suppose there exist measurable functions \(a^{*}:\mathbb{R}^{n}\to\mathbb{A}\) and \(p:\mathbb{R}^{n}\to\mathbb{R}^{n}\) with \(p(s)\in\partial_{C}V(s)\) such that for almost every \(s\in\mathbb{R}^{n}\):
\begin{enumerate}[label=(\roman*)]
\item \textbf{(Stationarity)} \(\nabla_{a}\rho(a)|_{a=a^{*}(s)} + g(s)^{\top} p(s) = 0\), understood in the variational sense of the remark above when the minimizer is on the boundary;
\item \textbf{(Hamiltonian optimality)} \(a^{*}(s)\) attains the infimum
\[
\inf_{a\in\mathbb{A}}\big\{c(s,a)+p(s)\cdot(f(s)+g(s)a)\big\}=0;
\]
\item \textbf{(Viscosity supersolution)} for every \(\tilde{p}\in\partial_{C}V(s)\),
\[
\inf_{a\in\mathbb{A}}\big\{c(s,a)+\tilde{p}\cdot(f(s)+g(s)a)\big\}\ge 0.
\]
\end{enumerate}
Then \(V\) is the optimal value function for the control problem with dynamics \(\dot{s}=f(s)+g(s)a\) and running cost \(c\), and \(a^{*}\) is an optimal feedback policy.
\end{lemma}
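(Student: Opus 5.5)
The plan is a verification argument with two halves: the policy $a^{*}$ attains cost exactly $V(s_0)$, and every admissible control incurs cost at least $V(s_0)$. Both halves rest on a nonsmooth chain rule for $t\mapsto V(s(t))$ along absolutely continuous trajectories. Since $V$ is locally Lipschitz and $s(\cdot)$ is absolutely continuous, the composition $t\mapsto V(s(t))$ is absolutely continuous. For almost every $t$, its derivative satisfies
\[
\tfrac{d}{dt}V(s(t))\in\{\langle\zeta,\dot s(t)\rangle:\zeta\in\partial_C V(s(t))\},
\]
and in particular it is bounded by $V^{\circ}(s(t);\dot s(t))$ and by $-V^{\circ}(s(t);-\dot s(t))$. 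This is the standard Clarke chain rule, which I will quote from \citep{clarke1990optimization}.

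\textbf{Lower bound.} Take any admissible control $a(\cdot)$ with trajectory $s(\cdot)$. At almost every $t$, the chain rule gives some $\zeta(t)\in\partial_C V(s(t))$ with $\tfrac{d}{dt}V(s(t))=\zeta(t)\cdot(f+ga)$. Condition~(iii) applied with $\tilde p=\zeta(t)$ yields
\[
c(s,a)+\tfrac{d}{dt}V(s(t))\ge 0 .
\]
Integrating over $[0,T]$ gives $\int_0^T c\,dt\ge V(s_0)-V(s(T))$.

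The main obstacle is that condition~(iii) is only assumed for almost every $s$, whereas a trajectory may spend positive time on a null set such as the switching surface. I would handle this in two ways. First, I would strengthen (iii) to hold everywhere by upper semicontinuity of $\partial_C V$ together with continuity of the Hamiltonian in $(s,p)$: the set of $(s,\tilde p)$ where the inequality holds is closed, and its projection in $s$ is dense. Second, alternatively, I would invoke Definition~\ref{def:nonsmooth-viscosity}(ii), which the paper's framing states for every $s$.

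\textbf{Upper bound.} For the closed loop under $a^{*}$, understood in the Filippov sense as allowed by Lemma~\ref{lem:nonsmooth-measurable}, condition~(ii) gives
\[
c(s,a^{*})+p(s)\cdot\dot s=0
\]
wherever the selection $p(s)$ realizes the chain-rule derivative. The delicate point is again the switching surface, where the derivative is $\langle\zeta,\dot s\rangle$ for some $\zeta$ that need not equal $p(s)$. Here I would use condition~(i): it forces $g^{\top}\zeta$ and $g^{\top}p$ to coincide, so their difference is orthogonal to the range of $g$. I would then use the sliding-mode characterization that $\dot s$ is tangent to the surface, along which all elements of $\partial_C V$ agree, so that $\langle\zeta,\dot s\rangle=\langle p,\dot s\rangle$. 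This yields
\[
\int_0^T c\,dt=V(s_0)-V(s(T)).
\]

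\textbf{Tail terms.} To close the argument, I need $V(s(T))\to 0$, or at least $\liminf_{T\to\infty} V(s(T))\le 0$ along the optimal trajectory, and $V\ge 0$ along competitors. I would add the normalizations $V\ge 0$ and $V(0)=0$, implicit in the positive-definite templates, and use Assumption~\ref{as:nonsmooth-value}(c). Finite integrated cost, with $q$ positive definite, implies $s(t)\to 0$ along the optimal loop, and continuity then gives $V(s(T))\to 0$. For competitors, $V(s(T))\ge 0$ gives $\int_0^\infty c\ge V(s_0)$; competitors with infinite cost are trivially worse. Combining the two bounds gives $V=V^{*}$ and optimality of $a^{*}$. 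Stationarity~(i) serves only to identify $a^{*}$ as the Hamiltonian minimizer, consistent with Lemma~\ref{lem:nonsmooth-first-order}.
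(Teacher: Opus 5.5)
Your lower-bound half can be made rigorous. The upper bound, however, has a genuine gap, and it is not one that (i)--(iii) can close.

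Condition~(i) involves only the selected $p(s)$, so it cannot force $g^{\top}\zeta=g^{\top}p(s)$ for other $\zeta\in\partial_C V(s)$. In the paper's worked example, $g=I$ and $\partial_C V(0,y)\ni(\pm2y,4y)$, and these two values of $g^{\top}\zeta$ differ. Your tangency step also presupposes a piecewise-$C^1$ $V$ with a $C^1$ switching surface.

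More decisively, when the Filippov solution stays on the surface, its velocity is not $f(s)+g(s)a^*(s)$. It is $\dot s=\lambda F_++(1-\lambda)F_-$, where $F_\pm=f+g\,a^*_\pm$ are the one-sided closed-loop fields. The control actually acting is the equivalent control $\bar a=\lambda a^*_++(1-\lambda)a^*_-$, and (ii) says nothing about its cost.

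Write $\mathcal H(s,\zeta):=\inf_a\{c(s,a)+\zeta\cdot(f(s)+g(s)a)\}$. For continuous $q,f,g$, condition (ii) passes to the one-sided limits $\mathcal H(s,p_\pm)=0$. Granting your step $p_+\cdot\dot s=p_-\cdot\dot s$, a direct computation with $\rho(a)=\tfrac12a^{\top}Ra$ gives
\[ c(s,\bar a)+\tfrac{d}{dt}V(s(t))=\tfrac{\lambda(1-\lambda)}{2}\,(a^*_+-a^*_-)^{\top}R\,(a^*_+-a^*_-). \]
This is strictly positive whenever $\lambda\in(0,1)$ and $g^{\top}(p_+-p_-)\neq0$. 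Staying on the surface is forced when the surface is attracting, and this happens at a convex kink.

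Concretely, take the following data:
\begin{itemize}
\item $\mathbb A=\mathbb R^2$ (as in Lemma~\ref{lem:nonsmooth-first-order}), $g=I$, $f\equiv0$, $\rho=\tfrac12|a|^2$;
\item $V(x,y)=x^2+2|x|y+2y^2$;
\item $q=\tfrac12|\nabla V|^2$ off $\{x=0\}$, which extends continuously because $|p_\pm|^2=20y^2$ there.
\end{itemize}
Then (i)--(iii) hold almost everywhere, $V\ge0$, $V(0)=0$, and the Filippov closed loop $\dot s=-\nabla V$ has finite cost. Yet from $(0,y_0)$ with $y_0>0$ it slides along $x=0$ with $\bar a=(0,-4y)$ and $y=y_0e^{-4t}$. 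Its total cost is $\tfrac94y_0^2>2y_0^2=V(0,y_0)$.

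So the statement itself needs an extra hypothesis. One option is the genuine viscosity supersolution inequality $\mathcal H(s,\zeta)\le0$ on the Fr\'echet subdifferential $D^-V(s)$. It is violated here, since $[p_-,p_+]\subset D^-V(0,y)$ and $\mathcal H>0$ on its relative interior. Another option is to assume directly that some admissible trajectory satisfies $c+\tfrac{d}{dt}V=0$ almost everywhere. The paper's route via uniqueness of viscosity solutions does not escape this either, because (ii)--(iii) do not make $V$ a viscosity solution at such kinks.

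The parts that do work still need two repairs. First, to extend (iii) from almost every $s$ to every $s$, upper semicontinuity of $\partial_C V$ runs the wrong way. It says limits of subgradients are subgradients, not that every element of $\partial_C V(s)$ is approximable from the full-measure set, and density of the projection adds nothing. What works is Clarke's gradient formula $\partial_C V(s)=\operatorname{co}\{\lim_k\nabla V(s_k):s_k\to s,\ s_k\notin N\}$, valid for every null set $N$ containing the non-differentiability points. Combine it with continuity of $\mathcal H$ and concavity of $\zeta\mapsto\mathcal H(s,\zeta)$. Citing Definition~\ref{def:nonsmooth-viscosity}(ii) ``for every $s$'' would be an added hypothesis, not something the lemma grants.

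Second, in the tail step you have the roles reversed. $V\ge0$ is what closes the upper bound, since $\int_0^Tc\,dt=V(s_0)-V(s(T))\le V(s_0)$. For a competitor, $\int_0^Tc\,dt\ge V(s_0)-V(s(T))$ yields $\int_0^\infty c\,dt\ge V(s_0)$ only if $\liminf_T V(s(T))\le0$, and $V(s(T))\ge0$ says nothing about that. You need finite-cost competitors to approach the origin, for example via $\inf_{\|s\|\ge\varepsilon}q(s)>0$ for every $\varepsilon>0$ together with $V(0)=0$, and the lemma contains no such hypothesis. You are right that some normalization is unavoidable, since $V+C$ also satisfies (i)--(iii).
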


\emph{Proof sketch.} Conditions (ii) and (iii) state that \(V\) satisfies the viscosity HJB condition of Definition~\ref{def:nonsmooth-viscosity}. By uniqueness of viscosity solutions, \(V\) coincides with the optimal value function. Condition (i) identifies \(a^{*}(s)\) as the minimizer of the Hamiltonian for the selected generalized gradient. The full argument is in Appendix~\ref{app:nonsmooth-proofs}. \(\square\)

\paragraph{Remark (Recovering the smooth case).} When \(V\) is continuously differentiable, \(\partial_{C}V(s)=\{\nabla V(s)\}\) and the viscosity conditions reduce to the classical HJB equation. Lemma~\ref{lem:nonsmooth-first-order} then recovers the standard first-order optimality condition, and Lemma~\ref{lem:nonsmooth-sufficient} becomes the classical verification theorem. The non-smooth formulation is therefore a strict generalization of the smooth case, not a parallel theory.

\begin{lemma}[Measurability of the optimal feedback]
\label{lem:nonsmooth-measurable}
Let Assumptions~\ref{as:nonsmooth-system}--\ref{as:nonsmooth-value} hold, and suppose a measurable selection \(p(s)\in\partial_{C}V(s)\) is given.
\begin{enumerate}[label=(\roman*)]
\item The set-valued map
\[
\Phi(s) := \arg\min_{a\in\mathbb{A}}\big\{c(s,a)+p(s)\cdot(f(s)+g(s)a)\big\}
\]
is nonempty, compact-valued, and weakly measurable; equivalently, its graph is a Borel subset of \(\mathbb{R}^{n}\times\mathbb{A}\).
\item \(\Phi\) admits a measurable selection \(a^{*}:\mathbb{R}^{n}\to\mathbb{A}\), i.e., a Lebesgue-measurable function with \(a^{*}(s)\in\Phi(s)\) for almost every \(s\).
\item The closed-loop vector field \(\dot{s}=f(s)+g(s)a^{*}(s)\) is measurable and locally bounded, and therefore admits Filippov solutions~\citep{filippov1988existence} from any initial condition.
\end{enumerate}
\end{lemma}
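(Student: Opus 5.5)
The plan is to treat the objective $\ell(s,a):=c(s,a)+p(s)\cdot(f(s)+g(s)a)$ as a Carathéodory integrand and apply the classical measurable-selection machinery. Specifically, I will use the measurable maximum theorem (Aliprantis--Border, Thm.~18.19) together with the Kuratowski--Ryll-Nardzewski selection theorem. First I check the Carathéodory property. For fixed $a$, the map $s\mapsto\ell(s,a)$ is Lebesgue measurable, because $c$, $f$, $g$ are continuous by Assumption~\ref{as:nonsmooth-system}(b),(c) and $p$ is measurable by hypothesis. For fixed $s$, the map $a\mapsto\ell(s,a)$ is continuous, since $c(s,\cdot)$ is continuous and the remaining term is affine in $a$. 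The constraint correspondence $s\mapsto\mathbb{A}$ is constant, compact, and nonempty by Assumption~\ref{as:nonsmooth-system}(a), so it is trivially measurable.

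For part~(i), nonemptiness and compactness of $\Phi(s)$ follow pointwise from Weierstrass: a continuous function on the compact set $\mathbb{A}$ attains its minimum, and the argmin is a closed subset of $\mathbb{A}$. If one wants to allow unbounded $\mathbb{A}$, I would use Assumption~\ref{as:nonsmooth-system}(d) together with the linear growth of $a\mapsto p(s)\cdot g(s)a$ to restrict attention to a compact sublevel set. Next, the measurable maximum theorem gives that the value $m(s)=\min_{a\in\mathbb{A}}\ell(s,a)$ is measurable and that $\Phi$ is measurable with compact values. I would also verify this directly. Take a countable dense subset $\{a_j\}\subset\mathbb{A}$; then $m(s)=\inf_j \ell(s,a_j)$ is measurable, and the graph of $\Phi$ is $\{(s,a):\ell(s,a)-m(s)=0\}$, the zero set of a Carathéodory function, hence measurable in the product. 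This is the main subtlety: $p$ is only Lebesgue measurable, so the graph is $\mathcal{L}\otimes\mathcal{B}$-measurable rather than Borel. I would handle this by modifying $p$ on a Lebesgue-null set to obtain a Borel version, which changes $\Phi$ only on a null set and is consistent with the "almost every $s$" qualifier. For closed-valued correspondences into a compact metric space, weak measurability and graph measurability are equivalent.

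For part~(ii), I would apply Kuratowski--Ryll-Nardzewski to the weakly measurable, nonempty, closed-valued $\Phi$ into the Polish space $\mathbb{A}$. This yields a measurable $a^*$ with $a^*(s)\in\Phi(s)$ for every $s$ outside the null set introduced by the modification of $p$.

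For part~(iii), the closed-loop field $F(s)=f(s)+g(s)a^*(s)$ is measurable as a composition of continuous maps with the measurable $a^*$. It is locally bounded because on any compact set $S$ we have $\|F(s)\|\le\sup_S\|f\|+\sup_S\|g\|\sup_{a\in\mathbb{A}}\|a\|$, which is finite by continuity and compactness of $\mathbb{A}$. Filippov's regularization $K[F](s)=\bigcap_{\delta>0}\bigcap_{\mu(N)=0}\overline{\mathrm{co}}\,F(B_\delta(s)\setminus N)$ is then nonempty, convex, compact-valued, and upper semicontinuous. The standard existence theorem for differential inclusions (\citep{filippov1988existence}) then gives local absolutely continuous solutions from every initial condition. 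Since Filippov's construction ignores values on null sets, the a.e. ambiguity in $a^*$ is harmless.

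I expect the only real obstacle to be the measurability bookkeeping between Lebesgue and Borel $\sigma$-algebras in part~(i). The remaining steps are direct applications of standard theorems.
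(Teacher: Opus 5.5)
Your proposal is correct and follows the same route as the paper: a Carath\'eodory integrand, a measurable argmin over a constant compact constraint set, Kuratowski--Ryll-Nardzewski selection, and then Filippov regularization of a measurable, locally bounded closed-loop field. Your handling of the Lebesgue-versus-Borel issue, by modifying $p$ on a null set, is more careful than the paper, which claims a Borel graph even though it only has a Lebesgue-measurable $p$. One caveat concerns your optional aside on unbounded $\mathbb{A}$: mere coercivity of $c(s,\cdot)$ does not dominate the linear term $p(s)\cdot g(s)a$, so superlinear growth would be needed, but this lies outside the lemma, which assumes $\mathbb{A}$ is compact.
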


\emph{Proof sketch.} Since \(c\), \(f\), \(g\), and \(p\) are measurable in \(s\) and continuous in \(a\), the integrand is a Carathéodory function. The control set \(\mathbb{A}\) is compact, so the argmin correspondence is nonempty and compact-valued, and its graph is Borel. The Kuratowski--Ryll-Nardzewski selection theorem then gives a Borel-measurable selector, and Filippov's existence theorem gives solutions of the closed loop. The full argument is in Appendix~\ref{app:nonsmooth-proofs}. \(\square\)

\begin{theorem}[Non-smooth continuous-time converse optimality]
\label{thm:nonsmooth-main}
Let Assumptions~\ref{as:nonsmooth-system}--\ref{as:nonsmooth-value} hold, and let an input-coupling matrix \(g:\mathbb{R}^{n}\to\mathbb{R}^{n\times m}\), a stage cost \(c(s,a)=q(s)+\rho(a)\) with \(q\) continuous and \(\rho\in C^{2}\) strictly convex, and a candidate locally Lipschitz value function \(V:\mathbb{R}^{n}\to\mathbb{R}\) be given. The following statements are equivalent:
\begin{enumerate}[label=(\alph*)]
\item There exists a drift \(f:\mathbb{R}^{n}\to\mathbb{R}^{n}\) such that the triple \((f,g,c)\) realizes \(V\) as the optimal value function for the control-affine system \(\dot{s}=f(s)+g(s)a\).
\item There exist a drift \(f:\mathbb{R}^{n}\to\mathbb{R}^{n}\), a measurable policy \(a^{*}:\mathbb{R}^{n}\to\mathbb{A}\), and a measurable selection \(p:\mathbb{R}^{n}\to\mathbb{R}^{n}\) with \(p(s)\in\partial_{C}V(s)\) such that for almost every \(s\in\mathbb{R}^{n}\):
\begin{enumerate}[label=(\roman*)]
\item \(\nabla_{a}\rho(a)|_{a=a^{*}(s)} + g(s)^{\top} p(s)=0\);
\item \(\inf_{a\in\mathbb{A}}\{c(s,a)+p(s)\cdot(f(s)+g(s)a)\}=0\), attained at \(a^{*}(s)\);
\item for every \(\tilde{p}\in\partial_{C}V(s)\), \(\inf_{a\in\mathbb{A}}\{c(s,a)+\tilde{p}\cdot(f(s)+g(s)a)\}\ge 0\).
\end{enumerate}
\end{enumerate}
Whenever (b) holds, the policy \(a^{*}\) is optimal and \(V\) is the optimal value function. Conversely, whenever (a) holds, a triple satisfying (b) exists.
\end{theorem}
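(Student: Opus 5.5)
The plan is to prove the two implications separately, with Lemma~\ref{lem:nonsmooth-sufficient} doing most of the work for (b)$\Rightarrow$(a) and Lemma~\ref{lem:nonsmooth-first-order} together with Lemma~\ref{lem:nonsmooth-measurable} for (a)$\Rightarrow$(b).

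\emph{(b)$\Rightarrow$(a).} Conditions (i)--(iii) of (b) coincide verbatim with hypotheses (i)--(iii) of Lemma~\ref{lem:nonsmooth-sufficient}, with the same drift $f$, selection $p$ and feedback $a^{*}$. That lemma therefore yields that $V$ is the optimal value function of $\dot s=f(s)+g(s)a$ with running cost $c$, and that $a^{*}$ is optimal. This is exactly (a), together with the final assertion of the theorem. The one thing to check is that the given $f$ satisfies Assumption~\ref{as:nonsmooth-system}(b); this is part of the standing hypotheses of the theorem, and I would state that explicitly.

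\emph{(a)$\Rightarrow$(b).} Let $f$ be a drift realizing $V$ as the optimal value. By the viscosity characterization of the optimal value recalled in Section~\ref{subsec:nonsmooth-background}, $V$ is a viscosity solution of the HJB equation. The Clarke-subdifferential equivalence quoted there then gives the HJB inequality for every $\tilde p\in\partial_C V(s)$, which is condition (iii).

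For (ii), I would use Rademacher's theorem (Assumption~\ref{as:nonsmooth-value}(b)). At almost every $s$ the function $V$ is differentiable, and at points of strict differentiability $\partial_C V(s)=\{\nabla V(s)\}$; strict differentiability also holds almost everywhere for locally Lipschitz functions on the relevant full-measure set. At such points the viscosity sub- and supersolution inequalities with test gradient $\nabla V(s)$ combine to give equality of the Hamiltonian infimum with zero. Set $p(s)=\nabla V(s)$ on this set. Off it, choose a measurable selection of $\partial_C V$; this is possible because $\partial_C V$ is upper semicontinuous with compact convex values, hence measurable. Since only ``almost every $s$'' is required, the off-set choice is harmless.

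Attainment of the infimum and measurability of a minimizer $a^{*}$ follow from Lemma~\ref{lem:nonsmooth-measurable}(i)--(ii). In the unconstrained case, strict convexity and coercivity of $\rho$ give a unique minimizer. Condition (i) then follows from Lemma~\ref{lem:nonsmooth-first-order}, or, for compact $\mathbb{A}$, in the variational sense of the preceding remark.

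\emph{Main obstacle.} The delicate step is obtaining equality in (ii), rather than mere inequality, at almost every point from the viscosity property. The claimed ``for every $p\in\partial_C V$'' equivalence only guarantees the supersolution direction for all Clarke gradients. The subsolution direction at non-smooth points may hold for only some gradient, or may fail for Clarke gradients that are not proximal. I would sidestep this by working only on the full-measure set of strict-differentiability points, where both viscosity inequalities apply to the single gradient. Failing that, I would appeal directly to the dynamic programming principle along optimal trajectories, which gives $\frac{d}{dt}V(s(t))=-c$ almost everywhere. A secondary subtlety is reconciling compact $\mathbb{A}$ (needed for Lemma~\ref{lem:nonsmooth-measurable}) with unconstrained $\mathbb{A}$ (needed for Lemma~\ref{lem:nonsmooth-first-order}). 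I would handle this by splitting into the two cases and invoking the remark for the compact one.
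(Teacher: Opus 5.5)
Your decomposition is the paper's: (b)$\Rightarrow$(a) is Lemma~\ref{lem:nonsmooth-sufficient} verbatim, and (a)$\Rightarrow$(b) goes through the viscosity property of the optimal value, Lemma~\ref{lem:nonsmooth-measurable} and Lemma~\ref{lem:nonsmooth-first-order}. The paper's proof simply asserts that, because $V$ is a viscosity solution, all of Definition~\ref{def:nonsmooth-viscosity} holds, including a measurable pair $(a^{*},p)$ with equality in (ii). You correctly identify this as the step that needs an argument.

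However, the argument you give for it rests on a false claim. A locally Lipschitz function need not be strictly differentiable almost everywhere; it can be strictly differentiable nowhere. Take a measurable $E\subset\mathbb{R}$ such that $E\cap I$ and $I\setminus E$ both have positive measure for every open interval $I$. Set $V(x)=\int_0^x\chi(t)\,dt$ with $\chi=1$ on $E$ and $\chi=-1$ off $E$. Then $V$ is $1$-Lipschitz and differentiable almost everywhere, yet $\partial_C V(x)=[-1,1]$ for every $x$. So ``the full-measure set of strict-differentiability points'' can be empty, and your way around the main obstacle then yields nothing.

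The repair is that you never need $\partial_C V(s)$ to be a singleton. Write $H(s,p):=\inf_{a\in\mathbb{A}}\{c(s,a)+p\cdot(f(s)+g(s)a)\}$.

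\emph{Condition (ii).} Work at every point of ordinary (Fr\'echet) differentiability, which is a full-measure set by Rademacher. There $\nabla V(s)\in\partial_C V(s)$, since $V^{\circ}(s;v)\ge\nabla V(s)\cdot v$. Moreover $\nabla V(s)$ is the gradient of $C^1$ test functions touching $V$ both from above and from below. So both viscosity inequalities apply to $\nabla V(s)$ and give $H(s,\nabla V(s))=0$. Take $p=\nabla V$ on this set (a Borel function on a Borel set) and a measurable selection of $\partial_C V$ elsewhere, as you propose.

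\emph{Condition (iii).} You need not lean on the paper's quoted Clarke-gradient equivalence, which you rightly distrust. The touching-from-above inequality gives $H\ge 0$ at differentiability points. $H$ is upper semicontinuous as an infimum of continuous functions, so $H(s,\zeta)\ge 0$ for every limit $\zeta$ of gradients. $H(s,\cdot)$ is concave as an infimum of affine functions, so $H(s,\cdot)\ge 0$ on the convex hull of these limits. By Clarke's gradient formula that hull is $\partial_C V(s)$.

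\emph{Attainment and condition (i) in the unconstrained case.} Strict convexity plus coercivity of $\rho$ does not make $a\mapsto\rho(a)+(g^{\top}p)\cdot a$ coercive. For example, with $m=1$, $\rho(a)=\sqrt{1+a^{2}}$ and $g^{\top}p=-1$, the infimum is $0$ and is not attained. You need superlinear growth of $\rho$, for instance strong convexity as in the quadratic case. Then $a^{*}=(\nabla\rho)^{-1}(-g^{\top}p)$ is measurable directly. Lemma~\ref{lem:nonsmooth-measurable} cannot be invoked in this case, since it assumes compact $\mathbb{A}$. The paper's proof of Lemma~\ref{lem:nonsmooth-first-order} makes the same coercivity slip, but the lemma takes attainment as a hypothesis, so your proof must supply it.
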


\emph{Proof sketch.} (b) \(\Rightarrow\) (a) is Lemma~\ref{lem:nonsmooth-sufficient}. (a) \(\Rightarrow\) (b) uses the fact that the true optimal value function \(V^{*}=V\) is a viscosity solution of the HJB equation, so the viscosity conditions hold with some measurable selection \(p(s)\in\partial_{C}V(s)\); the stationarity condition is then Lemma~\ref{lem:nonsmooth-first-order}, and measurability of the selection is Lemma~\ref{lem:nonsmooth-measurable}. The full proof is in Appendix~\ref{app:nonsmooth-proofs}. \(\square\)

\paragraph{Role of Lemma~\ref{lem:nonsmooth-measurable}.} Although Lemma~\ref{lem:nonsmooth-measurable} does not appear explicitly in the statement of Theorem~\ref{thm:nonsmooth-main}, it is what makes the converse framework well-posed. Definition~\ref{def:nonsmooth-viscosity} and Lemma~\ref{lem:nonsmooth-sufficient} both require a measurable optimal feedback, and Lemma~\ref{lem:nonsmooth-measurable} guarantees that one exists and that the resulting closed loop is a Filippov system. This is what allows the generated benchmark to be simulated even when the optimal policy is discontinuous.

The theorem is constructive. Section~\ref{subsec:nonsmooth-procedure} turns it into a six-step procedure for a given candidate \(V\), and Section~\ref{subsec:nonsmooth-example} instantiates the procedure for a two-dimensional piecewise-quadratic value function.

\subsection{General Constructive Procedure}
\label{subsec:nonsmooth-procedure}

We now turn Theorem~\ref{thm:nonsmooth-main} into a systematic procedure that, given a candidate locally Lipschitz value function \(V\), produces a triple \((f,g,c)\) for which \(V\) is optimal in the viscosity sense. The procedure makes no assumption about the specific form of the control cost \(\rho\) beyond strict convexity and \(C^{2}\) regularity; the quadratic case is treated in Section~\ref{subsec:nonsmooth-quadratic} and instantiated in Section~\ref{subsec:nonsmooth-example}.

The procedure uses Assumptions~\ref{as:nonsmooth-system}--\ref{as:nonsmooth-value}. The input-coupling matrix \(g:\mathbb{R}^{n}\to\mathbb{R}^{n\times m}\) is prescribed and satisfies Assumption~\ref{as:nonsmooth-system}(b). The control set \(\mathbb{A}\) is compact and nonempty, or \(\mathbb{A}=\mathbb{R}^{m}\) in the unconstrained case. The candidate \(V:\mathbb{R}^{n}\to\mathbb{R}\) is locally Lipschitz and its Clarke subdifferential \(\partial_{C}V(s)\) is explicitly characterized. Among the six steps below, Steps~1--3 are design choices: they depend on \(V\), on the geometry of the switching surfaces, and on a gain matrix \(K\) that shapes the uncontrolled dynamics. Steps~4--6 are then determined: the state cost, the supersolution check, and the closed-loop well-posedness follow from the design choices.

\begin{enumerate}[label=Step~\arabic*., ref=Step~\arabic*, leftmargin=*]

\item \textbf{Subdifferential analysis.}
Compute the Clarke subdifferential \(\partial_{C}V(s)\) for every \(s\in\mathbb{R}^{n}\). Identify:
\begin{itemize}
\item \emph{smooth regions}---open sets on which \(\partial_{C}V(s)=\{\nabla V(s)\}\) is a singleton;
\item \emph{switching surfaces}---lower-dimensional manifolds on which \(V\) is non-differentiable and \(\partial_{C}V(s)\) has more than one element.
\end{itemize}
On each switching surface, express \(\partial_{C}V(s)\) as the convex hull of finitely many limiting gradients from adjacent smooth regions. Introduce local parameters \(\alpha\in[0,1]^{k}\) (a single scalar \(\alpha\) for a codimension-one surface) that parametrize this convex hull. These parameters are homotopy knobs: different values of \(\alpha\) produce different admissible systems that share the same value function \(V\).

\item \textbf{Selection of a generalized gradient.}
Choose a measurable selection \(p:\mathbb{R}^{n}\to\mathbb{R}^{n}\) with \(p(s)\in\partial_{C}V(s)\) for every \(s\). On smooth regions the selection is unique, \(p(s)=\nabla V(s)\). On switching surfaces the selection may depend on the parameters \(\alpha\) introduced in Step~1. Different selections yield different admissible systems but leave \(V\) unchanged.

\item \textbf{Construction of the drift.}
Choose a locally Lipschitz, positive definite gain matrix \(K(s)\succ 0\), and set
\begin{equation}
f(s) := -K(s)\,p(s).
\label{eq:driftK}
\end{equation}
The negative sign makes \(V\) decrease along the uncontrolled dynamics: since \(p(s)\cdot f(s)=-p(s)^{\top}K(s)p(s)\le 0\), the drift performs generalized steepest descent in the \(K\)-weighted norm. Other choices of \(f\) satisfying the Hamiltonian optimality condition are possible; the discrete-time constructions of Section~\ref{sec:mode2} use square-root and metric-normalized parameterizations that also satisfy the corresponding energy identity.

\item \textbf{Determination of the state cost.}
With the drift fixed by \eqref{eq:driftK}, the optimal feedback \(a^{*}\) is determined by the stationarity condition of Lemma~\ref{lem:nonsmooth-first-order}. For general strictly convex \(\rho\in C^{2}\), the equation
\begin{equation}
\nabla_{a}\rho(a)\big|_{a=a^{*}(s)} + g(s)^{\top} p(s) = 0
\label{eq:stat_general}
\end{equation}
implicitly defines \(a^{*}(s)\). Because \(\rho\) is strictly convex and coercive, the solution exists, is unique, and is measurable in \(s\) by Lemma~\ref{lem:nonsmooth-measurable}. The state cost \(q(s)\) is then determined by the Hamiltonian optimality condition (condition~(ii) of Theorem~\ref{thm:nonsmooth-main}): for the chosen \(f\) and \(a^{*}\),
\begin{equation}
0 = q(s) + \rho(a^{*}(s)) + p(s)\cdot\big(f(s)+g(s)a^{*}(s)\big),
\label{eq:ham_zero_general}
\end{equation}
which gives
\begin{equation}
q(s) = -\rho(a^{*}(s)) - p(s)\cdot\big(f(s)+g(s)a^{*}(s)\big).
\label{eq:q_general}
\end{equation}
With \(f(s)=-K(s)p(s)\), the drift term \(p(s)\cdot f(s)\) is non-positive. The sign of the remaining term depends on the control cost; \(q(s)\) is computed from \eqref{eq:q_general} and checked for nonnegativity. In the quadratic case this check is automatic, as shown in Section~\ref{subsec:nonsmooth-quadratic}.

\item \textbf{Verification of the viscosity conditions.}
With \(f\), \(a^{*}\), and \(q\) determined as above, verify the viscosity HJB condition of Definition~\ref{def:nonsmooth-viscosity}.
\begin{itemize}
\item \emph{Subsolution.} By construction, for the selected \(p(s)\in\partial_{C}V(s)\),
\[
\inf_{a\in\mathbb{A}}\big\{c(s,a)+p(s)\cdot(f(s)+g(s)a)\big\}=0,
\]
and \(a^{*}(s)\) attains the infimum. This follows directly from \eqref{eq:stat_general} and the definition of \(q(s)\).

\item \emph{Supersolution.} For every \(\tilde{p}\in\partial_{C}V(s)\),
\[
\inf_{a\in\mathbb{A}}\big\{c(s,a)+\tilde{p}\cdot(f(s)+g(s)a)\big\}\ge 0.
\]
This inequality is not automatic. It couples the choice of \(K(s)\) to the geometry of \(\partial_{C}V(s)\) and to the input map \(g(s)\). It must be checked for the specific \(V\) and the chosen \(K(s)\). In the quadratic case, the check reduces to a lower bound on \(K\); this is derived in Section~\ref{subsec:nonsmooth-quadratic} and used in the worked example of Section~\ref{subsec:nonsmooth-example}. If the supersolution condition fails, one may enlarge \(K(s)\) and recompute \(q(s)\); Theorem~\ref{thm:nonsmooth-main} guarantees that a sufficiently large \(K(s)\) exists whenever the coefficient of \(K\) in the supersolution inequality is positive.
\end{itemize}

\item \textbf{Well-posedness of the closed loop.}
The optimal feedback \(a^{*}(s)\) may be discontinuous on switching surfaces. The closed-loop dynamics \(\dot{s}=f(s)+g(s)a^{*}(s)\) is therefore interpreted as a differential inclusion in the Filippov sense~\citep{filippov1988existence}. By Lemma~\ref{lem:nonsmooth-measurable}, the right-hand side is measurable and locally bounded, so absolutely continuous Filippov solutions exist from any initial condition. This is what allows the generated fixture to be simulated even when the optimal policy jumps.

\end{enumerate}

The procedure yields a triple \((f,g,c)\) satisfying all hypotheses of Theorem~\ref{thm:nonsmooth-main}, which certifies that \(V\) is the optimal value function and \(a^{*}\) is an optimal feedback. Sections~\ref{subsec:nonsmooth-quadratic}--\ref{subsec:nonsmooth-example} specialize the procedure to quadratic control costs and work the construction end-to-end for a two-dimensional piecewise-quadratic value function.

\subsection{Construction with a Quadratic Control Cost}
\label{subsec:nonsmooth-quadratic}

We now specialize the general procedure of Section~\ref{subsec:nonsmooth-procedure} to the quadratic control cost used in the benchmark examples. Let
\begin{equation}
\rho(a) = \tfrac12\,a^{\top} R\, a,\qquad R\succ 0,
\label{eq:quadratic_rho}
\end{equation}
and assume the control space is unconstrained, \(\mathbb{A}=\mathbb{R}^{m}\). The compact case is discussed in the remark at the end of this subsection. The drift is chosen as in the general procedure,
\begin{equation}
f(s) = -K(s)\,p(s),\qquad K(s)\succ 0,
\label{eq:f_quad}
\end{equation}
where \(p(s)\in\partial_{C}V(s)\) is the measurable selection of Step~2 in Section~\ref{subsec:nonsmooth-procedure}. Because \(\rho\) is quadratic, the stationarity condition \(Ra+g(s)^{\top}p(s)=0\) solves explicitly for the optimal action:
\begin{equation}
a^{*}(s) = -R^{-1}g(s)^{\top}p(s).
\label{eq:astar_quad}
\end{equation}
Substituting \eqref{eq:f_quad} and \eqref{eq:astar_quad} into the Hamiltonian optimality condition \eqref{eq:ham_zero_general} and solving for \(q(s)\) gives
\begin{align}
q(s) &= -\tfrac12\,a^{*}(s)^{\top}R\,a^{*}(s) - p(s)\cdot\bigl(f(s)+g(s)a^{*}(s)\bigr)\notag\\
&= \tfrac12\,\bigl\|g(s)^{\top}p(s)\bigr\|_{R^{-1}}^{2} + \bigl\|p(s)\bigr\|_{K(s)}^{2}.
\label{eq:q_quad}
\end{align}
The right-hand side is manifestly nonnegative because \(R^{-1}\succ 0\) and \(K(s)\succ 0\). The state cost therefore satisfies the positivity requirement automatically.

\paragraph{Automatic satisfaction of the first two conditions.}
With the choices \eqref{eq:f_quad}, \eqref{eq:astar_quad}, and \eqref{eq:q_quad}, the stationarity condition (condition~(i) of Lemma~\ref{lem:nonsmooth-sufficient}) holds by construction, and the Hamiltonian optimality condition (condition~(ii)) holds because, for the selected \(p(s)\),
\[
\inf_{a\in\mathbb{R}^{m}}\bigl\{c(s,a)+p(s)\cdot(f(s)+g(s)a)\bigr\}
= c(s,a^{*}(s))+p(s)\cdot(f(s)+g(s)a^{*}(s))=0,
\]
as a direct algebraic verification confirms. The only remaining requirement is the supersolution inequality (condition~(iii) of Lemma~\ref{lem:nonsmooth-sufficient}).

\paragraph{Supersolution condition.}
For fixed \(s\) and \(\tilde p\in\partial_{C}V(s)\), define
\[
\kappa_{\tilde p}(a) := q(s) + \tfrac12\,a^{\top}R\,a + \tilde p\cdot(f(s)+g(s)a).
\]
This expression is strictly convex and coercive in \(a\), so it attains a unique minimum at
\[
a_{\tilde p}^{*}(s) := -R^{-1}g(s)^{\top}\tilde p.
\]
Substituting this minimizer and using \eqref{eq:f_quad} and \eqref{eq:q_quad} gives
\begin{equation}
\inf_{a\in\mathbb{R}^{m}}\kappa_{\tilde p}(a)
= \tfrac12\,\bigl\|g(s)^{\top}p(s)\bigr\|_{R^{-1}}^{2}
-\tfrac12\,\bigl\|g(s)^{\top}\tilde p\bigr\|_{R^{-1}}^{2}
+\bigl\|p(s)\bigr\|_{K(s)}^{2}
-\langle \tilde p, p(s)\rangle_{K(s)}.
\label{eq:supersolution-explicit}
\end{equation}
The supersolution condition is \eqref{eq:supersolution-explicit} \(\ge 0\) for every \(\tilde p\in\partial_{C}V(s)\).

\paragraph{The case \(\tilde p = p(s)\).}
At \(\tilde p = p(s)\), the right-hand side of \eqref{eq:supersolution-explicit} equals
\[
\tfrac12\|g^{\top}p\|_{R^{-1}}^{2} - \tfrac12\|g^{\top}p\|_{R^{-1}}^{2} + \|p\|_{K}^{2} - \langle p,p\rangle_{K} = 0.
\]
The condition is therefore trivially satisfied at the selected gradient. The nontrivial constraint comes from the \emph{other} elements of the Clarke subdifferential, i.e., from \(\tilde p\in\partial_{C}V(s)\setminus\{p(s)\}\). This is the analogue of the fact that, in the smooth case, the Bellman residual is identically zero at the optimal action and nonnegative at all others.

\paragraph{Scalar gain.}
Let \(K(s)=kI\) with \(k>0\). Then \eqref{eq:supersolution-explicit} becomes
\begin{equation}
k\bigl(\|p\|^{2}-\tilde p\cdot p\bigr)
\;\ge\;
\tfrac12\,\bigl\|g^{\top}\tilde p\bigr\|_{R^{-1}}^{2}
-\tfrac12\,\bigl\|g^{\top}p\bigr\|_{R^{-1}}^{2}
\qquad \forall\,\tilde p\in\partial_{C}V(s)\setminus\{p\}.
\label{eq:k-inequality}
\end{equation}
The right-hand side does not depend on \(k\). Suppose
\begin{equation}
c_{\min}(s) := \min_{\tilde p\in\partial_{C}V(s)\setminus\{p(s)\}}\bigl\{\|p(s)\|^{2}-\tilde p\cdot p(s)\bigr\} > 0.
\label{eq:cmin-positive}
\end{equation}
Then the coefficient of \(k\) is positive on the nontrivial part of the subdifferential, and \eqref{eq:k-inequality} holds if and only if
\begin{equation}
k \;\ge\; k_{\min}(s) :=
\frac{\displaystyle
\max_{\tilde p\in\partial_{C}V(s)\setminus\{p(s)\}}\bigl\{\tfrac12\|g(s)^{\top}\tilde p\|_{R^{-1}}^{2}\bigr\}
-\tfrac12\|g(s)^{\top}p(s)\|_{R^{-1}}^{2}}
{\displaystyle c_{\min}(s)}.
\label{eq:kmin}
\end{equation}
The numerator is finite because \(\partial_{C}V(s)\) is compact. If \eqref{eq:cmin-positive} fails for the chosen \(s\), then either the selected generalized gradient \(p(s)\) must be replaced by another element of \(\partial_{C}V(s)\) for which the denominator is positive, or a state-dependent gain \(K(s)\) must be used. A state-dependent \(K(s)\) can be chosen so that the quadratic form in \(\langle\tilde p,p\rangle_{K}\) dominates the \(R^{-1}\)-weighted difference in \eqref{eq:supersolution-explicit}; the scalar bound \eqref{eq:kmin} is the worst-case special case.

\paragraph{Summary and forward reference.}
The construction yields a certified triple \((f,g,c)\) once the supersolution lower bound \eqref{eq:kmin} is satisfied. In smooth regions, \(c_{\min}(s)\) is undefined (the subdifferential is a singleton), and the supersolution condition holds trivially. The constraint is therefore active only on switching surfaces. Section~\ref{subsec:nonsmooth-example} instantiates the bound \eqref{eq:kmin} for the two-dimensional piecewise-quadratic family, where \(c_{\min}(s) = 8(1-\alpha^{2})y^{2}>0\) on the switching line for every \(\alpha<1\) and every \(y\neq 0\).

\paragraph{Remark (compact control sets).}
When \(\mathbb{A}\subset\mathbb{R}^{m}\) is compact and the unconstrained minimizer \(a^{*}(s)=-R^{-1}g(s)^{\top}p(s)\) lies outside \(\mathbb{A}\), the optimal action is the projection of \(a^{*}(s)\) onto \(\mathbb{A}\), and the stationarity condition is replaced by a variational inequality on the normal cone of \(\mathbb{A}\) at the projected point. The remaining construction is unchanged: the supersolution inequality must still be checked for every \(\tilde p\in\partial_{C}V(s)\), and the lower bound on \(K\) is obtained from the same computation with the projected action. The compact case is not needed for the benchmark examples in this paper and is left to future work.

\subsection{Remarks on Problem Geometry and Multiple Switching Surfaces}
\label{subsec:nonsmooth-remarks}

\paragraph{Geometric interpretation via Lie derivatives.}
In continuous-time control-affine systems, the quantity \(g(s)^{\top}p(s)\) appearing in the optimal feedback \eqref{eq:astar_quad} admits a natural geometric interpretation: it is a vector of Lie derivatives of \(V\) along the columns of the input-coupling matrix. Let \(g_{1}(s),\ldots,g_{m}(s)\) be the columns of \(g(s)\). Each \(g_{i}\) describes the instantaneous direction in which the state moves when the \(i\)-th control input is applied. For a smooth \(V\), the Lie derivative of \(V\) along \(g_{i}\) is the directional derivative
\[
\mathcal{L}_{g_{i}}V(s) = \nabla V(s)\cdot g_{i}(s),
\]
and the vector \(\mathcal{L}_{g}V(s) = (\mathcal{L}_{g_{1}}V(s),\ldots,\mathcal{L}_{g_{m}}V(s))^{\top}\) collects the rates of change of \(V\) in all actuated directions. In smooth continuous-time converse optimality, the optimal feedback is
\begin{equation}
a^{*}(s) = -R^{-1}\mathcal{L}_{g}V(s) = -R^{-1}g(s)^{\top}\nabla V(s),
\label{eq:smooth-lie-feedback}
\end{equation}
see \citep{nevistic1996constrained,yamerenko2025generating}. The right-hand side is the steepest descent of \(V\) projected onto the subspace spanned by the columns of \(g(s)\), weighted by the inverse of the control-cost matrix \(R^{-1}\). The same structure appears in the discrete-time construction of Section~\ref{sec:mode2}, where the optimal feedback \(a^{*}(s) = -\gamma B(s)^{-1}g(s)^{\top}Pf(s)\) is the \(R^{-1}\)-weighted projection of the value gradient onto the actuated subspace, with the metric \(B(s) = R + \gamma g(s)^{\top}Pg(s)\) replacing \(R\).

\paragraph{The Clarke extension.}
With a measurable selection \(p(s)\in\partial_{C}V(s)\), the scalar \(p(s)\cdot g_{i}(s)\) is a selected Lie derivative drawn from the set
\[
\mathcal{L}_{g_{i}}^{\mathrm{Clarke}} V(s) := \{\zeta \cdot g_{i}(s) : \zeta \in \partial_{C}V(s)\}.
\]
Because \(\partial_{C}V(s)\) is compact and convex and \(\zeta \mapsto \zeta \cdot g_{i}(s)\) is a linear functional, this set is a closed bounded interval in \(\mathbb{R}\). For a fixed selection \(p(s)\), the \(i\)-th coordinate of the vector \(g(s)^{\top}p(s)\) is an element of \(\mathcal{L}_{g_{i}}^{\mathrm{Clarke}} V(s)\). The optimal feedback
\[
a^{*}(s) = -R^{-1}g(s)^{\top}p(s)
\]
therefore steers the system opposite to the chosen projection of the generalized gradient onto the control subspace. On smooth regions the selection is unique and coincides with \(\nabla V(s)\); on a switching surface, \(\partial_{C}V(s)\) is the convex hull of limiting gradients from adjacent smooth regions, so each \(\mathcal{L}_{g_{i}}^{\mathrm{Clarke}} V(s)\) is the interval spanned by the corresponding classical Lie derivatives.\\

\emph{Worked example.}
For the piecewise-quadratic value of Section~\ref{subsec:nonsmooth-example} with \(g = I\), the subdifferential on the switching line is \(\partial_{C}V(0,y) = \{((2\alpha - 1)2y,\ 4y) : \alpha \in [0,1]\}\). The first actuated direction gives the interval
\[
\mathcal{L}_{g_{1}}^{\mathrm{Clarke}} V(0,y) = [-2y,\ 2y] \quad (y > 0),
\]
of length \(4|y|\), while the second actuated direction is a singleton, \(\mathcal{L}_{g_{2}}^{\mathrm{Clarke}} V(0,y) = \{4y\}\). The width of the first interval quantifies the severity of the non-smoothness: it is zero at the origin, grows linearly away from the switching surface, and is the geometric quantity that the homotopy parameter \(\alpha\) selects a point from. The optimal policy chooses a specific element of this interval, and varying \(\alpha\) continuously varies the strength of the optimal feedback on the switching line while preserving global optimality.

\paragraph{Supersolution in Lie-derivative form.}
The supersolution condition of Lemma~\ref{lem:nonsmooth-sufficient},
\[
\inf_{a\in\mathbb{A}}\bigl\{c(s,a) + \tilde p\cdot(f(s) + g(s)a)\bigr\} \ge 0
\qquad \forall \tilde p \in \partial_{C}V(s),
\]
can also be written in Lie-derivative notation. In the quadratic case, using the explicit expression \eqref{eq:supersolution-explicit}, we obtain
\begin{equation}
-\tfrac12\bigl\|\mathcal{L}_{g}^{\tilde p}V\bigr\|_{R^{-1}}^{2}
+\tfrac12\bigl\|\mathcal{L}_{g}^{p}V\bigr\|_{R^{-1}}^{2}
+\|p\|_{K}^{2}
-\langle \tilde p,p\rangle_{K} \ge 0
\qquad \forall \tilde p \in \partial_{C}V(s),
\label{eq:lie-supersolution}
\end{equation}
where \(\mathcal{L}_{g}^{\tilde p}V = g(s)^{\top}\tilde p\) and \(\mathcal{L}_{g}^{p}V = g(s)^{\top}p(s)\) are the tested and the selected Lie-derivative vectors, and \(\|\cdot\|_{R^{-1}}\), \(\|\cdot\|_{K}\) are the corresponding weighted norms. This inequality has a clean geometric meaning: the actuated component of the selected gradient must not be much smaller than the largest actuated component available in the subdifferential. If it were, no gain \(K(s)\succ 0\) would suffice, and the controller would be ``too soft'' on the switching surface. In the scalar-gain case \(K = kI\), this is exactly the requirement \(c_{\min}(s) > 0\) derived in Section~\ref{subsec:nonsmooth-quadratic}: if the selected Lie derivative is close to the boundary of \(\mathcal{L}_{g_{i}}^{\mathrm{Clarke}}V(s)\), the denominator shrinks and the required \(k\) grows.

\paragraph{Multiple switching surfaces and parameterization.}
When the candidate value function has several switching surfaces (which may also intersect), the Clarke subdifferential at a point is the convex hull of the limiting gradients from all adjacent smooth regions. The corresponding Lie-derivative set becomes a polytope parametrized by a vector \(\alpha \in [0,1]^{k}\) with \(k\) independent barycentric coordinates, where \(k+1\) is the number of adjacent smooth regions. The construction of Section~\ref{subsec:nonsmooth-quadratic} extends verbatim, since it depends only on the fact that \(\partial_{C}V(s)\) is a compact convex set, and yields a \(k\)-parameter family
\[
\bigl\{(f_{\alpha}, q_{\alpha}, a^{*}_{\alpha})\bigr\}_{\alpha \in [0,1]^{k}},
\]
all sharing the same optimal value function \(V\). Each component of \(\alpha\) independently controls the sharpness of the non-smooth behavior along a particular switching surface, providing a multi-dimensional difficulty knob for benchmark generation. Whether the supersolution condition \eqref{eq:lie-supersolution} holds for every \(\alpha\) in the family is a geometric question: it depends on the relative orientation of the switching surfaces and on the spectrum of \(g(s)^{\top}g(s)\). The construction is therefore not automatic in the multi-surface case, but the certificate logic is unchanged, and each candidate \(\alpha\) can be verified by the same check as in the scalar case.

\subsection{Problem Definition and Generator Pipeline}
\label{subsec:nonsmooth-pipeline}

Each problem instance is a non-smooth, continuous-time, deterministic, control-affine system
\[
\dot{s} = f(s) + g(s)a, \qquad s\in\mathbb{R}^{n},\ a\in\mathbb{R}^{m},
\]
with separable running cost $c(s,a) = q(s) + \rho(a)$, where $\rho(a) = \tfrac12 a^{\top} R a$ with $R \succ 0$, and $q(s)$ is determined by \eqref{eq:q_quad}. The drift is constructed as $f(s) = -K(s)p(s)$ with a positive definite locally Lipschitz gain $K(s) \succ 0$ and a measurable selection $p(s)\in\partial_{C}V(s)$. The optimal feedback is $a^{*}(s) = -R^{-1}g(s)^{\top}p(s)$, and the triple $(f, g, c)$ satisfies the viscosity HJB condition of Definition~\ref{def:nonsmooth-viscosity} by Theorem~\ref{thm:nonsmooth-main}. The certified optimal value is the prescribed $V$, and the optimal policy is $\pi^{*}(s) = a^{*}(s)$.

A \emph{benchmark family} is a parametric collection of such problems,
\[
\mathcal{F}_{\psi} = \{M_{\psi} : \psi \in \Psi\},
\]
where each instance $M_{\psi} = (f_{\psi}, g_{\psi}, q_{\psi}, \rho, R, V_{\psi}^{*}, \pi_{\psi}^{*})$ is a fully specified certified problem. The parameter space $\Psi$ encodes tunable difficulty knobs---dimension, control authority, coupling strength, and non-smoothness severity through the homotopy parameters $\alpha$---while preserving the analytical solution. The certified pair $(V_{\psi}^{*}, \pi_{\psi}^{*})$ is derived from the construction, not sampled independently.

Algorithm~\ref{alg:generator_ns} implements the six-step procedure of Section~\ref{subsec:nonsmooth-procedure} for the quadratic control case of Section~\ref{subsec:nonsmooth-quadratic}. It assumes the candidate $V$ is piecewise quadratic with a finite number of smooth regions, so that the Clarke subdifferential admits an analytical characterization. For a codimension-one switching surface the selection parameter is a scalar $\alpha\in[0,1]$; for $k$ independent switching surfaces it is a vector $\alpha\in[0,1]^{k}$.

\begin{algorithm}[t]
\caption{Converse-optimal benchmark generator (non-smooth, quadratic control).}
\label{alg:generator_ns}
\begin{algorithmic}[1]
\REQUIRE Family type, parameter ranges, number of instances \(N\), gain type (constant or state-dependent), homotopy parameter vector $\alpha$, seeds.
\FOR{each instance \(i = 1\) to \(N\)}
  \STATE Sample parameter vector \(\psi_{i}\) from the family distribution \(\mathcal{P}_{\Psi}\).
  \STATE Extract $(Q, R, g, K)$ and, if applicable, the piecewise-quadratic matrices from $\psi_{i}$.
  \STATE Compute $\partial_{C}V(s)$ analytically on smooth regions and on each switching surface.
  \STATE Choose a measurable selection $p(s)\in\partial_{C}V(s)$, fixing the homotopy parameter $\alpha$ on each switching surface.
  \STATE Construct drift $f(s) = -K(s)p(s)$ and optimal policy $a^{*}(s) = -R^{-1}g(s)^{\top}p(s)$.
  \STATE Compute state cost $q(s)$ via \eqref{eq:q_quad}.
  \STATE \textbf{Validate} on a grid $\mathcal{G}$:
  \begin{itemize}
  \item[(i)] \emph{Subsolution residual.} $\max_{s\in\mathcal{G}} \bigl|\inf_{a}\{c(s,a) + p(s)\cdot(f(s)+g(s)a)\}\bigr| < \varepsilon_{\mathrm{sub}}$.
  \item[(ii)] \emph{Supersolution residual.} $\min_{s\in\mathcal{G}} \inf_{\tilde p\in\partial_{C}V(s)} \inf_{a} \{c(s,a) + \tilde p\cdot(f(s)+g(s)a)\} > -\varepsilon_{\mathrm{sup}}$.
  \item[(iii)] \emph{State cost nonnegativity.} $\min_{s\in\mathcal{G}} q(s) \ge 0$.
  \item[(iv)] \emph{Closed-loop boundedness.} Trajectories of $\dot{s} = f(s) + g(s)a^{*}(s)$ from representative initial states remain bounded over the validation horizon.
  \end{itemize}
  \STATE If (ii) fails, enlarge $K(s)$ using the lower bound \eqref{eq:kmin} and return to Step~5.
  \STATE Export fixture (YAML) with parameters, seeds, validation metadata, and oracle evaluators for $(V^{*}, \pi^{*})$.
\ENDFOR
\RETURN Dataset of $N$ validated fixtures.
\end{algorithmic}
\end{algorithm}

Two validation checks are worth emphasizing. The subsolution residual in (i) verifies that the selected gradient $p(s)$ achieves a Hamiltonian infimum of zero, which is automatic by construction but sensitive to numerical implementation. The supersolution residual in (ii) verifies that no other element of $\partial_{C}V(s)$ produces a negative Hamiltonian, which is the only nontrivial constraint of the construction. In the scalar-gain case $K = kI$, the residual in (ii) is controlled by the lower bound \eqref{eq:kmin}; enlarging $k$ until the residual passes is the standard workflow. In the worked example of Section~\ref{subsec:nonsmooth-example}, (ii) holds for any $k > 0$ because the right-hand side of the general supersolution inequality is nonpositive on $[0,1]$, and the enlargement loop is not invoked. Validation details are reported in Section~\ref{subsec:nonsmooth-validation}. 

\subsection{Example: Piecewise-Quadratic Value Function}
\label{subsec:nonsmooth-example}

We apply the construction procedure of Section~\ref{subsec:nonsmooth-procedure} to a family of two-dimensional piecewise-quadratic value functions. The family is simple enough to admit a full analytic characterization of the Clarke subdifferential and the supersolution condition, yet rich enough that its members span a continuum from smooth to sharply kinked. We present the general family first, then work the construction end-to-end for a specific configuration, and finally derive the general supersolution condition that identifies which members of the family are certified.

\paragraph{The piecewise-quadratic family.}
Consider the control-affine system
\[
\dot{s} = f(s) + a, \qquad a \in \mathbb{R}^{2}, \qquad g(s) \equiv I,
\]
with quadratic control cost $\rho(a) = \tfrac12\|a\|^{2}$, i.e., $R = I$. The state-dependent input matrix is set to the identity to keep the algebra transparent; the general case $g(s)$ follows from the same construction with $g$ carried through the pullback formula of Section~\ref{subsec:nonsmooth-quadratic}. Let the candidate value function be
\begin{equation}
V(s) =
\begin{cases}
x^{2} + 2\beta xy + 2y^{2}, & x \ge 0, \\[4pt]
2x^{2} + 2\gamma xy + 2y^{2}, & x < 0,
\end{cases}
\label{eq:pq-family}
\end{equation}
with parameters $|\beta| < \sqrt{2}$ and $|\gamma| < 2$. These bounds are exactly the positive-definiteness conditions on the two quadratic pieces: the right-hand matrix $P_{+} = \begin{pmatrix} 1 & \beta \\ \beta & 2 \end{pmatrix}$ has determinant $2 - \beta^{2}$, and the left-hand matrix $P_{-} = \begin{pmatrix} 2 & \gamma \\ \gamma & 2 \end{pmatrix}$ has determinant $4 - \gamma^{2}$.

Each quadratic piece is strictly convex. The two pieces agree at $x = 0$ ($V(0, y) = 2y^{2}$), so $V$ is continuous. It is differentiable everywhere except on the switching line $x = 0$, where the gradient jumps. It is not globally convex when $\beta \neq \gamma$; the construction uses only local Lipschitz continuity and the Clarke subdifferential, not convexity. The kink sharpness is controlled by $|\beta - \gamma|$: when $\beta = \gamma$, the family reduces to the smooth quadratic $V(s) = x^{2} + 2\beta xy + 2y^{2}$ on both sides.

\paragraph{Step 1: Subdifferential analysis.}
The Clarke subdifferential of $V$ is
\[
\partial_{C}V(s) =
\begin{cases}
\{(2x + 2\beta y,\ 2\beta x + 4y)^{\top}\}, & x > 0, \\[4pt]
\{(4x + 2\gamma y,\ 2\gamma x + 4y)^{\top}\}, & x < 0, \\[4pt]
\operatorname{co}\{(2\beta y,\ 4y)^{\top},\, (2\gamma y,\ 4y)^{\top}\}, & x = 0.
\end{cases}
\]
On the switching line, the subdifferential can be parametrized by $\alpha \in [0,1]$ as
\begin{equation}
\partial_{C}V(0,y) = \bigl\{\bigl(2(\alpha \beta + (1-\alpha)\gamma)y,\ 4y\bigr)^{\top} \mid \alpha \in [0,1]\bigr\}.
\label{eq:pq-subdiff}
\end{equation}
Let $c_{\alpha} := \alpha \beta + (1-\alpha)\gamma$, so that $\partial_{C}V(0,y) = \{(2 c_{\alpha} y, 4y)^{\top} : \alpha \in [0,1]\}$. The extreme values $\alpha = 1$ and $\alpha = 0$ correspond to the right-hand and left-hand limiting gradients $p^{+} = (2\beta y, 4y)$ and $p^{-} = (2\gamma y, 4y)$.

\paragraph{Step 2: Selection.}
We select the right-hand limiting gradient on the switching line, i.e., the selection
\begin{equation}
p(s) =
\begin{cases}
(2x + 2\beta y,\ 2\beta x + 4y)^{\top}, & x > 0, \\[4pt]
(4x + 2\gamma y,\ 2\gamma x + 4y)^{\top}, & x < 0, \\[4pt]
(2\beta y,\ 4y)^{\top}, & x = 0.
\end{cases}
\label{eq:pq-selection}
\end{equation}
This corresponds to $\alpha = 1$ in \eqref{eq:pq-subdiff}. Other selections are treated in the homotopy discussion below and enter the general supersolution condition derived below.

\paragraph{Step 3: Drift.}
Choose $K(s) = kI$ with $k > 0$, so that $f(s) = -k\,p(s)$. Since $p(s)\cdot f(s) = -k\|p(s)\|^{2} \le 0$, $V$ decreases along the uncontrolled dynamics; the gain $k$ quantifies how much of the stabilization is done by the natural dynamics rather than by the control. In the explicit formulas below we fix $k = 1$.

\paragraph{Step 4: Optimal feedback and state cost.}
Since $R = I$, the stationarity condition reduces to $a + p(s) = 0$, giving $a^{*}(s) = -p(s)$. Substituting into the Hamiltonian optimality condition and solving for $q(s)$ yields
\begin{equation}
q(s) = \Bigl(\tfrac12 + k\Bigr)\|p(s)\|^{2}.
\label{eq:pq-q-family}
\end{equation}
The right-hand side is manifestly nonnegative, so the state cost is admissible for any $k > 0$.

\paragraph{Step 5: Viscosity verification.}
On smooth regions $x \neq 0$, the subdifferential is a singleton and the classical HJB equation holds by construction. On the switching line $x = 0$, the subsolution property holds by construction; the nontrivial condition is the supersolution. We compute it in general below and specialize to the worked configuration afterward.

\paragraph{General supersolution condition.}
Using the general expression \eqref{eq:supersolution-explicit} with $g = I$, $R = I$, $K = kI$, and substituting $p = (2c_{\alpha}y, 4y)$ and $\tilde p = (2c_{\alpha'}y, 4y)$ for fixed $\alpha, \alpha' \in [0,1]$, gives
\begin{equation}
\inf_{a\in\mathbb{R}^{2}}\kappa_{\tilde p}(a)
= 2y^{2}\,(c_{\alpha} - c_{\alpha'})\bigl[(1 + 2k)\,c_{\alpha} + c_{\alpha'}\bigr],
\label{eq:pq-supersolution-general}
\end{equation}
where $c_{\alpha} = \alpha\beta + (1-\alpha)\gamma$ and $c_{\alpha'} = \alpha'\beta + (1-\alpha')\gamma$. The supersolution condition of Definition~\ref{def:nonsmooth-viscosity} requires \eqref{eq:pq-supersolution-general} to be nonnegative for every $\alpha' \in [0,1]$.

When the selection is $\alpha = 1$ (so $c_{\alpha} = \beta$) and $\beta \ge \gamma$, the factor $c_{\alpha} - c_{\alpha'}$ is nonnegative for all $\alpha'$, and the condition reduces to
\begin{equation}
(1 + 2k)\beta + c_{\alpha'} \ge 0 \qquad \forall\, c_{\alpha'} \in [\gamma, \beta].
\label{eq:pq-supersolution-reduced}
\end{equation}
The left-hand side is linear in $c_{\alpha'}$, so it attains its minimum on the interval $[\gamma, \beta]$ at $c_{\alpha'} = \gamma$. The condition therefore simplifies to the scalar inequality
\begin{equation}
(1 + 2k)\beta \ge -\gamma.
\label{eq:pq-supersolution-scalar}
\end{equation}
This is the general certificate condition for the family \eqref{eq:pq-family}: any $(\beta, \gamma, k)$ satisfying $|\beta| < \sqrt{2}$, $|\gamma| < 2$, $k > 0$, and \eqref{eq:pq-supersolution-scalar} yields a certified system with value function $V$ and optimal feedback $a^{*} = -p$. In the smooth case $\beta = \gamma$, \eqref{eq:pq-supersolution-scalar} reduces to $(1 + 2k)\beta \ge -\beta$, which is $(2 + 2k)\beta \ge 0$; for $\beta > 0$ this holds automatically, and for $\beta = 0$ the configuration is the isotropic quadratic.

For the case $\beta < \gamma$, the analysis is symmetric: the factor $c_{\alpha} - c_{\alpha'}$ is nonpositive, and the condition becomes $(1 + 2k)\beta + c_{\alpha'} \le 0$ for all $c_{\alpha'} \in [\beta, \gamma]$, giving $(1 + 2k)\beta \le -\gamma$. We state the condition only for $\beta \ge \gamma$, which covers all configurations used in the numerical experiments.

\paragraph{Specific worked configuration ($\beta = 1$, $\gamma = -1$, $k = 1$, $\alpha = 1$).}
Take $\beta = 1$ and $\gamma = -1$. Both satisfy the positive-definiteness constraints ($|\beta| < \sqrt{2}$, $|\gamma| < 2$). The certificate condition \eqref{eq:pq-supersolution-scalar} becomes $(1 + 2k) \ge 1$, which holds for any $k > 0$; with $k = 1$ the margin is large. The candidate value function is
\begin{equation}
V(s) =
\begin{cases}
x^{2} + 2xy + 2y^{2}, & x \ge 0, \\[4pt]
2x^{2} - 2xy + 2y^{2}, & x < 0.
\end{cases}
\label{eq:pq-worked}
\end{equation}
The Clarke subdifferential on the switching line is
\[
\partial_{C}V(0, y) = \{(2(2\alpha - 1)y,\ 4y)^{\top} : \alpha \in [0,1]\},
\]
with $\alpha = 1$ giving the right-hand limiting gradient $(2y, 4y)^{\top}$ and $\alpha = 0$ the left-hand limiting gradient $(-2y, 4y)^{\top}$. The selected gradient on the switching line is $p(0, y) = (2y, 4y)^{\top}$. The drift, optimal feedback, and state cost are
\[
f(0, y) = -(2y, 4y)^{\top}, \qquad a^{*}(0, y) = -(2y, 4y)^{\top}, \qquad q(0, y) = \tfrac32\|p(0, y)\|^{2},
\]
and on the smooth regions $f$ and $a^{*}$ coincide with $-p(s)$ with $k = 1$. The resulting closed-loop matrices are
\[
P_{1} = \begin{pmatrix} 2 & 2 \\ 2 & 4 \end{pmatrix}, \qquad P_{2} = \begin{pmatrix} 4 & -2 \\ -2 & 4 \end{pmatrix},
\]
with $f = -P_{1}s$ for $x > 0$ and $f = -P_{2}s$ for $x < 0$. Both $-P_{1}$ and $-P_{2}$ are Hurwitz, so the closed loop is locally asymptotically stable on each side. Substituting $\beta = 1$, $\gamma = -1$ into \eqref{eq:pq-supersolution-general} reproduces the specific form
\[
\inf_{a}\kappa_{\tilde p}(a) = 8(1-\alpha'^{2})y^{2} \ge 0,
\]
where $\tilde p = (2(2\alpha' - 1)y, 4y)^{\top}$. The inequality holds with equality at $\alpha' = 1$ (where $\tilde p = p$) and is strict for $\alpha' < 1$.

\paragraph{Homotopy parameterization.}
The parameter $\alpha$ selects an element of the Clarke subdifferential on the switching line and generates a family of admissible systems that all share the same value function $V$. For fixed $\alpha$, the drift, optimal control, and state cost on the switching line are
\[
f_{\alpha}(0, y) = -k\,p_{\alpha}(0, y), \qquad
a_{\alpha}^{*}(0, y) = -p_{\alpha}(0, y), \qquad
q_{\alpha}(0, y) = \Bigl(\tfrac12 + k\Bigr)\|p_{\alpha}(0, y)\|^{2},
\]
with $p_{\alpha}(0, y) = (2c_{\alpha}y, 4y)^{\top}$. The drift and cost on the smooth regions are unchanged. Every member of the family is certified whenever the pair $(\beta, \gamma, k)$ satisfies \eqref{eq:pq-supersolution-scalar} with the selected $\alpha$ in place of the right-hand limit. In particular, the limiting cases $\alpha = 1$ and $\alpha = 0$ correspond to right-continuous and left-continuous drifts, and intermediate values interpolate between them, all preserving the same value function and the same global optimum.

\paragraph{Explicit formulas ($\beta = 1$, $\gamma = -1$, $k = 1$, $\alpha = 1$).}
Collecting the expressions above,
\[
f(s) =
\begin{cases}
-(2x + 2y,\ 2x + 4y)^{\top}, & x \ge 0, \\[4pt]
-(4x - 2y,\ -2x + 4y)^{\top}, & x < 0,
\end{cases}
\qquad
f(0, y) = -(2y,\ 4y)^{\top},
\]
\[
a^{*}(s) =
\begin{cases}
-(2x + 2y,\ 2x + 4y)^{\top}, & x > 0, \\[4pt]
-(4x - 2y,\ -2x + 4y)^{\top}, & x < 0, \\[4pt]
-(2y,\ 4y)^{\top}, & x = 0,
\end{cases}
\]
\[
q(s) =
\begin{cases}
\tfrac32\bigl((2x + 2y)^{2} + (2x + 4y)^{2}\bigr), & x \ge 0, \\[4pt]
\tfrac32\bigl((4x - 2y)^{2} + (-2x + 4y)^{2}\bigr), & x < 0,
\end{cases}
\qquad
c(s, a) = q(s) + \tfrac12\|a\|^{2}.
\]
The closed-loop dynamics $\dot{s} = -(k + 1)p(s)$ inherits the discontinuity across $x = 0$ and is interpreted in the Filippov sense; Lemma~\ref{lem:nonsmooth-measurable} guarantees existence of absolutely continuous solutions from any initial condition.

\paragraph{Relation to the numerical experiments.}
The numerical experiments of Section~\ref{subsec:nonsmooth-experiments} sample the family \eqref{eq:pq-family} with configurations $(\beta, \gamma, k)$ reported in Table~\ref{tab:nonsmooth-configs}. The certificate condition \eqref{eq:pq-supersolution-scalar} is satisfied by the smooth baseline $\beta = \gamma = 0$, by the matched-kink configurations $(\beta, \gamma) = (1, -1)$ and $(0.7, -0.7)$, and by the intermediate configuration. One reported configuration, $(\beta, \gamma, k) = (1.4, -1.9, 0.01)$, sits at the boundary of the admissible region: substituting into \eqref{eq:pq-supersolution-scalar} gives $(1 + 0.02)\cdot 1.4 = 1.428$ against a required $1.9$, so the scalar-gain certificate is not satisfied with the reported parameters. We therefore exclude this boundary configuration from the certified results in the numerical subsection and treat it as an open case in the discussion of future work.

\subsection{Validation}
\label{subsec:nonsmooth-validation}

Each generated instance undergoes three validation checks on a grid \(\mathcal{G}\) that includes points on the switching line \(x = 0\). The checks verify the three nontrivial conditions of the non-smooth certificate: the subsolution equality on the selected gradient, the supersolution inequality over the full Clarke subdifferential, and well-posedness of the closed loop. A fixture is exported only if all three pass.

\begin{enumerate}
\item \textbf{Subsolution residual.}
For each test point \(s \in \mathcal{G}\), evaluate
\[
H_{p}(s) := \inf_{a \in \mathbb{R}^{2}} \bigl\{ q(s) + \tfrac12\|a\|^{2} + p(s)\cdot(f(s) + a) \bigr\},
\]
where \(p(s)\) is the selected generalized gradient used in the generator. By construction, \(H_{p}(s) = 0\). Numerically we verify
\[
\max_{s \in \mathcal{G}} |H_{p}(s)| < 10^{-12}.
\]
This is a sanity check on the implementation of \(f\), \(q\), and the selection \(p\); it is automatic in exact arithmetic.

\item \textbf{Supersolution residual.}
The nontrivial condition of the certificate is that for every $\tilde p \in \partial_{C}V(s)$,
\[
H_{\tilde p}(s) := \inf_{a \in \mathbb{R}^{2}} \bigl\{ q(s) + \tfrac12\|a\|^{2} + \tilde p\cdot(f(s) + a) \bigr\} \ge 0.
\]
We evaluate the worst-case residual over a finite sampling of the extreme points of $\partial_{C}V(s)$ (on the switching line, the two limiting gradients $p^{+}$ and $p^{-}$; on smooth regions, the unique gradient). The instance passes if
\[
\min_{s \in \mathcal{G}} \min_{\tilde p \in \partial_{C}^{\mathrm{sample}}V(s)} H_{\tilde p}(s) > -\varepsilon_{\mathrm{sup}},
\]
with $\varepsilon_{\mathrm{sup}} = 10^{-10}$. On smooth regions, this condition holds with equality at $\tilde p = p(s)$ and is strictly positive at every other extreme. On the switching line, the residual is controlled analytically by the certificate condition \eqref{eq:pq-supersolution-scalar}; instances that fail this condition are rejected before export, as in the dataset construction below.

\item \textbf{Boundedness.}
The closed-loop system \(\dot{s} = f(s) + a^{*}(s)\) is integrated over a fixed time horizon using a standard ODE solver. Trajectories from a set of representative initial states remain within a bounded region, confirming that the optimal policy stabilizes the system and that the numerical implementation is well-behaved. Instances whose trajectories leave the validation domain or accumulate numerical instability are rejected.
\end{enumerate}

We do not check \(B(s) = R + g(s)^{\top}P(s)g(s) \succ 0\). That condition belongs to the discrete-time quadratic-Gaussian construction of Section~\ref{sec:mode2}, where it ensures strict convexity of the Bellman integrand and positive definiteness of the discounted metric $H_\gamma$. In continuous time, strict convexity of the control cost is guaranteed by \(R \succ 0\) alone, and no discounted metric is involved.

\subsection{Dataset Construction}
\label{subsec:nonsmooth-dataset}

A dataset of certified benchmark instances is built by sampling the family parameters from specified distributions and rejecting any instance that fails the certificate. For the piecewise-quadratic family of Section~\ref{subsec:nonsmooth-example}, the parameters are the kink coefficients $(\beta, \gamma)$, the passive drift gain $k$, and the selection parameter $\alpha$ on the switching line:
\[
\beta \sim \mathcal{U}(-\sqrt{2}, \sqrt{2}), \qquad
\gamma \sim \mathcal{U}(-2, 2), \qquad
k \sim \mathcal{U}(0.2, 2.0), \qquad
\alpha \sim \mathcal{U}(0, 1).
\]
The bounds $|\beta| < \sqrt{2}$ and $|\gamma| < 2$ ensure that both quadratic pieces $P^{+}$ (with determinant $2 - \beta^{2}$) and $P^{-}$ (with determinant $4 - \gamma^{2}$) are positive definite.

A sampled instance is accepted only if it satisfies the certificate condition \eqref{eq:pq-supersolution-scalar}. For the right-continuous selection $\alpha = 1$ used in the numerical experiments, this is the scalar inequality $(1 + 2k)\beta \ge -\gamma$. For a general selection parameter $\alpha$, the same computation gives a condition linear in $c_{\alpha} = \alpha\beta + (1-\alpha)\gamma$; a sampled instance is accepted if the resulting supersolution inequality holds at both $\tilde p = p^{+}$ and $\tilde p = p^{-}$ (the two extreme points of the Clarke subdifferential on the switching line). Instances that fail are discarded and resampled. This rejection ensures that every exported fixture carries a valid non-smooth optimality certificate.

For each accepted instance, the piecewise expressions for $f$, $a^{*}$, and $q$ are computed analytically and exported as a YAML fixture, together with the parameter vector, deterministic seeds for evaluation, and reference evaluators for $V$ and $a^{*}$. The resulting collection provides a testbed for reinforcement learning algorithms on a continuum of certified non-smooth optimal control tasks, spanning the range from smooth ($\beta = \gamma$) to sharply kinked ($|\beta - \gamma|$ large) with controlled passive drift and homotopy selection.

\subsection{Numerical Experiments on the Non-Smooth Benchmark}
\label{subsec:nonsmooth-experiments}

We now report numerical experiments on a discretized instantiation of the non-smooth benchmark. The RL agent interacts with an explicit Euler discretization of the continuous-time dynamics at $\Delta t = 0.05\,\mathrm{s}$ over a horizon of $200$ steps. This mirrors the Mode~I setup of Section~\ref{sec:mode1}: the continuous-time certificate is exact, while the digital learner experiences a discretization whose step size is part of the experimental variable. 

\paragraph{Experimental setup.}
The evaluation uses the piecewise-quadratic family \eqref{eq:pq-family}, with the right-continuous selection $\alpha = 1$ on the switching line. Five configurations $(\beta, \gamma, k)$ were evaluated, listed in Table~\ref{tab:nonsmooth-configs}.

\begin{table}[h]
\centering
\footnotesize
\begin{tabular}{lccccc}
\toprule
Config. & $\beta$ & $\gamma$ & $k$\\
\midrule
C0 & $0.0$ & $0.0$   & $0.2$ \\
C1 & $1.0$ & $-1.0$  & $0.1$ \\
C2 & $1.0$ & $-1.0$  & $1.0$ \\
C3 & $0.7$ & $-0.7$  & $0.2$ \\
C4 & $1.4$ & $-1.9$  & $0.01$ \\
\bottomrule
\end{tabular}
\caption{Certified configurations of the piecewise-quadratic family. The first four satisfy $(1 + 2k)\beta \ge -\gamma$ from \eqref{eq:pq-supersolution-scalar}. The smooth baseline $\beta = \gamma = 0$ is certified trivially.}
\label{tab:nonsmooth-configs}
\end{table}

Each algorithm is trained for $30{,}000$ environment steps, corresponding to $150$ episodes, with no warm-up phase. This budget is deliberately limited to expose differences in sample efficiency. The policy network is a two-hidden-layer MLP with $64$ units per layer and $\tanh$ activations, shared across all algorithms. Evaluation uses a paired protocol with $15$ fixed initial states, each evaluated under the deterministic greedy policy. Five independent training seeds per algorithm--configuration pair yield $5$ estimates, and the per-seed metrics are averaged over the $15$ initial states. Confidence intervals are obtained by bootstrap resampling ($1{,}000$ samples).

The metrics are the normalized optimality gap and the absolute regret of Section~\ref{subsec:oracle_protocol}. Since the benchmark is deterministic and the horizon is long enough for the system to converge, the regret coincides with the raw excess cost $J^{\pi}(s_0) - V^{*}(s_0)$.

\paragraph{Aggregate results.}
Table~\ref{tab:nonsmooth-results} reports aggregates. The table below reports the four certified configurations listed in Table~\ref{tab:nonsmooth-configs}, with confidence intervals.

\begin{table}[t]
\centering
\footnotesize
\begin{tabular}{lcc}
\toprule
Algorithm & Normalized OptGap & Regret \\
\midrule
SAC  & $1.05$ $[0.73, 1.36]$  & $2.39$ $[1.96, 2.83]$ \\
DDPG & $1.18$ $[0.80, 1.55]$  & $2.70$ $[2.17, 3.23]$ \\
TD3  & $5.04$ $[3.05, 7.03]$  & $7.64$ $[5.22, 10.07]$ \\
PPO  & $5.14$ $[2.06, 8.23]$  & $13.39$ $[7.80, 18.99]$ \\
A2C  & $12.97$ $[0, 26.47]$ & $22.06$ $[3.99, 40.11]$ \\
\bottomrule
\end{tabular}
\caption{Aggregate non-smooth benchmark results over the certified configurations C0--C4. Mean and $95\%$ CI via paired bootstrap (5 seeds $\times$ 15 initial states per configuration). Lower is better for both metrics.}
\label{tab:nonsmooth-results}
\end{table}

\paragraph{Interpretation.}
Off-policy methods with replay (SAC and DDPG) achieve the lowest gaps and regrets, followed by TD3, while the on-policy methods (PPO and A2C) are significantly worse. Two effects drive this ordering. First, off-policy algorithms reuse past transitions many times within the $30{,}000$-step budget, effectively seeing a larger dataset; on-policy methods discard each rollout after a single update. Under the discontinuous optimal policy that the non-smooth configurations produce, this sample-efficiency advantage matters: the policy must discover large actions near the switching line, and a replay buffer concentrates updates in the regions where the optimal feedback jumps. Second, the clipped surrogate objective of PPO and the synchronous updates of A2C are less effective at representing the sharp jump in the optimal control than the entropy-regularized exploration of SAC or the deterministic policy gradient of DDPG.

The smooth baseline C0 shows the smallest gap between on-policy and off-policy methods, confirming that the divergence is driven by the interaction between the kink and the algorithm's update rule, not by the algorithms' general performance on smooth control tasks. The medium configuration C3 sits between the extremes, and its intermediate gap confirms that both the kink sharpness $|\beta - \gamma|$ and the passive gain $k$ act as independent difficulty knobs. The matched-kink pair C1 and C2 differ only in $k$ ($0.1$ versus $1.0$), and the improvement from C1 to C2 for every algorithm is consistent with the interpretation that larger passive drift reduces the burden on the learned controller.

\begin{figure}[h]
    \centering
    \includegraphics[width=1\linewidth]{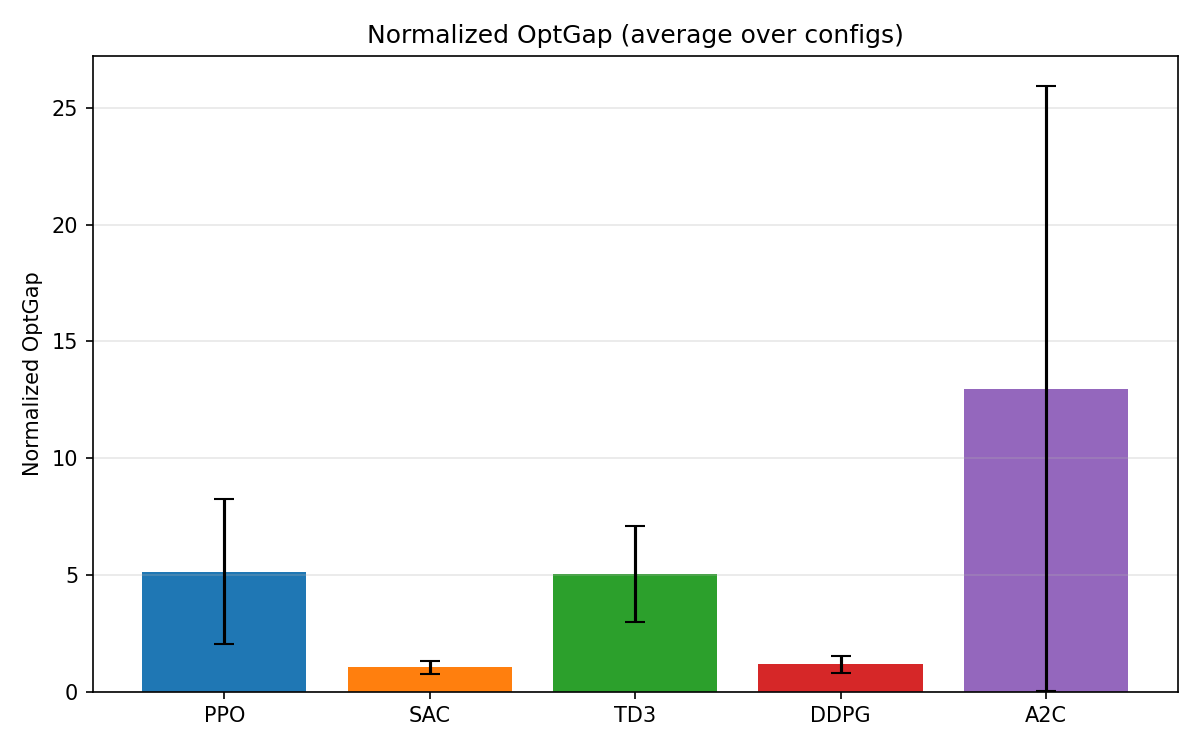}
    \caption{Normalized optimality gap for each algorithm on the non-smooth mobile-robot benchmark. Error bars indicate 95\% bootstrap confidence intervals.}
    \label{fig:nonsmooth-bar-normed-optgap}
\end{figure}

\begin{figure}[h!]  \centering\includegraphics[width=\linewidth]{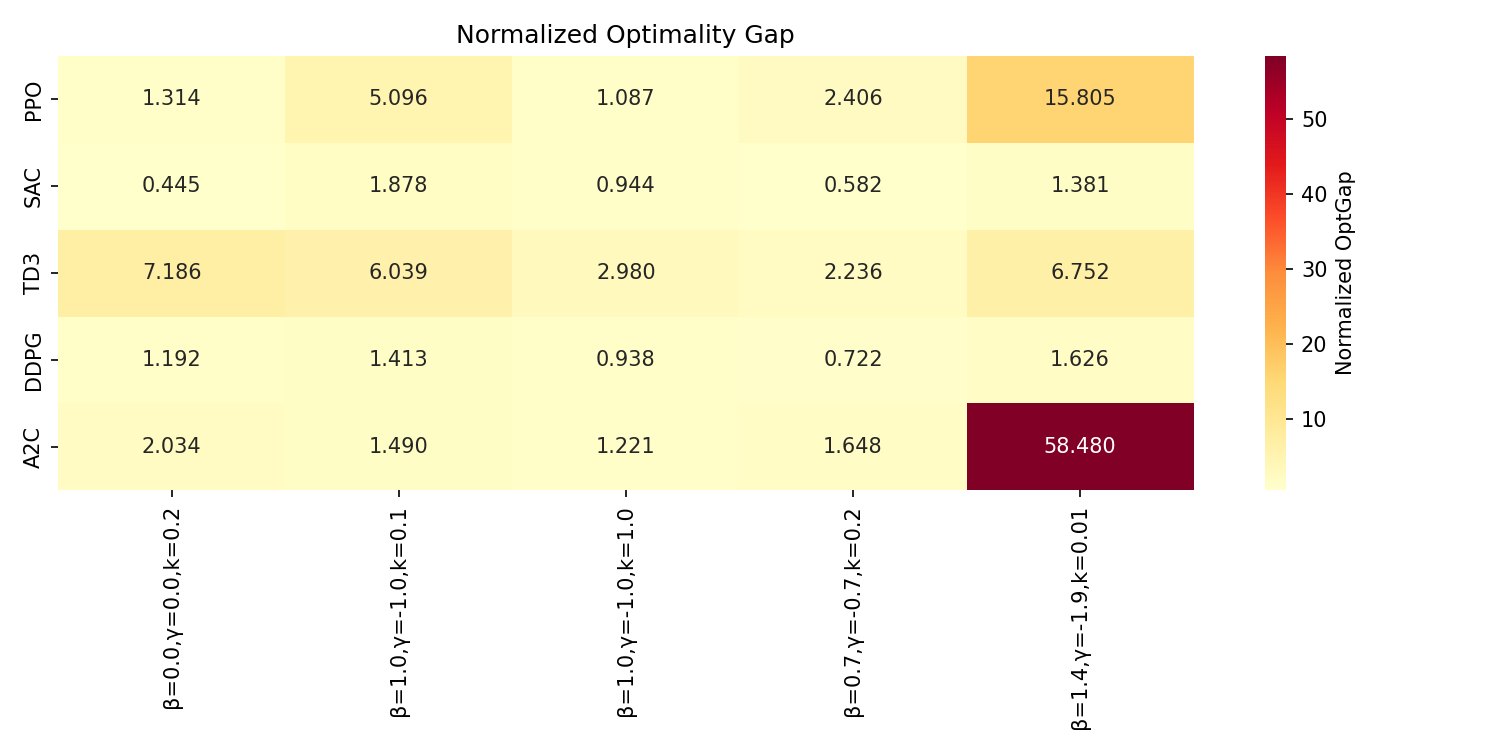}\\[-0.4em]
  {\scriptsize (a)}
  \vspace{0.4em}
  \includegraphics[width=\linewidth]{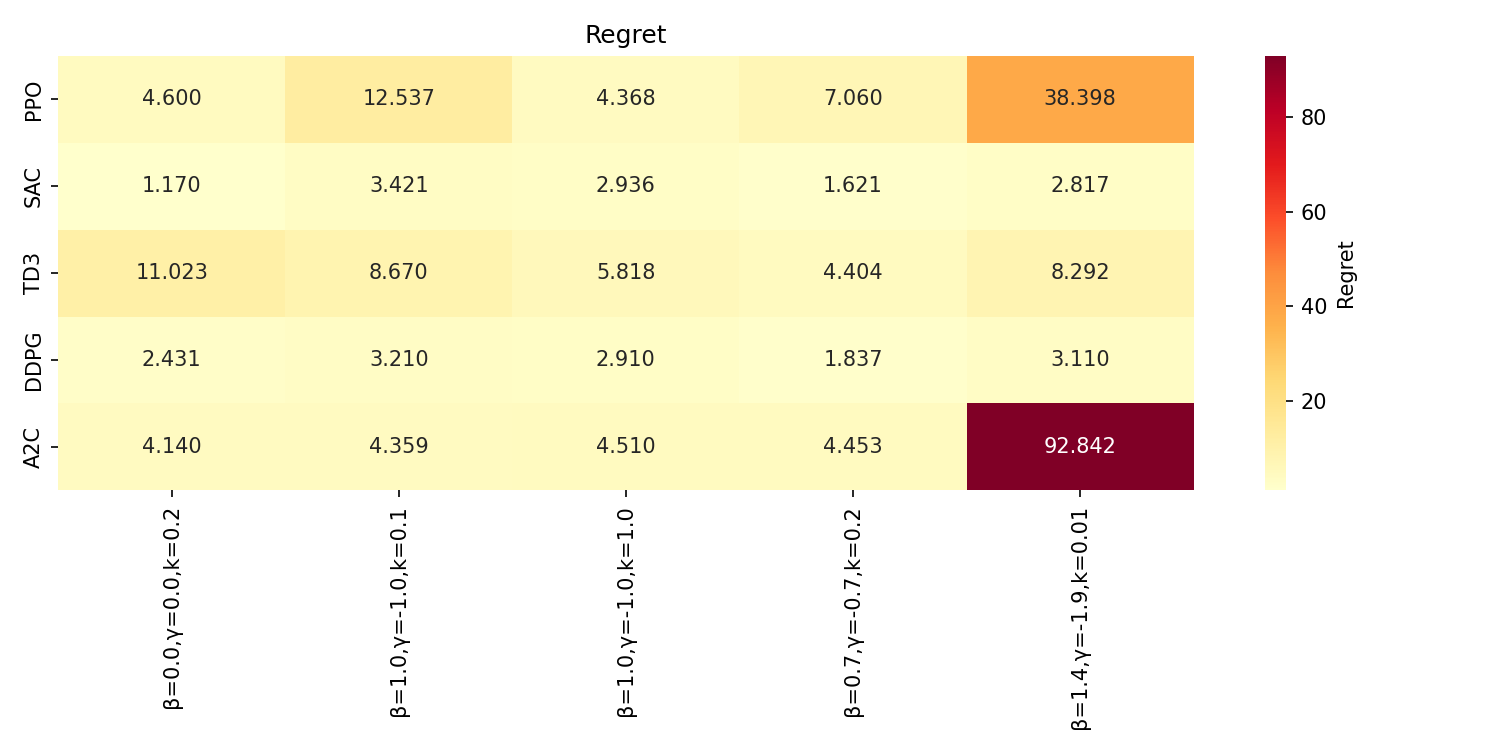}\\[-0.8em]
  {\scriptsize (b)}
  \caption{Compact heatmap summaries. (a) Normalized optimality gap across the parameter space; darker shades indicate larger suboptimality. (b) Regret for the same fized parameters; darker shades correspond to higher cumulative excess cost.}
\end{figure}

\paragraph{Relation to the stochastic benchmarks.}
The non-smooth benchmark reveals a ranking that differs from the stochastic smooth families of Sections~\ref{sec:mode2} and~\ref{sec:ext-nonaffine}. On ConverseArm with fixed initial states and on NVDEx, PPO is the strongest algorithm; on ConverseQuad, PPO and SAC lead. Here, SAC and DDPG lead and PPO is substantially worse. This is not a contradiction: the non-smooth configurations are deterministic and have a discontinuous optimal policy, whereas the stochastic families are smooth and have continuous optimal policies. The two benchmark classes therefore probe different failure modes of the same algorithms, and the certified optimum allows both to be measured in the same optimality units.

\subsection{Remark on Higher-Dimensional Extensions}
This piecewise‑quadratic family example demonstrates that the non-smooth theoretical framework can be turned into a concrete benchmark generator.  Extending the pipeline to non‑smooth value functions of higher dimension (e.g., piecewise quadratic with multiple switching surfaces) and to discrete‑time dynamics is the subject of ongoing work.

\section{Extension III: Partially Observed MDPs}
\label{sec:ext-pomdp}

The fixtures of Sections~\ref{sec:mode1}--\ref{sec:ext-nonsmooth} expose the state to the learner. In many safety-critical applications---robotic manipulation under occlusion, vehicle control with intermittent GPS, aerial navigation with noisy sensors---the learner receives only observations, and the underlying state is hidden. This section extends the ground-truth-first framework to partially observed evaluation. The central idea is to replace the physical state \(s\) in the converse construction with an \emph{information state} \(b\), the control-theoretic analogue of a belief state: a statistic of the observation--action history that is sufficient for dynamic programming. The certificate is then constructed on \(b\), and the optimal output-feedback policy is certified in the learner's own information set rather than in a hidden latent state.

The extension has two layers. The first layer is a fully observed Markov decision process (MDP) in \(b\), to which the converse constructions of Sections~\ref{sec:mode2}--\ref{sec:ext-nonaffine} apply without modification. The second layer is an output process that generates observations and updates \(b\) recursively. The learner sees only observations and rewards; the private evaluator maintains \(b\) and applies the certified feedback \(a^{*}(b)\). This separation is what allows partially observed benchmarks to inherit the certified-optimum property of their fully observed counterparts.

A full-state oracle that observes \(s\) is generally \emph{not} an admissible reference for an output-feedback learner, because it may exploit information the learner cannot access. The same initial belief and cost objective decompose as
\begin{equation}
\underbrace{J(\pi) - J_{\mathrm{full}}^{*}}_{\text{total difference}}
=
\underbrace{J(\pi) - J_{\mathrm{out}}^{*}}_{\text{learning gap}}
+
\underbrace{J_{\mathrm{out}}^{*} - J_{\mathrm{full}}^{*}}_{\text{information penalty}},
\label{eq:info-decomposition}
\end{equation}
where \(J_{\mathrm{out}}^{*}\) is the optimal value over output-feedback policies that use only the learner's history. Reporting the total difference against a full-state oracle conflates learning error with an information penalty the learner cannot remove. The constructions below certify \(J_{\mathrm{out}}^{*}\) directly, and the LQG reference of Section~\ref{subsec:pomdp-lqg} supplies an analytic \(J_{\mathrm{full}}^{*}\) so that the information penalty can be measured separately.

\subsection{Information States and Reduction}
\label{subsec:pomdp-information-state}

Let \(X_{k}\) be a hidden state, \(O_{k}\) an observation, \(A_{k}\) an action, and \(C_{k}\ge 0\) a stage cost. Let
\[
H_{k} = (O_{0}, A_{0}, C_{0}, O_{1}, \ldots, A_{k-1}, C_{k-1}, O_{k})
\]
denote the observation--action--cost history available to the learner before choosing \(A_{k}\). An output-feedback policy is a measurable map \(A_{k} = \pi_{k}(H_{k})\). Including past costs in the history matters whenever the learner receives rewards, since the reward channel can carry information about the hidden state.

Throughout the partially observed extension, all dynamics, costs, and observation models are time-homogeneous, and all policies are stationary. The Bellman equations below are stated in their stationary form under this assumption. 

\begin{definition}[Sufficient information state]
\label{def:info-state}
A measurable map \(B_{k} = \phi_{k}(H_{k})\) is a \emph{sufficient information state} if, under every admissible policy, there exist common kernels \(K_{k}\) and cost functions \(c_{k}\) such that
\begin{equation}
\Pr\bigl(B_{k+1} \in D \mid H_{k}, A_{k}\bigr) = K_{k}(D \mid B_{k}, A_{k}),
\qquad
\mathbb{E}\bigl[C_{k} \mid H_{k}, A_{k}\bigr] = c_{k}(B_{k}, A_{k})
\label{eq:info-state-property}
\end{equation}
for every measurable set \(D\) and every admissible \((H_{k}, A_{k})\).
\end{definition}

Definition~\ref{def:info-state} is the control-theoretic version of the belief-state reduction of a POMDP. For a classical POMDP with hidden state \(X_{k}\), the canonical information state is the conditional density \(b_{k}(x) = p_{X_{k} \mid H_{k}}(x \mid h_{k})\). In finite-state POMDPs, \(b_{k}\) is a probability vector on the simplex; in linear-Gaussian problems, it is the Kalman filter's conditional mean and covariance; in general nonlinear continuous-state models, it is an infinite-dimensional object. The construction below is exact whenever \(B_{k}\) is finite-dimensional and Markov; when it is not, the benchmark must be built on a deliberately designed finite-dimensional information state, as discussed in Section~\ref{subsec:pomdp-realization}.

\begin{theorem}[Information-state reduction]
\label{thm:info-reduction}
Let \(B_{k}\) be a sufficient information state in the sense of Definition~\ref{def:info-state}. Then the optimal output-feedback control problem is equivalent to a fully observed MDP with state \(B_{k}\). In particular, the optimal value can be written as a function \(v^{*}(b)\) satisfying
\begin{equation}
v^{*}(b) = \min_{a \in \mathbb{R}^{m}}\Bigl\{c(b, a) + \gamma \int_{\mathbb{B}} v^{*}(b_{+})\, p_{b}(b_{+} \mid b, a)\, db_{+}\Bigr\}.
\label{eq:info-bellman}
\end{equation}
If a measurable minimizer \(\pi^{*}(b)\) exists, then the output-feedback policy \(a_{k} = \pi^{*}(\phi_{k}(h_{k}))\) is optimal among all nonanticipative output-feedback policies.
\end{theorem}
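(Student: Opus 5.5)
The plan is the classical belief-MDP reduction, carried out in three steps. First, transfer the cost of any output-feedback policy to the information state. Second, show the reduced MDP's optimal value lower-bounds every output-feedback policy. Third, show that the stationary policy $\pi^{*}\circ\phi_{k}$ attains that bound.

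\textbf{Step 1 (cost transfer).} Fix an arbitrary nonanticipative output-feedback policy $\pi=(\pi_{k})$, with $A_{k}=\pi_{k}(H_{k})$. By the tower property and the second identity in \eqref{eq:info-state-property},
\[
\mathbb{E}[\gamma^{k}C_{k}] = \mathbb{E}\bigl[\gamma^{k}\,\mathbb{E}[C_{k}\mid H_{k},A_{k}]\bigr] = \mathbb{E}[\gamma^{k}c(B_{k},A_{k})].
\]
Summing over $k$ is justified by monotone convergence, since $C_{k}\ge 0$. Hence $J(\pi)=\mathbb{E}\sum_{k}\gamma^{k}c(B_{k},A_{k})$: the cost depends on the history only through the process $(B_{k},A_{k})$. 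The first identity in \eqref{eq:info-state-property} says that $B_{k+1}$ given $(H_{k},A_{k})$ has law $K(\cdot\mid B_{k},A_{k})$. This law is the same for every admissible policy, which is exactly why the definition demands common kernels. Therefore $(B_{k},A_{k})$ is a controlled process on $\mathbb{B}$ with kernel $K$, driven by a history-dependent randomized control. Writing $K(db_{+}\mid b,a)=p_{b}(b_{+}\mid b,a)\,db_{+}$ gives the form in \eqref{eq:info-bellman}.

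\textbf{Step 2 (lower bound).} In the fully observed MDP $(\mathbb{B},\mathbb{R}^{m},K,c,\gamma)$, policies may also depend on the full history, but for discounted nonnegative cost such policies cannot beat the optimum over Markov policies on $b$. I would invoke this standard fact, citing Bertsekas--Shreve, rather than reprove it. Two consequences follow. First, $v^{*}(b_{0})\le J(\pi)$ for every output-feedback policy. Second, the optimal value $v^{*}$ is the minimal nonnegative fixed point of the Bellman operator, i.e.\ it satisfies \eqref{eq:info-bellman}. I would obtain the latter from value iteration started at $0$, using monotone convergence and the nonnegativity of the cost. This is the only point where one must be careful that $v^{*}$ is measurable; if needed, I will assume it is, or note that it is lower semianalytic.

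\textbf{Step 3 (attainment).} Take a measurable minimizer $\pi^{*}$ of \eqref{eq:info-bellman} and set $a_{k}=\pi^{*}(\phi_{k}(h_{k}))$. This is a legitimate output-feedback policy because $\phi_{k}$ is a measurable function of the history. Iterate the Bellman equality along the closed loop, again using Step 1 and the kernel identity. This gives
\[
v^{*}(b_{0}) = \mathbb{E}\sum_{k<T}\gamma^{k}c(B_{k},A_{k}) + \gamma^{T}\,\mathbb{E}\,v^{*}(B_{T}).
\]
Since the tail term is nonnegative, $J_{T}\le v^{*}(b_{0})$ for every $T$. Letting $T\to\infty$ gives $J(\pi^{*}\circ\phi)\le v^{*}(b_{0})$. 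Combined with Step 2, equality holds, so the policy is optimal among all nonanticipative output-feedback policies, and the two problems are equivalent.

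The main obstacle is Step 2. There one must justify that the reduced MDP's optimum over all history-dependent controls, including those that use raw observations and costs beyond $B_{k}$, equals the value attained by Markov policies on $b$. Measurability of $v^{*}$ on a possibly infinite-dimensional $\mathbb{B}$ also needs attention. I would handle both by citing the Borel-space dynamic programming theory, restricting to a finite-dimensional Borel $\mathbb{B}$ as the paper already does.
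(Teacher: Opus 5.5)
Your architecture is sound, but one sub-step of Step 2 fails as stated. You propose to get the Bellman equation for $v^{*}$ from value iteration started at $0$, using monotone convergence and nonnegativity of the cost. With nonnegative but unbounded costs and the noncompact action set $\mathbb{R}^{m}$, monotone convergence gives only two inequalities: $J_{\infty}:=\lim_{k}\mathcal{T}^{k}0\le v^{*}$ and $J_{\infty}\le\mathcal{T}J_{\infty}$. The reverse inequality requires exchanging $\lim_{k}$ with $\inf_{a}$, and that fails without inf-compactness or bounded costs. A standard example: from $b_{0}$, action $n\in\mathbb{N}$ (encoded in $a$) leads through $n$ zero-cost transitions to a single stage cost $\gamma^{-n}$ at time $n$, then zero-cost absorption. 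Every policy pays exactly $1$, so $v^{*}(b_{0})=1$. Yet $(\mathcal{T}^{k}0)(b_{0})=0$ for all $k$, and $J_{\infty}$ is not even a fixed point. The fix is to cite directly the nonnegative-cost (positive-model) result in Bertsekas--Shreve: the optimal value satisfies $v^{*}=\mathcal{T}v^{*}$ and is the smallest nonnegative lower semianalytic solution, proved without value iteration. Alternatively, add an inf-compactness hypothesis. With that replacement, Steps 1--3 form a complete proof, and your nonnegative-tail argument in Step 3 is correct as written.

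Your route also genuinely differs from the paper's. The paper runs finite-horizon backward induction directly over histories. At each stage, the conditional identity of Definition~\ref{def:info-state} shows that the minimal cost-to-go from $h_{t}$ depends only on $\phi_{t}(h_{t})$. So history-dependent policies are handled by the induction itself, not by the Markov-sufficiency theorem you import. The paper then passes to $T\to\infty$ via a sup-norm contraction, which assumes bounded costs. That assumption is not in the statement and fails for the quadratic fixtures, as the paper's own scope remark after the proof of Theorem~\ref{thm:info-converse} concedes. Without it, the paper's finite-horizon limit hits exactly the $\lim$/$\inf$ exchange problem above.

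Your approach combines three pieces: the transfer to a controlled process with randomized reduced-history controls, positive-model dynamic programming, and the Step 3 verification. It needs no boundedness or transversality, so it covers the unbounded quadratic costs the benchmarks actually use. The price is reliance on Borel-space machinery: lower semianalytic values, universally measurable selectors, and regular conditional laws to define the induced randomized policy. The paper's argument is more elementary but narrower.
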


The proof is in Appendix~\ref{app:pomdp-proofs}. The essential point is that no new dynamic-programming principle is needed: Definition~\ref{def:info-state} says that the transition kernel and the expected stage cost depend on the history only through \(B_{k}\), so the backward-induction argument for the fully observed Bellman equation applies verbatim once \(s\) is replaced by \(b\).

\subsection{Converse Optimality on the Information State}
\label{subsec:pomdp-converse}

Theorem~\ref{thm:info-reduction} reduces the output-feedback problem to a fully observed Markov problem on the information state. We can therefore apply the converse constructions of Sections~\ref{sec:mode2}--\ref{sec:ext-nonaffine} directly to \(b\). Throughout this subsection, we assume that the information-state dynamics is affine in the virtual input,
\begin{equation}
B_{k+1} = f(B_{k}) + g(B_{k})\, A_{k} + \Omega_{k+1},
\label{eq:info-dynamics}
\end{equation}
where \(\{\Omega_{k}\}\) is i.i.d., zero-mean, and has finite covariance \(\Sigma_{b}\), and that the stage cost is separable,
\begin{equation}
c(b, a) = q(b) + \rho(a),
\label{eq:info-cost}
\end{equation}
with \(q \ge 0\) and \(\rho\) strictly convex. These are the same structural assumptions used for the fully observed stochastic construction; they are now stated on \(b\) rather than on \(s\).

\begin{theorem}[Converse optimality on the information state]
\label{thm:info-converse}
Let the information-state dynamics \eqref{eq:info-dynamics} and cost \eqref{eq:info-cost} satisfy the assumptions above, and let \(v^{*} : \mathbb{R}^{d} \to \mathbb{R}\) be a candidate value function satisfying the same regularity and integrability conditions as in the fully observed stochastic construction (Section~\ref{sec:mode2}). If for every \(b \in \mathbb{R}^{d}\) there exists a policy \(\pi^{*}(b)\) such that
\begin{align}
\pi^{*}(b) \text{ is a Bellman argmin}: \quad \pi^{*}(b) &\in \arg\min_{a \in \mathbb{R}^{m}} \Bigl\{\rho(a) + \gamma \mathbb{E}\bigl[v^{*}(f(b) + g(b)a + \Omega)\bigr]\Bigr\},
\label{eq:info-stationarity} \\[2pt]
v^{*}(b) &= q(b) + \rho(\pi^{*}(b)) + \gamma \mathbb{E}\bigl[v^{*}(f(b) + g(b)\pi^{*}(b) + \Omega)\bigr],
\label{eq:info-bellman-identity}
\end{align}
then \(v^{*}\) is the optimal information-state value function, and the output-feedback policy \(A_{k} = \pi^{*}(B_{k})\) is optimal among all output-feedback policies.
\end{theorem}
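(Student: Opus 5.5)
Two steps: reduce to a fully observed Markov problem on \(b\), then apply the standard discounted verification argument (the certificate of Section~\ref{sec:fixtures}, proved in Appendix~\ref{app:smooth-proofs}) with \(s\) replaced by \(b\).

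By Theorem~\ref{thm:info-reduction}, the output-feedback problem is equivalent to the fully observed MDP with state \(B_k\). Its kernel is \(p_b(\cdot\mid b,a)\), the law of \(f(b)+g(b)a+\Omega\) under \eqref{eq:info-dynamics}, and its expected stage cost is \(c(b,a)=q(b)+\rho(a)\). Any nonanticipative output-feedback policy \(\pi_k(H_k)\) therefore induces a process \((B_k,A_k)\) with
\[
\mathbb{E}[C_k\mid H_k,A_k]=c(B_k,A_k), \qquad \Pr(B_{k+1}\in\cdot\mid H_k,A_k)=p_b(\cdot\mid B_k,A_k).
\]
In the reduced MDP, \eqref{eq:info-stationarity} and \eqref{eq:info-bellman-identity} together state two things. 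The Bellman residual \(\mathcal{R}_{v^*}(b,a)=c(b,a)+\gamma\mathbb{E}[v^*(f(b)+g(b)a+\Omega)]-v^*(b)\) vanishes at \(a=\pi^*(b)\). It is also nonnegative for all \(a\): adding \(q(b)\) to both sides of the argmin in \eqref{eq:info-stationarity} does not change the minimizer, and the minimum value equals \(v^*(b)\) by \eqref{eq:info-bellman-identity}.

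\emph{Lower bound.} Fix an arbitrary output-feedback policy. Conditioning on \(H_k\) and using the tower property with the kernel identity above, the nonnegative residual gives
\[
\mathbb{E}[v^*(B_k)]\le \mathbb{E}[C_k]+\gamma\,\mathbb{E}[v^*(B_{k+1})].
\]
Iterating \(T\) times yields
\[
v^*(b_0)\le \mathbb{E}\Bigl[\sum_{k<T}\gamma^kC_k\Bigr]+\gamma^T\mathbb{E}[v^*(B_T)].
\]

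\emph{Attainment.} For \(A_k=\pi^*(B_k)\) the same chain holds with equality.

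\emph{Passage to the limit.} The main obstacle is showing that the tail term \(\gamma^T\mathbb{E}[v^*(B_T)]\) is harmless. For the lower bound I would use only that it tends to zero for every admissible policy of finite cost; policies with infinite cost satisfy the bound trivially. This is where I invoke the ``same regularity and integrability conditions as in the fully observed stochastic construction.” In the quadratic case \(v^*(b)=b^\top Pb+\beta\), the growth of \(v^*\) is dominated by \(q\) (via \(P-Q\succeq0\) in the energy identity), so finite expected discounted cost forces \(\gamma^T\mathbb{E}[q(B_T)]\to0\). Together with \(\Sigma_b<\infty\), this makes the tail vanish. If that domination is not available in general, I would restrict admissible policies to those with \(\gamma^T\mathbb{E}[v^*(B_T)]\to0\), which is the standard transversality assumption of the fully observed certificate. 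For \(\pi^*\), the tail vanishes by the same integrability condition applied to the closed loop.

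Letting \(T\to\infty\) gives \(J(\pi)\ge v^*(b_0)=J(\pi^*)\) for every output-feedback \(\pi\). Hence \(v^*\) is the optimal information-state value function. Since \(B_k=\phi_k(H_k)\) is computable from the learner's history, \(A_k=\pi^*(B_k)\) is optimal among all output-feedback policies by the last clause of Theorem~\ref{thm:info-reduction}. If a measurable selection of the argmin is needed, I would take it from the strict convexity of \(\rho\), which makes the minimizer unique, together with standard measurable-selection arguments.
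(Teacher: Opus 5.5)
Your route differs from the paper's. The paper shows that the minimized Bellman integrand $\min_{a}\Phi_{b}(a)$ equals $v^{*}(b)$. It invokes the gradient-interchange lemma for this, which your observation that adding $q(b)$ leaves the argmin unchanged makes unnecessary. Then, for each \emph{stationary} policy $\pi$ on $b$, it iterates $v^{*}\le\mathcal{T}^{\pi}v^{*}\le(\mathcal{T}^{\pi})^{k}v^{*}$ and passes to the limit by asserting that $\mathcal{T}^{\pi}$ is a $\gamma$-contraction on a weighted space, so $(\mathcal{T}^{\pi})^{k}v^{*}\to v^{\pi}$. History-dependent policies are handled afterwards by quoting Theorem~\ref{thm:info-reduction}, whose own proof assumes bounded costs. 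Your telescoping along the actual process, via the tower rule and the kernel property of Definition~\ref{def:info-state}, covers all nonanticipative policies at once, so your closing appeal to Theorem~\ref{thm:info-reduction} is redundant. It also isolates exactly what the limit needs, namely $\gamma^{T}\mathbb{E}[v^{*}(B_{T})]\to0$. The paper's policy-uniform contraction hypothesis is a stronger, implicit form of that tail condition, and its closing ``Scope'' paragraph concedes the same obstacle you flag.

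The step that fails is your quadratic-case argument for the tail. From $P-Q\succeq0$ you get $q(b)\le b^{\top}Pb$, so $q$ is dominated by $v^{*}$, not the other way round. The bound you need, $b^{\top}Pb\le(\lambda_{\max}(P)/\lambda_{\min}(Q))\,q(b)$, requires $Q\succ0$. With singular $Q$ the tail can diverge and the stated conclusion is false. Take $d=m=1$, $Q=0$, constant $g\neq0$ and $S\equiv1$. Then \eqref{eq:info-drift} gives $f(b)=\lambda b$ with $\gamma\lambda^{2}=1+\gamma g^{2}P/R>1$. The output-feedback policy $A_{k}\equiv0$ has zero cost, while $v^{*}(b_{0})=Pb_{0}^{2}+v_{0}>0$. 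Along that policy, $\gamma^{T}\mathbb{E}[v^{*}(B_{T})]\ge P(\gamma\lambda^{2})^{T}b_{0}^{2}\to\infty$; the same example defeats the paper's contraction assertion for $\pi\equiv0$.

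So restricting to transversal policies, or assuming $Q\succ0$ (more generally $v^{*}\le C(1+q)$), is a hypothesis that must be added to the statement, not a technicality. Under such a domination your argument closes for every output-feedback policy, because finite cost forces $\gamma^{T}\mathbb{E}[q(B_{T})]\to0$. Likewise, the per-step integrability conditions do not give a vanishing tail under $\pi^{*}$. Instead, use $v^{*}\ge0$ in the attainment identity to get $J(\pi^{*})\le v^{*}(b_{0})<\infty$, and then the same domination makes $\pi^{*}$ transversal.
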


The proof applies the Bellman-operator contraction argument of the fully observed stochastic theorem (Appendix~\ref{app:smooth-proofs}) on the information state. Because \(B_{k}\) is a sufficient statistic by Definition~\ref{def:info-state}, the value function of the output-feedback problem coincides with the value function of the Markov problem on \(b\), and optimality in the latter transfers to the former.

\subsection{Quadratic Information-State Construction}
\label{subsec:pomdp-quadratic}

The most useful specialization is again quadratic. Choose
\begin{equation}
q(b) = b^{\top} Q b, \qquad \rho(a) = a^{\top} R a, \qquad v^{*}(b) = b^{\top} P b + v_{0},
\label{eq:info-quadratic}
\end{equation}
where \(P \succ 0\), \(R \succ 0\), \(Q \succeq 0\), and \(P - Q \succeq 0\). Let
\begin{equation}
B_{\gamma}(b) = R + \gamma\, g(b)^{\top} P g(b), \qquad
H_{\gamma}(b) = \gamma\bigl(P^{-1} + \gamma\, g(b) R^{-1} g(b)^{\top}\bigr)^{-1}.
\label{eq:info-BH}
\end{equation}
Then, by the same completion-of-squares calculation as in the fully observed stochastic case, the optimal information-state feedback is
\begin{equation}
a^{*}(b) = -\gamma\, B_{\gamma}(b)^{-1} g(b)^{\top} P f(b),
\label{eq:info-oracle}
\end{equation}
and the Bellman identity reduces to the energy equation
\begin{equation}
b^{\top}(P - Q) b = f(b)^{\top} H_{\gamma}(b) f(b).
\label{eq:info-energy}
\end{equation}
The value offset is \(v_{0} = \frac{\gamma}{1-\gamma}\operatorname{tr}(P \Sigma_{b})\). Any drift of the metric-normalized form
\begin{equation}
f(b) = H_{\gamma}(b)^{-1/2} S(b) (P - Q)^{1/2} b, \qquad S(b)^{\top} S(b) = I,
\label{eq:info-drift}
\end{equation}
satisfies \eqref{eq:info-energy}, so the same parameterizations as in Section~\ref{sec:mode2} apply with \(s\) replaced by \(b\).

\begin{corollary}[Quadratic information-state benchmark]
\label{cor:info-benchmark}
Suppose \(B_{k}\) is a finite-dimensional sufficient information state satisfying \eqref{eq:info-dynamics}, and the cost is quadratic as in \eqref{eq:info-quadratic}. Choose any input matrix field \(g(b)\) and any orthogonal field \(S(b)\), and define \(f\) by \eqref{eq:info-drift}. Then \(v^{*}(b) = b^{\top} P b + v_{0}\) with \(v_{0} = \frac{\gamma}{1-\gamma}\operatorname{tr}(P \Sigma_{b})\) is the optimal information-state value function, and the optimal output-feedback controller is \(A_{k} = a^{*}(B_{k})\) with \(a^{*}\) given by \eqref{eq:info-oracle}.
\end{corollary}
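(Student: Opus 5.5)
The plan is to reduce Corollary~\ref{cor:info-benchmark} to Theorem~\ref{thm:info-converse}. Once the hypotheses \eqref{eq:info-stationarity}--\eqref{eq:info-bellman-identity} are verified for the quadratic data \eqref{eq:info-quadratic}, optimality among output-feedback policies follows. Theorem~\ref{thm:info-reduction} then transfers the conclusion from the Markov problem on \(b\) to the original history-dependent problem. The proof is therefore a completion-of-squares computation on \(b\), copied from the fully observed stochastic case with \(s\) replaced by \(b\).

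First, I use that \(\Omega\) is zero-mean, i.i.d., and has covariance \(\Sigma_b\). As in \eqref{eq:moment-only}, this gives \(\mathbb{E}[v^*(z+\Omega)] = z^\top P z + \operatorname{tr}(P\Sigma_b) + v_0\) with \(z = f(b)+g(b)a\). The minimand in \eqref{eq:info-stationarity} is then a quadratic in \(a\) with Hessian \(2B_\gamma(b)\succ 0\), since \(R\succ0\) and \(P\succ0\). So it is strictly convex with unique minimizer \eqref{eq:info-oracle}, which proves the argmin condition.

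Second, I substitute \(a^*(b)\) back in. The minimum value of \(a^\top R a + \gamma (f+ga)^\top P (f+ga)\) equals \(\gamma f^\top\bigl(P - \gamma P g B_\gamma^{-1} g^\top P\bigr) f\). By the Woodbury identity, \(\gamma\bigl(P-\gamma PgB_\gamma^{-1}g^\top P\bigr) = \gamma\bigl(P^{-1}+\gamma gR^{-1}g^\top\bigr)^{-1} = H_\gamma(b)\), which matches \eqref{eq:info-BH}. The right-hand side of \eqref{eq:info-bellman-identity} therefore becomes
\[
b^\top Q b + f(b)^\top H_\gamma(b) f(b) + \gamma\operatorname{tr}(P\Sigma_b) + \gamma v_0 .
\]
Matching this against \(b^\top P b + v_0\) splits into two parts. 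The quadratic part is the energy equation \eqref{eq:info-energy}. The constant part reads \(v_0 = \gamma\operatorname{tr}(P\Sigma_b) + \gamma v_0\), which gives \(v_0 = \frac{\gamma}{1-\gamma}\operatorname{tr}(P\Sigma_b)\).

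Third, I show the drift \eqref{eq:info-drift} satisfies the energy equation. Since \(H_\gamma(b)\succ0\), its symmetric square root exists. Then \(f^\top H_\gamma f = b^\top (P-Q)^{1/2} S^\top H_\gamma^{-1/2} H_\gamma H_\gamma^{-1/2} S (P-Q)^{1/2} b = b^\top (P-Q) b\), using \(S^\top S=I\) and \(P-Q\succeq0\). With both hypotheses of Theorem~\ref{thm:info-converse} verified, the controller \(A_k = a^*(B_k)\) is optimal.

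The main obstacle is the regularity and integrability requirement that Theorem~\ref{thm:info-converse} inherits from the fully observed case. Its contraction or verification argument needs a growth or transversality condition, such as \(\gamma^k\mathbb{E}[v^*(B_k)]\to0\) under admissible policies, together with measurability of \(f\) and \(a^*\). I would handle this by restricting to policies with \(\sup_k \mathbb{E}\|B_k\|^2<\infty\), and to measurable \(g\) and \(S\). Under the optimal policy, the energy identity gives \(\mathbb{E}[v^*(B_{k+1})]\le \gamma^{-1}\mathbb{E}[v^*(B_k)]\) minus the stage cost, up to the constant noise term. So the discounted tail \(\gamma^k\mathbb{E}[v^*(B_k)]\) stays bounded and the tail term vanishes. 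I would state this admissible class explicitly, noting that it is the same quadratic-growth class used in Appendix~\ref{app:smooth-proofs}.
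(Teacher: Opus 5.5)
Your algebra is correct and matches the paper's proof essentially line by line: the moment identity for $\mathbb{E}[v^*(z+\Omega)]$, the strictly convex quadratic with Hessian $2B_\gamma(b)$, the Woodbury identification of $H_\gamma$, the split into \eqref{eq:info-energy} plus the offset equation, the orthogonality check of \eqref{eq:info-drift}, and then the appeal to Theorem~\ref{thm:info-converse}. The paper stops there. In its ``Scope of the infinite-horizon certificate'' remark it defers the transversality question to unstated weighted-space assumptions.

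\textbf{Where your final paragraph fails.} You try to settle that question, and the key step does not hold. Along $\pi^*$ the Bellman identity gives
\[
\gamma^{k+1}\mathbb{E}[v^*(B_{k+1})]=\gamma^{k}\mathbb{E}[v^*(B_{k})]-\gamma^{k}\mathbb{E}[c(B_k,a^*(B_k))].
\]
So $\gamma^k\mathbb{E}[v^*(B_k)]$ is nonincreasing and nonnegative, hence bounded and convergent. But its limit is exactly $v^*(b_0)-J^{\pi^*}(b_0)$, so ``bounded, hence the tail vanishes'' assumes what you want to prove.

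The step genuinely fails. Take $Q=0$ (allowed by \eqref{eq:info-quadratic}) and $g\equiv 0$. Then $a^*\equiv0$ and $f^\top Pf=\gamma^{-1}b^\top Pb$, so $\gamma^k\mathbb{E}[v^*(B_k)]\to v^*(b_0)\neq 0$. More bluntly, for $Q=0$ and any $g$, the zero policy has cost $0<v^*(b)$ for $b\neq0$. So the corollary as literally stated is false, and no argument closes it without an extra hypothesis.

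Restricting to policies with $\sup_k\mathbb{E}\|B_k\|^2<\infty$ does not repair this. You never show that $\pi^*$ lies in that class, so even the restricted claim is unproved. The class can also exclude $\pi^*$ when the corollary is true. For example, take $g\equiv0$ and $Q=\epsilon P$ with $0<\epsilon<1-\gamma$. Then $\mathbb{E}[B_k^\top PB_k]$ grows like $((1-\epsilon)/\gamma)^k$ under every policy, while $a^*\equiv 0$ has finite cost equal to $v^*(b_0)$.

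\textbf{How to repair it.} The clean fix is to assume $Q\succ0$, so that $P\preceq cQ$ for some $c<\infty$; then no restriction on the policy class is needed.
\begin{itemize}
\item Take an output-feedback policy with $J^\pi(b_0)=\sum_k\gamma^k\mathbb{E}[c(B_k,A_k)]<\infty$. The terms $\gamma^k\mathbb{E}[B_k^\top QB_k]$ of a convergent series tend to $0$, hence $\gamma^k\mathbb{E}[v^*(B_k)]\to0$.
\item Telescope the Bellman inequality $v^*(B_k)\le c(B_k,A_k)+\gamma\,\mathbb{E}[v^*(B_{k+1})\mid H_k,A_k]$, which is valid by Definition~\ref{def:info-state}. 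This yields $v^*(b_0)\le J^\pi(b_0)$; infinite-cost policies are trivial.
\item Along $\pi^*$, telescope the identity above and drop the nonnegative tail to get $J^{\pi^*}(b_0)\le v^*(b_0)<\infty$. The same argument then forces the tail to vanish, giving $J^{\pi^*}(b_0)=v^*(b_0)$.
\end{itemize}
If you keep only $Q\succeq0$, the correct statement is optimality within the class of policies with $\gamma^k\mathbb{E}[B_k^\top PB_k]\to0$. You would then also need a separate proof that $\pi^*$ belongs to that class.
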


The corollary is the POMDP analogue of the QG specialization used for ConverseArm, NVDEx, and ConverseQuad. Every construction of Section~\ref{subsec:families} and Section~\ref{subsec:quad-virtual} carries over once the physical state is replaced by a sufficient information state, and the certificate logic is unchanged.

\subsection{Realizing the Information State as an Output System}
\label{subsec:pomdp-realization}

The construction above is stated in terms of \(b\). For a learner-facing benchmark, \(b\) must be reconstructed from the observation--action--reward history rather than observed directly. This requires an output realization: a pair \((\eta, \tau)\) where \(\eta(o \mid b, a)\) is an observation density and
\begin{equation}
b_{+} = \tau(b, a, o_{+})
\label{eq:filter-update}
\end{equation}
is a deterministic filter recursion. The induced transition kernel on \(b\) is
\begin{equation}
\kappa(D \mid b, a) = \int_{\mathbb{O}} \mathbbm{1}\{\tau(b, a, o) \in D\}\, \eta(o \mid b, a)\, do,
\label{eq:induced-kernel}
\end{equation}
and the realization is compatible with \eqref{eq:info-dynamics} if this kernel equals the one induced by the noise density \(p_{\Omega}\).

\begin{proposition}[Innovation realization]
\label{prop:innovation}
Let the target information-state dynamics be \eqref{eq:info-dynamics} with noise density \(p_{\Omega}\). Take the observation space \(\mathbb{O} = \mathbb{R}^{d}\), reveal the innovation \(O_{k+1} = \Omega_{k+1}\), and define the filter update
\begin{equation}
\tau(b, a, o) = f(b) + g(b) a + o.
\label{eq:innovation-update}
\end{equation}
Then the pair \((\eta, \tau)\) with \(\eta(o \mid b, a) = p_{\Omega}(o)\) realizes \eqref{eq:info-dynamics}.
\end{proposition}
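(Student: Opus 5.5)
The plan is to compute the induced kernel \eqref{eq:induced-kernel} for the innovation pair directly and show that it equals the kernel of \eqref{eq:info-dynamics}. Fix \(b\in\mathbb{R}^{d}\), an action \(a\in\mathbb{R}^{m}\), and a Borel set \(D\subset\mathbb{R}^{d}\). Write \(z:=f(b)+g(b)a\), which is a deterministic vector once \((b,a)\) is fixed. Substituting \(\eta(o\mid b,a)=p_{\Omega}(o)\) and the update \eqref{eq:innovation-update} into \eqref{eq:induced-kernel} gives
\[
\kappa(D\mid b,a)=\int_{\mathbb{R}^{d}}\mathbbm{1}\{z+o\in D\}\,p_{\Omega}(o)\,do=\Pr\bigl(z+\Omega\in D\bigr).
\]
The right-hand side is exactly the one-step law of \(B_{k+1}\) under \eqref{eq:info-dynamics} given \(B_{k}=b\) and \(A_{k}=a\). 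This uses the fact that \(\Omega_{k+1}\) is independent of \((B_{k},A_{k})\), which holds because the noise is i.i.d. and actions are nonanticipative. Equivalently, one can use the translation change of variables \(b_{+}=z+o\) to see that \(\kappa(\cdot\mid b,a)\) has density \(p_{\Omega}(b_{+}-z)\), and this is the density induced by \(p_{\Omega}\).

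The second step checks that the learner-facing process really generates this kernel when it is run on histories. By induction on \(k\), I will show that the evaluator's recursion \(b_{k+1}=\tau(b_k,a_k,o_{k+1})\), started from the known \(b_0\), reproduces \(B_{k+1}\) pathwise. The inductive step is immediate: \(\tau(B_k,A_k,\Omega_{k+1})=f(B_k)+g(B_k)A_k+\Omega_{k+1}=B_{k+1}\). Hence \(B_k\) is a measurable function \(\phi_k(H_k)\) of the history, as Definition~\ref{def:info-state} requires. Moreover, the conditional law of \(O_{k+1}\) given \((H_k,A_k)\) is \(p_{\Omega}\), so the conditional law of \(B_{k+1}\) given \((H_k,A_k)\) depends on the history only through \((B_k,A_k)\). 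This is the kernel property \eqref{eq:info-state-property}, so Theorems~\ref{thm:info-reduction} and~\ref{thm:info-converse} apply to the realized benchmark.

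The expected obstacle is the independence claim in the second step. The history \(H_k\) also contains costs \(C_j\), and if a realized cost depended on \(\Omega_{k+1}\) or on extra latent randomness, conditioning on \(H_k\) could distort the law of \(O_{k+1}\). I would handle this by taking \(C_k=c(B_k,A_k)\), which is \((H_k,A_k)\)-measurable, or more generally by assuming that any cost noise is independent of the future innovations. Under that assumption, \(\Omega_{k+1}\) is independent of \(\sigma(H_k,A_k)\), and the conditional-expectation identity for costs in \eqref{eq:info-state-property} holds trivially. A minor measurability point remains: the map \((b,a,o)\mapsto\mathbbm{1}\{\tau(b,a,o)\in D\}\) must be jointly measurable so that \(\kappa\) is a genuine Markov kernel. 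This follows once \(f\) and \(g\) are measurable, because \(\tau\) is then measurable and Fubini applies.
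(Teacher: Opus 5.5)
Your proposal is correct and takes essentially the same route as the paper: the paper also substitutes $\eta(o\mid b,a)=p_{\Omega}(o)$ and $\tau$ into the induced kernel and uses the translation $b_{+}=f(b)+g(b)a+o$ to identify $\kappa(\cdot\mid b,a)$ with the law of $f(b)+g(b)a+\Omega$. Your second step (pathwise reconstruction of $B_k$ by the filter recursion, and independence of $\Omega_{k+1}$ from $\sigma(H_k,A_k)$ once $C_k=c(B_k,A_k)$) spells out what the paper asserts in a single sentence when it concludes that $B_k$ satisfies Definition~\ref{def:info-state}.
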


The innovation realization is mathematically clean and gives an exact output process whose information state is finite-dimensional by design. It does not, however, model a physically natural hidden-state process: it is a deliberately designed benchmark whose sufficient statistic is known in closed form. This is the design principle of the POMDP extension. The construction is not the most general nonlinear POMDP, and it does not claim a nonlinear separation theorem; it constructs a benchmark for which the optimal output-feedback policy is certified in the learner's own information set.

\paragraph{What the learner sees.}
The environment maintains \(b_{k}\) internally. At reset, the learner receives \(o_{0}\) and the environment returns rewards \(r_{k} = -c(b_{k}, a_{k})\). The field \texttt{info} returned by the environment must not contain \(b_{k}\), the drift \(f\), the input map \(g\), the matrices \(P, Q, R\), the orthogonal field \(S\), or the filter update \(\tau\). If the source code of the environment is available to the learner, a strong student or an LLM can recover the information-state recursion; a serious benchmark therefore keeps the final evaluation seeds and the private filter parameters outside the learner's notebook.

\paragraph{Making the interface less transparent.}
The innovation realization is exact but transparent if \(f\), \(g\), and \(\tau\) are revealed. A less transparent variant composes the observation with a smooth bijection \(\chi : \mathbb{R}^{d} \to \mathbb{R}^{d}\):
\begin{equation}
o_{k+1} = \chi(\Omega_{k+1}), \qquad \tau(b, a, o) = f(b) + g(b)a + \chi^{-1}(o).
\label{eq:bijection-observation}
\end{equation}
The induced kernel on \(b\) is unchanged, so the certificate is preserved.

\subsection{Benchmark Protocol}
\label{subsec:pomdp-protocol}

A student-facing POMDP benchmark is implemented as follows. The environment stores \(b_{k}\) internally and returns only \(o_{k}\), the reward \(r_{k} = -c(b_{k}, a_{k})\), and termination flags. It never returns \(b_{k}\). The private evaluator computes the certified action \(a^{*}(b_{k})\) from \eqref{eq:info-oracle} and the certified value \(v^{*}(b_{k}) = b_{k}^{\top} P b_{k} + v_{0}\).

The learner-facing API is
\begin{equation}
o_{0} = \mathrm{reset}(), \qquad (o_{k+1}, r_{k}, \mathrm{done}, \mathrm{info}) = \mathrm{step}(a_{k}),
\label{eq:pomdp-api}
\end{equation}
where the \texttt{info} dictionary may contain diagnostic signals but must not contain \(b_{k}\) or any of the certificate parameters. The evaluation objective is the discounted cost
\begin{equation}
J^{\pi}(b_{0}) = \mathbb{E}_{\pi}\Bigl[\sum_{k=0}^{T-1} \gamma^{k} c(B_{k}, A_{k}) \,\Big|\, B_{0} = b_{0}\Bigr],
\label{eq:pomdp-objective}
\end{equation}
truncated at horizon \(T\). The certified reference is
\begin{equation}
J^{*}(b_{0}) = v^{*}(b_{0}) - \gamma^{T} \mathbb{E}\bigl[v^{*}(B_{T}) \mid B_{0} = b_{0}, A_{k} = a^{*}(B_{k})\bigr],
\label{eq:pomdp-reference}
\end{equation}
and the normalized relative suboptimality is
\begin{equation}
\mathrm{RS}(b_{0}, \pi) = \frac{J^{\pi}(b_{0}) - J^{*}(b_{0})}{|J^{*}(b_{0})| + \epsilon}.
\label{eq:pomdp-RS}
\end{equation}
CRN are used as in Section~\ref{subsec:oracle_protocol}: all evaluated policies see the same start beliefs and the same innovation streams.

A useful decomposition follows directly from \eqref{eq:info-decomposition}. When a full-state reference is available (for example, from a linear-Gaussian hidden model), the evaluator reports both \(J^{\pi} - J_{\mathrm{out}}^{*}\) and the analytic information penalty \(J_{\mathrm{out}}^{*} - J_{\mathrm{full}}^{*}\) separately. This separates learning error from information loss and is what distinguishes an information-consistent POMDP benchmark from one whose oracle exploits hidden information.

\subsection{LQG Reference and Information Penalty}
\label{subsec:pomdp-lqg}

To make the information-penalty decomposition quantitative, consider the noisy-sensor LQG reference
\begin{equation}
X_{k+1} = A X_{k} + G A_{k} + W_{k+1}, \qquad Y_{k} = C X_{k} + V_{k},
\label{eq:lqg-dynamics}
\end{equation}
with process covariance \(W = 10^{-4} I\), sensor variance \(\sigma_{y}^{2}\), quadratic stage cost \(X^{\top} Q X + R A^{2}\), discount \(\gamma \in (0,1)\), and parameters
\[
A = \begin{pmatrix} 1 & 0.2 \\ 0 & 1 \end{pmatrix}, \quad
G = \begin{pmatrix} 0.02 \\ 0.2 \end{pmatrix}, \quad
C = I, \quad Q = \mathrm{diag}(1, 0.1), \quad R = 0.1.
\]
The Kalman filter is the finite-dimensional information state \(b_{k} = (\mu_{k}, E)\), where \(\mu_{k} = \mathbb{E}[X_{k} \mid H_{k}]\) and \(E\) is the stationary posterior covariance solving
\[
\Xi = A E A^{\top} + W, \qquad E = \Xi - \Xi C^{\top}(C \Xi C^{\top} + \sigma_{y}^{2})^{-1} C \Xi.
\]
The controller Riccati equation \(P = Q + \gamma A^{\top} P A - \gamma^{2} A^{\top} P G (R + \gamma G^{\top} P G)^{-1} G^{\top} P A\) determines the optimal full-state value
\[
V_{\mathrm{full}}^{*}(x) = x^{\top} P x + \frac{\gamma}{1-\gamma}\operatorname{tr}(P W).
\]
Averaged over the same initial belief, \(\mathbb{E} V_{\mathrm{full}}^{*}(X_{0}) = \mu_{0}^{\top} P \mu_{0} + \operatorname{tr}(P E) + \frac{\gamma}{1-\gamma}\operatorname{tr}(P W)\). The optimal output-feedback value satisfies
\[
v^{*}(\mu) = \mu^{\top} P \mu + \operatorname{tr}(P E) + \frac{\gamma}{1-\gamma}\operatorname{tr}(P W) + \frac{\gamma}{1-\gamma}\operatorname{tr}(P K^{\top} M K E),
\]
where \(K\) is the LQG gain and \(M = R + \gamma G^{\top} P G\). The information penalty is therefore
\begin{equation}
J_{\mathrm{out}}^{*} - J_{\mathrm{full}}^{*} = \frac{\gamma}{1-\gamma}\operatorname{tr}\bigl(P K^{\top} M K E\bigr),
\label{eq:lqg-penalty}
\end{equation}
a closed-form expression that scales with the posterior covariance \(E\), i.e., with sensor uncertainty. 

The exact result is Theorem~\ref{thm:info-converse}: if \(B_{k}\) is a finite-dimensional sufficient information state with affine dynamics \eqref{eq:info-dynamics}, the quadratic construction of Section~\ref{subsec:pomdp-quadratic} certifies \(v^{*}\) and \(a^{*}\) exactly, and the output-feedback policy is optimal in the learner's information class. What the construction does not do is solve arbitrary nonlinear continuous-state POMDPs. For a general hidden-state model, the belief density \(x \mapsto p_{X_{k} \mid H_{k}}(x \mid h_{k})\) is infinite-dimensional, and the value function is a functional on densities rather than a matrix on \(\mathbb{R}^{d}\). The matrix formulas of this section apply only after a finite-dimensional information state has been specified. This is a design principle, not a defect: a POMDP benchmark is useful precisely when its sufficient statistic is known, and the construction makes the choice explicit.

\paragraph{Forward link.}
The three extensions of Sections~\ref{sec:ext-nonaffine}--\ref{sec:ext-pomdp} each change one element of the framework: the action interface and disturbance law in Extension~I, the regularity class of the value function in Extension~II, and the observation interface in Extension~III. The certificate logic is unchanged.

\section{Discussion}
\label{sec:discussion}

The framework trades broad ecological coverage for calibration: converse-generated fixtures provide known-answer control problems with measurable optimality gaps, not replacements for open-ended simulators or real-world platforms. The families studied here exhibit nonlinear drift, state-dependent actuation, delayed input channels, and stiff closed-loop behaviour, with difficulty tunable via dimension, dynamic extension, coupling strength, control authority, and disturbance structure. They remain constructed tasks; strong performance indicates diagnostic capability under certified dynamics, not deployment readiness.

\paragraph{What each extension covers, and what it does not.}
Extension~I (Section~\ref{sec:ext-nonaffine}) shows that the certificate survives a nonlinear raw-action coordinate and a non-Gaussian finite-covariance disturbance law. What it does not cover is state-dependent, heteroscedastic, or temporally correlated noise, all of which would break the moment-only argument behind \eqref{eq:moment-only}. The reported ConverseQuad results are a single configuration in $\eta$ and $\epsilon$; a systematic sweep over actuator nonlinearity and mixture weight is left to future work.

Extension~II (Section~\ref{sec:ext-nonsmooth}) gives a viscosity--Clarke certificate for value functions that are locally Lipschitz but not differentiable. The theory is continuous-time; the numerical experiments of Section~\ref{subsec:nonsmooth-experiments} use an explicit Euler discretization, which inherits the discretization confound of Mode~I. A native discrete-time non-smooth construction would eliminate this confound, but the discrete-time analogue of the viscosity inequality is not standard and is left open. The supersolution condition couples the gain matrix $K$ to the geometry of the Clarke subdifferential, and the scalar-gain version requires the denominator \eqref{eq:cmin-positive} to be positive.

Extension~III (Section~\ref{sec:ext-pomdp}) constructs a finite-dimensional information-state benchmark whose optimal output-feedback policy is certified in the learner's own information set. The construction does not solve arbitrary nonlinear continuous-state POMDPs. For a general hidden-state model, the belief density is infinite-dimensional, and the value function is a functional on densities rather than a matrix on $\mathbb{R}^{d}$. The innovation realization of Proposition~\ref{prop:innovation} is exact but not physically natural: it is a deliberately designed benchmark whose sufficient statistic is known by construction, not a solution to the general partial-observability problem.

\section{Conclusion}
\label{sec:conclusion}

We presented a ground-truth-first framework for reinforcement learning benchmarking based on converse optimality. The central idea is to make the optimum part of the benchmark specification rather than an unknown target outside it. This supports two complementary modes: continuous-time generators for sensitivity and robustness studies, and native discrete-time fixtures for oracle-referenced comparison by optimality gap, regret, Bellman residual, and policy error. Continuous-time and discrete-time generators play distinct roles: the former provide physically interpretable probes but inherit simulator-step dependence, while the latter remove timestep ambiguity for absolute comparison.

The paper combines a review of the converse-optimality lineage with three extensions of the certificate logic. Extension~I extends the construction to non-affine raw actions and finite-covariance non-Gaussian disturbances, with the ConverseQuad aerial family as the concrete instantiation. Extension~II replaces the smooth HJB certificate with a viscosity--Clarke condition, enabling certified benchmarks with kinked value functions and switching surfaces, and reports a discrete-time instantiation on a piecewise-quadratic family. Extension~III reduces partially observed evaluation to converse optimality on a sufficient information state, certifies the optimal output-feedback policy in the learner's information class, and supplies an analytic LQG information penalty that separates learning error from information loss.

\bibliography{references}
\bibliographystyle{unsrt}

\appendix

\section{Reproducibility and Released Assets}
\label{app:released-assets}

The framework is intended to be reproducible as a seedable benchmark generator rather than only as a fixed list of hand-authored environments. The public project page: \url{https://gtfrl.github.io/Benchmarking-of-Reinforcement-Learning-via-Converse-Optimality/} hosts the paper, code links, fixtures, experimental summaries, and visual diagnostics. A complete release contains generator code, random seeds for sampled fixtures, validation grids, accepted and rejected instance metadata, CRN evaluation schedules, baseline configurations, software versions, and scripts for reproducing the tables.

For the experiments summarized in the main text, the relevant reproducibility objects are the certified fixture parameters, the validation tolerance, the horizon, the start-state distribution, the disturbance streams, and the baseline hyperparameter files. Since the oracle is explicit, independent reproduction can verify both the reported returns and the Bellman consistency of each exported environment.

\section{Impact Statement}
\label{app:impact}

This work aims to improve the reliability and interpretability of RL experimentation. Its main positive impact is methodological: it can help researchers distinguish relative performance from distance to optimality and design more diagnostic experiments. Potential negative impact is limited but not absent. If known-answer fixtures are overused as narrow leaderboards, they could encourage overfitting to generated families or create unwarranted confidence about deployment readiness. We therefore frame the method as a complement to broader benchmarks and real-world validation, not a substitute for them.

\section{Proofs for the Smooth Theory}
\label{app:smooth-proofs}

A benchmark fixture is useful only if we can verify that a given pair \((V,\pi)\) is indeed optimal.  The natural verification tool is the \emph{Bellman residual}. For any candidate value function \(V\),
\[
  \mathcal{R}_{V}(s,a) \; \coloneqq \; c(s,a) + \gamma \int_{\mathcal{S}} V(s') \, P(\mathrm{d}s'\mid s,a) - V(s).
  \label{eq:residual}
\]
The residual measures how much \(V\) fails to satisfy the Bellman equation at a given state‑action pair.  If \(V = V^{\star}\), then \(\inf_{a}\mathcal{R}_{V}(s,a) = 0\) for all \(s\) and the infimum is attained by any optimal action.  More generally, the residual provides a pointwise certificate that can be checked without solving the full MDP.  Proposition 1 makes this precise.

\begin{definition}[Bellman operators] For a given MDP $(\mathcal{S},\mathcal{A},P,c,\gamma)$, we define the \emph{optimal Bellman operator} $T$ and the \emph{policy evaluation operator} $T^{\pi}$ as follows. For any function $V:\mathcal{S}\to\mathbb{R}$,
\[
    (T V)(s)\!\coloneqq\!\inf_{a\in\mathcal{A}}\!\Big\{\!c(s,a)\!\!+\!\gamma \!\!\!\int_{\mathcal{S}} \!\!\!V(s') \!\, P(\mathrm{d}s'\!\!\mid\!\!s,\!a) \!\Big\}, \quad(T^{\pi} V)(s)\! \coloneqq \!c(s,\pi(s)) \!+\!\gamma\!\!\!\int_{\mathcal{S}} \!\!\!V(s') \!\, P(\mathrm{d}s' \!\!\mid\!\! s,\!\pi(s)).
\]
\end{definition}

The Bellman operators formalize two fundamental questions in dynamic programming. The \emph{optimal Bellman operator} $T$ asks: ``What is the smallest total cost that can be achieved from state $s$, assuming that the future cost from the next state onward is given by $V$?''. The true optimal value function $V^{\star}$ is the unique fixed point of this operator, i.e.\ $V^{\star} = T V^{\star}$; this is the Bellman equation. The \emph{policy evaluation operator} $T^{\pi}$ asks: ``If I follow policy $\pi$ forever, what total cost will I incur from state $s$ if the future cost is evaluated according to $V$?''. The value function of $\pi$ is its unique fixed point: $V^{\pi} = T^{\pi} V^{\pi}$.

\begin{proposition}[Optimality certificate] Let $\mathbb{B}_w$ be the weighted sup-norm Banach space defined by $\mathbb{B}_w \coloneqq \{V:\mathcal{S} \to \mathbb{R} \text{ measurable } | \|V\|_w < \infty\}$, with $\|V\|_w = \sup_{s\in\mathcal S}\frac{|V(s)|}{w(s)}$ and $w$ some suitable weight function. Let $T$ and $T^\pi$ be the Bellman operators from Definition 1, well-defined on $\mathbb{B}_w$, and admitting unique fixed points $V^\star$ and $V^\pi$. For any locally Lipschitz function $V \in \mathbb{B}_w$ and any admissible measurable policy $\pi$, the following are equivalent:
  \begin{enumerate}
    \item $V = V^{\star} = V^{\pi}$; i.e., $V$ is the true optimal value function and $\pi$ is an optimal policy.
    \item For all $s\in\mathcal{S}$ and all $a\in\mathcal{A}$, we have $\mathcal{R}_{V}(s,\pi(s)) = 0$ and $\mathcal{R}_{V}(s,a) \ge 0$.
  \end{enumerate}
\end{proposition}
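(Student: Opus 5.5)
The plan is to rewrite both conditions in operator language. By the definition of the residual, $\mathcal{R}_V(s,a)\ge 0$ for all $a$ says $\inf_a\{c(s,a)+\gamma\int V\,dP\}\ge V(s)$, i.e.\ $TV\ge V$ pointwise. The condition $\mathcal{R}_V(s,\pi(s))=0$ says $T^\pi V=V$. Since $T^\pi V\ge TV$ always, the two together give
\[
V=T^\pi V\ge TV\ge V,
\]
so $TV=V$ and $T^\pi V=V$ simultaneously. The equivalence then reduces to statements about fixed points, and the standing hypotheses of the proposition (both operators well-defined on $\mathbb{B}_w$, each with a unique fixed point there) do the heavy lifting.

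\emph{(2)$\Rightarrow$(1).} From the display, $V\in\mathbb{B}_w$ is a fixed point of $T$ and of $T^\pi$. By uniqueness of the fixed points in $\mathbb{B}_w$, $V=V^\star$ and $V=V^\pi$. Hence $V^\pi=V^\star$, which is exactly the statement that $\pi$ is optimal.

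\emph{(1)$\Rightarrow$(2).} If $V=V^\pi$, then $T^\pi V=V$, which is $\mathcal{R}_V(s,\pi(s))=0$ for every $s$. If also $V=V^\star$, then $TV=V$. Since $TV$ is an infimum over $a$, $V(s)=TV(s)\le c(s,a)+\gamma\int V\,dP(\cdot\mid s,a)$ for every $a$, which is $\mathcal{R}_V(s,a)\ge 0$. Local Lipschitz continuity of $V$ plays no role beyond guaranteeing measurability. I would note that it is inherited from the fixture setting rather than needed for the argument.

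The only delicate point is justifying that the fixed points of $T$ and $T^\pi$ on $\mathbb{B}_w$ really are $V^\star=\inf_\pi V^\pi$ and the discounted cost $V^\pi$. The proposition assumes this. If I wanted to make it self-contained, I would invoke the weighted-norm contraction framework: assume $\sup_{s,a}\int w\,dP(\cdot\mid s,a)/w(s)\le\beta$ with $\gamma\beta<1$ and $|c|\le Cw$. Then $T$ and $T^\pi$ are $\gamma\beta$-contractions on $\mathbb{B}_w$. By Banach's theorem each has a unique fixed point. Iterating $T^\pi$ from $0$ gives the truncated discounted sums, which converge to $V^\pi$.

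The identification $\mathrm{Fix}(T)=\inf_\pi V^\pi$ is the main obstacle, and I would split it into two inequalities. First, monotonicity of $T^{\pi'}\ge T$ gives $V^{\pi'}\ge V^\star$ for every policy $\pi'$. Second, the minimizing selector $\pi$ provided in (2) attains $V^\star$. The second inequality requires a measurable minimizer. That is precisely what hypothesis (2) supplies, so measurable-selection issues are avoided in the direction that matters.
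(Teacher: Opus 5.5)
Your argument is correct and is essentially the paper's proof: both directions come down to the sandwich $V = T^\pi V \ge TV \ge V$ together with the assumed uniqueness of the fixed points of $T$ and $T^\pi$ in $\mathbb{B}_w$. You are also right that local Lipschitz continuity is never used, and the paper's proof does not use it either. Your closing discussion of weighted-norm contraction conditions and of the identification $\mathrm{Fix}(T)=\inf_\pi V^\pi$ would make the result self-contained. The paper, however, simply takes these as standing hypotheses of the proposition, so that discussion is not needed for the proof.
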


\begin{proof} We prove the equivalence.

\noindent{$(1)\Rightarrow(2)$.} Assume that $V = V^\star = V^\pi$, and that $\pi$ is an optimal policy. Since $V^\star$ is the optimal value function, it satisfies the Bellman equation
\[
V^\star(s) = \inf_{a\in\mathcal A}\left\{c(s,a) + \gamma \int_{\mathcal S} V^\star(s') \,P(\mathrm ds'\mid s,a) \right\}.
\]

Therefore, for every $(s,a)\in\mathcal S\times\mathcal A$,
\[ c(s,a) + \gamma\int_{\mathcal S}V^\star(s')\,P(\mathrm ds'\mid s,a)-V^\star(s)\ge 0,
\]
that is, $\mathcal R_V(s,a)\ge0$. Since $\pi$ is optimal and $V^\pi=V^\star$, the policy $\pi$ attains the Bellman infimum at every state:
\[
V^\star(s)=c(s,\pi(s))+\gamma\int_{\mathcal S}V^\star(s')\,P(\mathrm ds'\mid s,\pi(s)).
\]

Hence, $\mathcal R_V(s,\pi(s))=0$. Thus condition (2) holds.\\

\noindent{$(2)\Rightarrow(1)$.} Assume now that for all $s\in\mathcal S$ and all $a\in\mathcal A$, $\mathcal R_V(s,a)\ge0$, and $\mathcal R_V(s,\pi(s))=0$. By definition of the residual, the inequality $\mathcal R_V(s,a)\ge0$ means that
\[
V(s)\le c(s,a)+\gamma\int_{\mathcal S}V(s')\,P(\mathrm ds'\mid s,a)\qquad \forall a\in\mathcal A.
\]
Taking the infimum over $a$ gives $V(s)\le (TV)(s)$. On the other hand, evaluating the residual at the action $\pi(s)$ and using $\mathcal R_V(s,\pi(s))=0$, we obtain
\[
V(s)=c(s,\pi(s))+\gamma\int_{\mathcal S}V(s')\,P(\mathrm ds'\mid s,\pi(s))=(T^\pi V)(s).
\]

Since
\[
(TV)(s)=\inf_{a\in\mathcal A}\left\{c(s,a)+\gamma\int_{\mathcal S}V(s')\,P(\mathrm ds'\mid s,a)\right\},
\]
and the action $a=\pi(s)$ achieves equality, we conclude that $(TV)(s)=(T^\pi V)(s)=V(s)$. Therefore, $V=TV$ and $V=T^\pi V$.

The Bellman operators $T$ and $T^\pi$ each admit unique fixed points. Hence $V$ must coincide simultaneously with the optimal value function and the value function associated with $\pi$, so that $V=V^\star=V^\pi$. Therefore $\pi$ is an optimal policy and condition (1) holds. This completes the proof.
\end{proof}

Condition 2 of Proposition 1 requires the residual to vanish on the candidate policy and to be non‑negative everywhere else. Together they force \(V\) to be the fixed point of the optimal Bellman operator and \(\pi\) to attain the minimum.  When \(V\) is smooth and the action space is unconstrained, the residual inequality reduces to the classical Hamilton–Jacobi–Bellman equation together with a first‑order stationarity condition; when \(V\) is only locally Lipschitz, the same logic applies with Clarke subdifferentials replacing classical gradients (see Appendix~\ref{app:nonsmooth-proofs}). With the certificate in hand, we can now define the central object of the benchmarking framework.

\subsection{Results for the Stochastic Discrete-Time Setting}

To guarantee that the stochastic Bellman equation is well-posed and that the first-order optimality condition is valid, we consider the following set of assumptions \citep{ibrahim2026benchmarking}.

\begin{assumption}[Well-posedness and regularity]\label{ass:wellposed} The following holds on $\mathbb{R}^n$:
\begin{enumerate}[leftmargin=1.35em,itemsep=2pt]
\item $q:\mathbb{R}^n\to\mathbb{R}_+$ is continuous.
\item $\rho:\mathbb{R}^m\to\mathbb{R}_+$ is $C^2$ and $\mu$-strongly convex: $\nabla^2\rho(a)\succeq \mu I$ for all $a$.
\item $f,g$ are locally Lipschitz with at most polynomial growth.
\item The candidate value function $V:\mathbb{R}^n\to\mathbb{R}$ is $C^2$ and convex.
\end{enumerate}
\end{assumption}

\begin{assumption}[Growth and integrability]\label{ass:integrability} There exist constants $\eta>0$ and an integer $\ell\ge 1$ such that for all $s\in\mathbb{R}^n$,
\[
|V(s)|+\|\nabla V(s)\|+\|\nabla^2 V(s)\|\le \eta\big(1+\|s\|^\ell\big),
\]
and, for each fixed $(s,a)\in\mathbb{R}^n\times\mathbb{R}^m$, the expectations $\mathbb{E}[\|\nabla V(f(s)+g(s)a+w)\|]$ and $\mathbb{E}[\|\nabla^2 V(f(s)+g(s)a+w)\|]$ are finite under $w\sim\mathcal{N}(0,\Sigma)$.
\end{assumption}

\begin{lemma}[Expectation-gradient interchange]\label{lem:interchange}
Under Assumptions 1 and 2, for all $s \in \mathbb{R}^n$ and $a \in \mathbb{R}^m$,
\[
\nabla_a\mathbb{E}\!\left[V(f(s)+g(s)a+w)\right]
=\mathbb{E}\!\left[g(s)^\top\nabla V(f(s)+g(s)a+w)\right].
\]
\end{lemma}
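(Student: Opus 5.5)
The plan is a standard dominated-convergence argument for differentiating under the integral sign, applied one coordinate direction at a time. Fix $(s,a)$ and write $F(a) = \mathbb{E}[V(f(s)+g(s)a+w)]$, with $z(a,w) = f(s)+g(s)a+w$. Since $V$ is $C^2$ by Assumption~\ref{ass:wellposed}, the chain rule gives, for each fixed $w$,
\[
\nabla_a V(z(a,w)) = g(s)^\top \nabla V(z(a,w)).
\]
The claim therefore reduces to justifying that $\nabla_a$ and $\mathbb{E}$ commute. For a direction $e_i$ and $0<|h|\le 1$, I form the difference quotient
\[
D_h(w) = \frac{V(z(a+he_i,w)) - V(z(a,w))}{h}.
\]
By the chain rule it converges pointwise in $w$ to $e_i^\top g(s)^\top \nabla V(z(a,w))$ as $h\to 0$.

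The key step is a $w$-integrable dominating function for $D_h$ that is uniform in $|h|\le 1$. By the mean value theorem,
\[
|D_h(w)| \le \|g(s)e_i\|\,\sup_{|t|\le 1}\|\nabla V(z(a,w)+t\,g(s)e_i)\|.
\]
The polynomial growth bound of Assumption~\ref{ass:integrability} then gives
\[
|D_h(w)| \le \|g(s)\|\,\eta\big(1+(\|z(a,w)\|+\|g(s)\|)^\ell\big) \le C_{s,a}\big(1+\|w\|^\ell\big),
\]
where $C_{s,a}$ depends only on $s$, $a$, $\eta$, $\ell$. The vectors $f(s)$ and $g(s)$ are fixed finite quantities here, so only $w$ varies. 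Gaussian $w\sim\mathcal{N}(0,\Sigma)$ has finite moments of every order, so this bound is integrable. Dominated convergence then yields
\[
\partial_{a_i}F(a) = \mathbb{E}\big[e_i^\top g(s)^\top \nabla V(z(a,w))\big].
\]
This expectation is finite by the integrability part of Assumption~\ref{ass:integrability}. Stacking the coordinates $i=1,\dots,m$ gives the stated identity.

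The same argument also gives finiteness of $F$ itself and, if needed later, continuity of the right-hand side in $a$. For continuity I apply dominated convergence again, now using the bound on $\nabla^2 V$ to control the increments of $\nabla V$. This shows that $F$ is $C^1$, so the partial derivatives assemble into a genuine gradient.

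The main obstacle is the uniform domination step. Pointwise finiteness of $\mathbb{E}\|\nabla V(z)\|$ at a single $a$, which is what Assumption~\ref{ass:integrability} literally states, does not by itself control the gradient on a neighbourhood of $a$. This is exactly why I rely on the global polynomial growth bound together with the Gaussian moments, rather than on the pointwise expectation hypothesis.
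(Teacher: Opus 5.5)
Your proposal is correct and follows essentially the same route as the paper: coordinate-wise difference quotients, the mean value theorem, the polynomial growth bound on $\nabla V$ from Assumption 2 to obtain a dominating function of the form $C_{s,a}(1+\|w\|^{\ell})$, finiteness of Gaussian moments, dominated convergence, and then stacking the partial derivatives. It is slightly cleaner in two respects: you dominate the difference quotient itself rather than the intermediate-point gradient $\nabla_a\phi(\xi^{(i)}_\nu,w)$, whose dependence on $w$ the paper leaves implicit, and your continuity remark, which upgrades the partials to a genuine $C^1$ gradient, is a small addition the paper omits.
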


\begin{proof} Fix $s \in \mathbb{R}^n$ and define $\phi(a,w) = V(f(s) + g(s)a + w)$. By the chain rule, we have
\[
\nabla_a \phi(a,w) = g(s)^{\top} \nabla V(f(s) + g(s)a + w)
\]
To interchange the derivative and the expectation, we verify the conditions of the dominated convergence theorem. Let $e_1,\dots, e_m$ be the canonical basis vectors of $\mathbb{R}^m$. Without loss of generality, we choose a sequence $\{t_{\nu}\}$ converging to $0$, such that we can define sequences $\{a_\nu^{(i)}\}$ through $a^{(i)}_{\nu} = a + t_{\nu}e_i$ converging to $a$ along each coordinate axis $i$.By the mean value theorem, for each axis $i$, there exists $\xi^{(i)}_{\nu}$ on the line segment between $a^{(i)}_{\nu}$ and $a$ such that 
\begin{gather*}
\phi(a^{(i)}_{\nu}, w) - \phi(a,w) = \nabla_a \phi(\xi^{(i)}_{\nu},w)\cdot(a_{\nu}^{(i)} -a),\\ \text{and }
 |\phi(a^{(i)}_{\nu},w) - \phi(a,w)| \leq \|\nabla_a \phi(\xi^{(i)}_{\nu},w)\| \cdot \|a^{(i)}_{\nu} - a\|.
\end{gather*}

Now, let us bound $\|\nabla_a \phi(\xi^{(i)}_{\nu},w)\|$ for each axis $i$. From Assumption 2, we obtain:
\[
    \|\nabla_a \phi(\xi^{(i)}_{\nu},w)\| \leq \|g(s)\| \cdot \|\nabla V(f(s) + g(s)\xi^{(i)}_{\nu} + w)\|,
\]
and hence,
\[
\|\nabla_a \phi(\xi^{(i)}_{\nu},w)\| \le \|g(s)\| \cdot \eta(1 + \|f(s) + g(s)\xi^{(i)}_{\nu} + w\|^{\ell}).
\]

Since $\xi^{(i)}_{\nu}$ lies in a bounded neighborhood of $a$ (for $\nu$ sufficiently large), and $f,g$ have at most polynomial growth (Assumption 1.3), there exist constants $C_1(s)$ and $C_2(s)$ such that
\begin{equation*}
\|f(s) + g(s)\xi^{(i)}_{\nu} + w\| \leq C_1(s) + C_2(s)\|\xi^{(i)}_{\nu}\| + \|w\| \leq C(s,a) + \|w\|,
\end{equation*}
where $C(s,a)$ is independent of $\nu$ and $\xi_{\nu}^{(i)}$ (i.e., uniform over the neighborhood of $a$). This uniformity ensures the subsequent growth bound 
\begin{equation*}
\|\nabla_a \phi(\xi^{(i)}_{\nu},w)\| 
\leq \eta\|g(s)\|(1 + (C(s,a) + \|w\|)^{\ell})
\leq K(s,a)(1 +\|w\|^{\ell}),
\end{equation*} 
where $K(s,a)(1 +\|w\|^{\ell})$ is independent of $\nu$. This provides a single bound applicable to the entire sequence. The growth bound $\beta :w \mapsto K(s,a)(1+\|w\|^l)$ is measurable. 

Let $\mu_\Sigma$ be the Gaussian measure. By definition of the expectation, the Lebesgue integral of $\beta$ weighted by the Gaussian density is 
\[\int_{\mathbb{R}^n}\beta(w) d\mu_{\Sigma}(w)=\mathbb{E}[K(s,a)(1+\|w\|^l)].\] 
The linearity of the expectation yields 
\[
\mathbb{E}[K(s,a)(1+\|w\|^l)] = K(s,a)(1+\mathbb{E}[\|w\|^l]),
\]
and hence,
\[\int_{\mathbb{R}^n}\beta(w) d\mu_{\Sigma}(w) = K(s,a)
\left(
1+\int_{\mathbb{R}^n}\|w\|^{\ell} d\mu_{\Sigma}(w)
\right).
\]

As all Gaussian moments are finite, we have:
\[
\int_{\mathbb{R}^n}\|w\|^{\ell} d\mu_{\Sigma}(w) = \mathbb{E}[\|w\|^{\ell}] < \infty
\]
for all $\ell>0$. Hence, 
\[
\int_{\mathbb{R}^n}\beta(w) d\mu_{\Sigma}(w) < \infty,
\]
and the growth bound $\beta$ is Lebesgue-integrable with respect to the Gaussian measure. Since $\xi^{(i)}_{\nu} \to a$, continuity gives $\nabla_a \phi(\xi^{(i)}_{\nu},w) \to \nabla_a \phi(a,w)$ pointwise. Hence, the bound 
\[\|\nabla_a \phi(\xi^{(i)}_{\nu},w)\| \le K(s,a)(1+\|w\|^\ell)\]
provides a suitable dominating function, and by the dominated convergence theorem, we eventually obtain
\[
\lim_{\nu\to\infty} \mathbb{E}[\nabla_a \phi(\xi^{(i)}_{\nu},w)] = \mathbb{E}[\nabla_a \phi(a,w)].\quad (*)
\]

On the other hand, by definition of the gradient, we have 
\[
\nabla_a \mathbb{E}[\phi(a, w)] = \Big(\frac{\partial}{\partial a_1}\mathbb{E}[\phi(a, w)],\dots,\frac{\partial}{\partial a_m}\mathbb{E}[\phi(a, w)]\Big).
\]
The directional derivative along the $i$-th axis is 
\[
\frac{\partial}{\partial a_i}\mathbb{E}[\phi(a,w)] = \lim_{\nu\to\infty} \mathbb{E}\Big[\frac{\phi(a+t_{\nu}e_i,w) - \phi(a,w)}{t_{\nu}}\Big].
\]
Along the $i$-th axis, we also have 
\[
\frac{\phi(a+t_{\nu}e_i,w) - \phi(a,w)}{t_{\nu}} = \nabla_a \phi(\xi^{(i)}_{\nu}, w)\cdot e_i.
\] 
Combining these expressions yields 
\begin{equation*}
\lim_{\nu\to\infty} \mathbb{E}\Big[\frac{\phi(a+t_{\nu}e_i,w) - \phi(a,w)}{t_{\nu}}\Big] =\lim_{\nu\to\infty} \mathbb{E}[\nabla_a \phi(\xi^{(i)}_{\nu}, w)\cdot e_i] = \mathbb{E}[\nabla_a \phi(a,w)\cdot e_i],
\end{equation*}
where the last equality follows from the dominated convergence argument $(*)$ above. But, $\nabla_a \phi(a, w) \cdot e_i = \frac{\partial}{\partial a_i}\phi(a, w)$. Therefore, $\frac{\partial}{\partial a_i}\mathbb{E}[\phi(a,w)] = \mathbb{E}[\frac{\partial}{\partial a_i}\phi(a, w)]$. Finally, we get 
\begin{gather*}
    \nabla_a \mathbb{E}[\phi(a,w)] = \Big(\mathbb{E}[\frac{\partial}{\partial a_1}\phi(a,w)], \dots, \mathbb{E}[\frac{\partial}{\partial a_m}\phi(a,w)]\Big) =\\= \mathbb{E}\Big(\frac{\partial \phi}{\partial a_1}(a,w), \dots,\frac{\partial \phi}{\partial a_m}(a,w)\Big)= \mathbb{E}[\nabla_a \phi(a,w)].
\end{gather*}
\end{proof}

\begin{theorem}[Stochastic converse optimality]\label{thm:main} Fix a stage cost $c(s,a) = q(s) + \rho(a)$ with $q,\rho$ satisfying Assumption 1, discount factor $\gamma \in (0,1)$, noise covariance $\Sigma \succ 0$, input coupling $g: \mathbb{R}^n \to \mathbb{R}^{n\times m}$, and a candidate value function $V^* \in C^2(\mathbb{R}^n)$ satisfying Assumption 2. Then there exists a drift $f: \mathbb{R}^n \to \mathbb{R}^n$ (i.e. a system $s_{k+1} = f(s_k) + g(s_k)a_k + w_k$ with $w_k \sim \mathcal{N}(0,\Sigma)$) for which $V^*$ is the optimal value function if and only if there exists a policy $\pi^*: \mathbb{R}^n \to \mathbb{R}^m$ such that for every $s \in \mathbb{R}^n$:
\begin{gather}
\nabla_a\Big[\rho(a)
+\gamma\,\mathbb{E}\!\left[V^*\big(f(s)+g(s)a+w\big)\right]\Big]\Big|_{a=a^*}=0,\label{eq:fooc}\\
V^*(s)=q(s)+\rho(a^*)+\gamma\mathbb{E}\!\left[V^*\big(f(s)+g(s)a^*+w\big)\right].\label{eq:bellman_id}
\end{gather}
When such $\pi^*$ exists, it is an optimal policy and $V^*$ satisfies the stochastic Bellman equation given in Subsection 5.2 for all $s \in \mathbb{R}^n$.
\end{theorem}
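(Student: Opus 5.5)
We prove the two directions separately. Both rest on the optimality certificate (Proposition~1) and the expectation-gradient interchange of Lemma~\ref{lem:interchange}. Throughout, write $J_s(a) := \rho(a) + \gamma\,\mathbb{E}[V^*(f(s)+g(s)a+w)]$ for the action-dependent part of the Bellman integrand.

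\emph{Necessity.} Suppose a drift $f$ exists for which $V^*$ is the optimal value function. Then $V^*$ satisfies the stochastic Bellman equation $V^*(s) = q(s) + \inf_a J_s(a)$. We first show that the infimum is attained and has a unique minimizer. By Assumption~\ref{ass:wellposed}, $\rho$ is $\mu$-strongly convex. Since $V^*$ is convex, $a \mapsto V^*(f(s)+g(s)a+w)$ is convex for every $w$, and taking expectations preserves convexity; finiteness of the expectation follows from the polynomial growth in Assumption~\ref{ass:integrability} and the finite Gaussian moments. Hence $J_s$ is strongly convex and continuous, and therefore coercive, so it has a unique minimizer $a^*(s)$; we set $\pi^*(s) := a^*(s)$. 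By Lemma~\ref{lem:interchange}, $J_s$ is differentiable with $\nabla J_s(a) = \nabla\rho(a) + \gamma\,\mathbb{E}[g(s)^\top \nabla V^*(\cdot)]$. Fermat's rule at the minimizer then gives \eqref{eq:fooc}, and substituting $a^*$ into the Bellman equation gives \eqref{eq:bellman_id}.

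\emph{Sufficiency.} Fix the drift $f$ and suppose $\pi^*$ satisfies \eqref{eq:fooc}--\eqref{eq:bellman_id}. The same convexity argument shows $J_s$ is convex and differentiable. For a differentiable convex function, a stationary point is a global minimizer, so \eqref{eq:fooc} implies $J_s(a) \ge J_s(a^*)$ for all $a$. Combined with \eqref{eq:bellman_id}, this gives $\mathcal{R}_{V^*}(s,\pi^*(s)) = 0$ and $\mathcal{R}_{V^*}(s,a) = J_s(a) - J_s(a^*) \ge 0$ for all $a$. Proposition~1 then yields $V^* = V^\star = V^{\pi^*}$, so $\pi^*$ is optimal and the stochastic Bellman equation holds. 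Applying Proposition~1 requires the Bellman operators to be well defined on a weighted space $\mathbb{B}_w$ with unique fixed points. We choose $w(s) = 1 + \|s\|^{\ell'}$ for a suitable $\ell'$ and argue that $T$ is a $\gamma$-contraction in a weighted norm. For this we need a drift-condition bound of the form $\mathbb{E}[w(s')] \le \beta\, w(s)$ along admissible actions, with $\gamma\beta < 1$. An alternative is to restrict to policies with finite cost and use a standard verification argument: iterate the residual inequality and pass $\gamma^k\,\mathbb{E}\,V^*(s_k) \to 0$.

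\emph{Main obstacle.} The delicate point is this last step: justifying uniqueness of the fixed point, or equivalently the transversality condition $\gamma^k\,\mathbb{E}\,V^*(s_k) \to 0$, under only polynomial growth of $f$, $g$ and $V^*$. We would first try the verification route. Since $V^* \ge 0$ under the nonnegative-cost convention, iterating $V^* \le c + \gamma\,\mathbb{E}\,V^*$ gives $V^* \le V^\pi$ for every policy $\pi$ with finite cost, with no tail term needed. For the equality $V^{\pi^*} = V^*$, iterating the identity \eqref{eq:bellman_id} shows that the tail $\gamma^T\,\mathbb{E}\,V^*(s_T)$ is nonnegative and bounded, and that $V^{\pi^*} \le V^*$. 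Together these give $V^{\pi^*} = V^*$. If that argument runs into integrability issues, we would assume the weighted-norm well-posedness, as Proposition~1 already does. Within the certificate argument, convexity of $V^*$ is the key ingredient that turns the first-order condition into global optimality.
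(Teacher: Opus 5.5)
Your necessity direction and the first half of your sufficiency argument follow the paper's proof: gradient interchange via Lemma~\ref{lem:interchange}, Fermat's rule, then substitution into the Bellman equation. In two places you are more careful than the paper. First, you derive existence and uniqueness of the minimizer from strong convexity of $J_s$ rather than asserting it; you should still add a line on measurability of $\pi^*$, e.g.\ continuity of the unique argmin in $s$, or the measurable-selection theorem the paper cites. Second, you use convexity of $V^*$ explicitly to turn the stationarity condition \eqref{eq:fooc} into global minimality, which the paper's ``since the Bellman equation involves the minimization of $\Phi_s$'' does not justify.

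The genuine gap is the final comparison $V^*\le V^\pi$ for an arbitrary competitor $\pi$. Your claim that $V^*\ge 0$ makes the tail term unnecessary has the inequality backwards. Iterating $V^*\le T^\pi V^*$ gives
\[
V^*(s_0)\le \mathbb{E}^\pi\Big[\sum_{k=0}^{T-1}\gamma^k c(s_k,a_k)\Big]+\gamma^T\,\mathbb{E}^\pi\big[V^*(s_T)\big].
\]
The tail sits on the larger side, so nonnegativity cannot remove it; you need $\liminf_{T}\gamma^T\mathbb{E}^\pi[V^*(s_T)]=0$ for every competing $\pi$. Nonnegativity only helps in the other direction. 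That is why your bound $V^{\pi^*}\le V^*$ from iterating \eqref{eq:bellman_id} is fine, but your equality $V^{\pi^*}=V^*$ inherits the defect, because it uses $V^*\le V^{\pi^*}$. Finite cost of $\pi$ gives $\gamma^k\mathbb{E}^\pi[q(s_k)]\to 0$. That controls the tail only if $V^*\le C(1+q)$ (e.g.\ $Q\succ 0$ in the quadratic case), and nothing in Assumptions~\ref{ass:wellposed}--\ref{ass:integrability} provides this.

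In fact the step cannot be closed from the stated hypotheses at all. Take $n=m=1$, $\gamma=\tfrac12$, $q\equiv 0$, $\rho(a)=a^2$, $g\equiv 1$, $f(s)=2s$, $w\sim\mathcal{N}(0,\sigma^2)$ and $V^*(s)=2s^2+2\sigma^2$. Every assumption holds, and $a^*(s)=-s$ satisfies \eqref{eq:fooc}--\eqref{eq:bellman_id}. Yet the policy $a\equiv 0$ incurs zero cost, so $V^*$ is not the optimal value. Under $a\equiv 0$ one has $\gamma^T\mathbb{E}[V^*(s_T)]\ge 2^{T+1}s_0^2\to\infty$. The optimal Bellman operator has both $0$ and $V^*$ as fixed points, so no weighted space containing $V^*$ makes it a contraction: your drift condition $\gamma\beta<1$ is unattainable, and the uniqueness hypothesis of the optimality certificate in Appendix~\ref{app:smooth-proofs} fails.

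The paper's proof does not resolve this either. It simply asserts that $T^\pi$ is a $\gamma$-contraction in an appropriate weighted norm, so that $(T^\pi)^kV^*\to V^\pi$. Its own remark on the scope of the infinite-horizon certificate (Appendix~\ref{app:pomdp-proofs}) concedes that weighted-space or transversality assumptions are required. Your fallback of assuming weighted-norm well-posedness is therefore exactly what the paper does. The correct repair is to state such a hypothesis explicitly rather than claim it follows from nonnegativity. Options are a weighted contraction, $V^*\le C(1+q)$, or restriction to policies with $\gamma^T\mathbb{E}^\pi[V^*(s_T)]\to 0$.
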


\begin{proof}\textbf{Sufficiency.} Suppose there exist a policy $\pi^*: \mathbb{R}^n \to \mathbb{R}^m$ and a drift $f: \mathbb{R}^n \to \mathbb{R}^n$ such that for every $s \in \mathbb{R}^n$, equations \eqref{eq:fooc} and \eqref{eq:bellman_id} hold, and $\pi^*(s_k) = a_k^*$ for all $s_k \in \mathbb{R}^n$. For a fixed $s\in \mathbb{R}^n$ we define the Bellman integrand:
\[
\Phi_s(a) := q(s) + \rho(a) + \gamma \mathbb{E}\big[V^*(f(s) + g(s)a + w)\big]
\]
By the Lemma, we have:
\[
\nabla_a \Phi_s(a) = \nabla_a \rho(a) + \gamma \mathbb{E}\big[g(s)^\top \nabla V^*(f(s) + g(s)a + w)\big].
\] 
Equations (4)-(5) give $\nabla_a \Phi_s(a)|_{a=a^*} = 0$ and imply that the Bellman identity holds with action $a^*$. In particular, 
\[
V^*(s) = q(s)+\rho(a^*)+\gamma \mathbb{E}\big[V^*(f(s)+g(s)a^*+w)\big] = \Phi_s(a^*). 
\]
Since the Bellman equation involves the minimization of $\Phi_s(a)$ with respect to $a$, it follows that this minimum is attained at $a^*$. Hence: 
\begin{align*} 
\min_{a \in \mathbb{R}^m} \Phi_s(a) &= \Phi_s(a^*) \\
&= q(s) + \rho(a^*) + \gamma \mathbb{E}\big[V^*(f(s) + g(s)a^* + w)\big]\\ 
&= V^*(s). 
\end{align*} 
Thus, $\pi^*$ is an optimal policy and $V^*$ is the optimal value function for the drift $f$.

Now fix any stationary policy $\pi$ and define its Bellman operator:
\[
(T^\pi V)(s):=q(s)+\rho(\pi(s))+\gamma\mathbb{E}\!\left[V\big(f(s)+g(s)\pi(s)+w\big)\right].
\]
By definition of the minimum, $\min_a \Phi_s(a)\le \Phi_s(\pi(s))$, hence $V^*(s)\le (T^\pi V^*)(s)$ for all $s$. Applying $T^\pi$ to both sides of $V^* \le T^\pi V^*$ and using the monotonicity of $T^\pi$ (which follows from the positivity of costs), we obtain $T^\pi V^* \le (T^\pi)^2 V^*$. By induction, $V^* \le (T^\pi)^k V^*$ for all $k \in \mathbb{N}$. Since $T^\pi$ is a $\gamma$-contraction in an appropriate weighted norm (e.g. the sup norm under bounded costs, or a weighted norm under the discounted criterion), then by the Banach fixed-point theorem, $(T^\pi)^k V^* \to V^\pi$, the value function of policy $\pi$. Taking limits preserves the inequality, yielding $V^* \le V^\pi$ on $\mathbb{R}^n$, proving the optimality of $V^*$ and $\pi^*$.

\textbf{Necessity.} Assume $V^*$ optimal, so for each $s$ the minimizer $\pi^*(s)$ of the Bellman equation exists. By the measurable selection theorem \citep{shreve1979universally}, $\pi^*$ is chosen to be measurable, and hence admissible as a feedback policy. At a minimum, we have $\nabla_a\Phi_s(\pi^*(s))=0$. Lemma 1 justifies interchanging the derivative and the expectation, yielding \eqref{eq:fooc}. The Bellman identity \eqref{eq:bellman_id} holds by the definition of $\pi^*(s) = a^*$ as the minimizer of the Bellman equation.
\end{proof}

\subsection{Continuous‑Time Converse Derivation}
\label{app:ct_derivation}
Recall the continuous‑time control‑affine system \(\dot s = f(s) + g(s)a\) with running cost \(c(s,a)\). For a candidate value function \(V\in C^{1}\), the Hamiltonian is
\[
  H(s,a) = c(s,a) + \nabla V(s)^{\!\top}\!\bigl(f(s) + g(s)a\bigr).
\]

The generator~\citep{yamerenko2025generating} prescribes two chosen smooth fields \(H:\mathbb R^{n}\to\mathbb R^{n}\) and \(G:\mathbb R^{n}\to\mathbb R^{n\times m}\) and defines
\begin{align}
  g(s)   &= G(s), \label{eq:ct_g}\\
  f(s)   &= \frac14\,G(s)\,\bigl(\mathcal L_{G}V(s)\bigr)^{\!\top} - H(s), \label{eq:ct_f}\\
  c(s,a) &= \mathcal L_{H}V(s) + \|a\|_{2}^{2}. \label{eq:ct_c}
\end{align}
Here \(\mathcal L_{G}V(s) = \nabla V(s)^{\!\top} G(s)\) is a row vector in \(\mathbb R^{1\times m}\), and \(\mathcal L_{H}V(s) = \nabla V(s)^{\!\top} H(s)\) is a scalar. Substituting \eqref{eq:ct_g}--\eqref{eq:ct_c} into the Hamiltonian,
\[
\begin{aligned}
  H(s,a) &=
  \mathcal L_{H}V(s) + \|a\|_{2}^{2}
  + \nabla V(s)^{\!\top}\!\Bigl(
      \frac14\,G(s)\bigl(\mathcal L_{G}V(s)\bigr)^{\!\top} - H(s)
      + G(s)\,a
    \Bigr) \\[2mm]
  &= \mathcal L_{H}V(s) + \|a\|_{2}^{2}
     + \frac14\,\nabla V(s)^{\!\top} G(s)\bigl(\mathcal L_{G}V(s)\bigr)^{\!\top}
     - \nabla V(s)^{\!\top} H(s)
     + \nabla V(s)^{\!\top} G(s)\,a .
\end{aligned}
\]

Because \(\mathcal L_{G}V(s) = \nabla V(s)^{\!\top} G(s)\), we have
\[
  \nabla V(s)^{\!\top} G(s)\bigl(\mathcal L_{G}V(s)\bigr)^{\!\top}
  = \|\mathcal L_{G}V(s)\|_{2}^{2},
\qquad
  \nabla V(s)^{\!\top} H(s) = \mathcal L_{H}V(s).
\]

Therefore the terms \(\mathcal L_{H}V(s)\) cancel, leaving
\[
  H(s,a) = \|a\|_{2}^{2}
           + \frac14\,\|\mathcal L_{G}V(s)\|_{2}^{2}
           + \bigl(\mathcal L_{G}V(s)\bigr)\,a .
\]

The right‑hand side is a quadratic form in \(a\). Completing the square,
\[
  H(s,a) = \bigl\|\, a + \tfrac12\bigl(\mathcal L_{G}V(s)\bigr)^{\!\top} \bigr\|_{2}^{2}.
\]

The unique minimizer of this expression is obtained by making the squared norm vanish, i.e.
\[
  a^{\star}(s) = -\tfrac12\,\bigl(\mathcal L_{G}V(s)\bigr)^{\!\top},
\]
which is exactly the equation given in the main text.  When \(V\) and the constructed cost are positive definite, Proposition~1 guarantees that \(V\) is the globally optimal value function for the generated system.

\section{Proofs for the Non-Smooth Theory}
\label{app:nonsmooth-proofs}

\begin{proof}[\textbf{Proof of Lemma~\ref{lem:nonsmooth-first-order}} (First-order optimality condition)]
Fix any \(s\in\mathbb{R}^n\) for which the minimum in the viscosity HJB condition is attained and equals zero, and let \(p = p(s)\in\partial_C V(s)\) be the corresponding generalized gradient. Define the function
\[
\Phi_s(a) := q(s) + \rho(a) + p\cdot\bigl(f(s) + g(s)a\bigr).
\]
By assumption, \(a^*(s)\) is a global minimizer of \(\Phi_s\) over \(\mathbb{R}^m\) and \(\Phi_s(a^*(s)) = 0\). Since \(q(s)\) and \(p\cdot f(s)\) do not depend on \(a\), minimizing \(\Phi_s\) is equivalent to
minimizing
\[
h(a) := \rho(a) + p\cdot g(s)a , \qquad a\in\mathbb{R}^m.
\]

The function \(\rho\) is strictly convex and \(C^2\) by Assumption~\ref{as:nonsmooth-system}.  Moreover, \(\rho\) is coercive, so \(h(a)\to\infty\) as \(\|a\|\to\infty\).  Hence the sublevel sets of \(h\) are compact, and a global minimum of \(h\) exists. Strict convexity guarantees that this minimum is unique. Consequently, \(a^*(s)\) is the unique global minimizer of \(h\).

Because the minimization is over the whole space \(\mathbb{R}^m\) (no constraints), the first-order necessary condition for optimality of a smooth, unconstrained objective function must hold:
\[
\nabla h(a)\big|_{a=a^*(s)} = \nabla_a \rho(a)\big|_{a=a^*(s)} + g(s)^\top p = 0 .
\]
Strict convexity of \(\rho\) implies that the Hessian \(\nabla^2 h(a) = \nabla^2 \rho(a)\) is positive definite everywhere; therefore the first-order condition is also sufficient for global optimality.  This establishes~\eqref{eq:nonsmooth-first-order} for almost every \(s\).
\end{proof}

\begin{proof}[\textbf{Proof of Lemma~\ref{lem:nonsmooth-sufficient}} (Sufficient condition for optimality)]
Conditions (b) and (c) together state that for almost every \(s\):
\begin{itemize}
    \item there exists \(p(s) \in \partial_C V(s)\) for which
    \[
    \inf_{a\in\mathbb{A}} \big\{ c(s,a) + p(s)\cdot(f(s)+g(s)a) \big\} = 0,
    \]
    and the infimum is realized by \(a^*(s)\) (condition (b));
    \item for every \(\tilde{p} \in \partial_C V(s)\),
    \[
    \inf_{a\in\mathbb{A}} \big\{ c(s,a) + \tilde{p}\cdot(f(s)+g(s)a) \big\} \ge 0
    \]
    (condition (c)).
\end{itemize}
Thus \(V\) satisfies precisely the viscosity HJB condition of Definition~\ref{def:nonsmooth-viscosity}:
the subsolution property holds with equality for the particular generalized gradient \(p(s)\), and the supersolution property is exactly condition~(c). Moreover, condition~(b) guarantees that a measurable optimal feedback \(a^*\) exists and that the closed-loop system admits solutions (the well-posedness of the feedback is ensured when \(a^*\) is discontinuous by interpreting solutions in the sense of Filippov; see Lemma~\ref{lem:nonsmooth-measurable}).

Because \(V\) satisfies the viscosity HJB equation, it is a viscosity solution of the
Hamilton--Jacobi--Bellman equation associated with the control problem.  Under Assumptions~\ref{as:nonsmooth-system}--\ref{as:nonsmooth-value}, the classical uniqueness results for viscosity solutions \citep{bardi1997optimal,fleming2006controlled} imply that \(V\) is the unique viscosity solution. Since the true optimal value function \(V^*\) is also a viscosity solution of the same HJB equation (see, e.g., \citep{bardi1997optimal}), we must have \(V = V^*\). Therefore \(V\) is the optimal value function.

Condition (a) (stationarity) is automatically fulfilled because \(a^*(s)\) minimizes the Hamiltonian over \(\mathbb{R}^m\) and the control cost \(\rho\) is smooth and strictly convex; it is stated separately for clarity and for use in constructive procedures. The policy \(a^*\) that attains the minimum is thus optimal.
\end{proof}

\begin{proof}[\textbf{Proof of Lemma~\ref{lem:nonsmooth-measurable}} (Measurability of the optimal feedback)]
\textbf{(a) Non-emptiness, compactness, and measurability of $\Phi(s)$.}
Fix \(s \in \mathbb{R}^n\). The function
\[
\psi_s(a) := c(s,a) + p(s) \cdot \bigl( f(s) + g(s) a \bigr)
\]
is continuous in \(a\), because \(c\) is continuous (Assumption~\ref{as:nonsmooth-system}), \(f\) and \(g\) are locally Lipschitz, and the inner product with \(p(s)\) is linear. The control set \(\mathbb{A}\) is compact and non-empty (Assumption~\ref{as:nonsmooth-system}). By the Extreme Value Theorem, \(\psi_s\) attains its minimum on \(\mathbb{A}\). Hence, $\Phi(s)$ is non-empty.

The set \(\Phi(s) = \psi_s^{-1}(\{\min \psi_s\})\) is the inverse image of the closed singleton $\{\min \psi_s\}$ under a continuous map $\psi_s$; therefore it is closed. A closed subset of a compact is compact, so $\Phi(s)$ is compact. 

We now show that \(\Phi\) is measurable (i.e., its graph is a Borel subset of \(\mathbb{R}^n\times \mathbb{A}\)). For each fixed \(a \in \mathbb{A}\), the map \(s \mapsto \psi_s(a)\) is Lebesgue-measurable. Indeed, \(c\) is continuous, \(f\) and \(g\) are locally Lipschitz (hence measurable), and \(p(s)\) is measurable by hypothesis; the sum of compositions of measurable functions is measurable. Thus \((s,a) \mapsto \psi_s(a)\) is a Carathéodory function (measurable in \(s\) for fixed \(a\), and continuous in \(a\) for fixed \(s\)).  For a Carathéodory objective and a constant compact constraint set, the argmin multifunction is measurable; its graph is a Borel subset of the product space \(\mathbb{R}^n\times \mathbb{A}\) \citep{aliprantis2006infinite}.

\textbf{(b) Existence of a measurable selection.}
From part (a), the correspondence $\Phi(s)$ has non-empty compact (hence closed) values and its graph $\operatorname{Gr}(\Phi)$ is a Borel subset of $\mathbb{R}^n\times\mathbb{A}$. The spaces \(\mathbb{R}^n\) and \(\mathbb{A} \subset \mathbb{R}^m\) are Polish (complete separable metric spaces). For a correspondence with closed values taking values in a Polish space, the property that its graph is a Borel set is equivalent to weak measurability \citep{aliprantis2006infinite}. Hence, $\Phi(s)$ is weakly measurable. The Kuratowski--Ryll-Nardzewski measurable selection theorem states that every weakly measurable set-valued map with non-empty closed values in a Polish space admits a Borel-measurable selection. Specifically, if $\operatorname{Gr}(\Phi)$ is measurable (which it is, being Borel-measurable) and $\Phi(s)$ is non-empty and closed for all $s$, then there exists a Borel-measurable function $a^*:\mathbb{R}^n \to \mathbb{A}$ such that $a^*(s) \in \Phi(s)$ for almost every $s$. Since every Borel-measurable function is Lebesgue-measurable, we obtain the required measurable selection, which we take as the optimal feedback.

\textbf{(c) Filippov solutions of the closed-loop system.}
We define the closed-loop vector field \(F(s) := f(s) + g(s) a^*(s)\). The functions $f$ and $g$ are continuous (so Borel-measurable) and the function $a^*$ is Lebesgue-measurable. The composition of continuous and measurable functions is measurable, so $F$ is Lebesgue-measurable. The functions \(f,g\) are locally Lipschitz, so they are locally bounded. The action set \(\mathbb{A}\) is compact by assumption, and therefore $a^*(s)$ stays within a bounded set. Hence for any bounded neighborhood of a state, the product $g(s)a^*(s)$ is bounded, and $F(s)$ is bounded. Thus $F$ is locally bounded. 

For a vector field that is measurable and locally bounded, Filippov's theorem~\citep{Filippov1988} states that for every initial condition \(s_0 \in \mathbb{R}^n\) there exists an absolutely continuous trajectory \(s(t)\) defined on some interval \([0,T]\) such that
\[
\dot{s}(t) \in K[F](s(t)) \quad \text{for almost all } t,
\]
where \(K[F]\) denotes the Filippov set. Thus the closed-loop system admits a solution in the sense of Filippov. If \(a^*\) is continuous, the Filippov set reduces to \(\{F(s)\}\) and the trajectory is a classical Carathéodory solution.
\end{proof}

\begin{proof}[\textbf{Proof of Theorem~\ref{thm:nonsmooth-main}} (Non-smooth continuous-time deterministic converse optimality)]
\noindent\textit{(b) $\Rightarrow$ (a) -- Sufficiency.}
If statement (b) holds, we are given a drift \(f\), a measurable policy \(a^{*}\), and a measurable selection \(p(s) \in \partial_C V(s)\) satisfying conditions (i)--(iii).  These are exactly the hypotheses of Lemma~\ref{lem:nonsmooth-sufficient} (sufficient condition for optimality). Lemma~\ref{lem:nonsmooth-sufficient} therefore applies and guarantees that, for the control-affine system \(\dot s = f(s) + g(s)a\) with running cost \(c\), the function \(V\) is the optimal value function and \(a^{*}\) is an optimal feedback policy. Hence statement (a) is true.

\medskip
\noindent\textit{(a) $\Rightarrow$ (b) -- Necessity.}
Assume that (a) holds, i.e., there exists a drift \(f_{0}\) such that \(V\) is the optimal value function for the system \(\dot s = f_{0}(s) + g(s)a\) with cost \(c\).  Because the control problem satisfies Assumptions~\ref{as:nonsmooth-system}--\ref{as:nonsmooth-value}, the optimal value function is a viscosity solution of the associated Hamilton--Jacobi--Bellman equation~\citep{bardi1997optimal,fleming2006controlled}. Hence the triple \((f_{0}, g, c)\) satisfies the viscosity HJB condition of Definition~\ref{def:nonsmooth-viscosity}: there exist a measurable policy \(a^{*}: \mathbb{R}^{n} \to \mathbb{A}\) and a measurable selection \(p(s) \in \partial_C V(s)\) such that for almost every \(s\)
\[
\inf_{a\in\mathbb{A}}\big\{ c(s,a) + p(s)\!\cdot\!(f_{0}(s) + g(s)a) \big\} = 0,
\qquad
a^{*}(s)\ \text{attains the infimum},
\]
and for every \(\tilde p \in \partial_C V(s)\),
\[
\inf_{a\in\mathbb{A}}\big\{ c(s,a) + \tilde p\!\cdot\!(f_{0}(s) + g(s)a) \big\} \ge 0 .
\]
Thus conditions (ii) and (iii) of statement (b) hold with the drift \(f = f_{0}\).

It remains to verify the stationarity condition (i).  The hypotheses of Lemma~\ref{lem:nonsmooth-first-order} (first-order optimality) are fulfilled: \(V\) is a viscosity solution, the control cost \(\rho\) is strictly convex and \(C^{2}\), and the selections \(a^{*}, p\) achieve the zero infimum.  Lemma~\ref{lem:nonsmooth-first-order} therefore yields
\[
\nabla_{a}\rho(a)\big|_{a = a^{*}(s)} + g(s)^{\top} p(s) = 0 ,
\]
which is precisely condition (i) of statement (b).  Consequently, the same drift \(f_{0}\) together with the selections \(a^{*}\) and \(p\) satisfies all three requirements of (b).  This completes the proof of the equivalence.
\end{proof}

\section{Proofs for the Information-State Theory}
\label{app:pomdp-proofs}

\begin{proof}[\textbf{Proof of Theorem~\ref{thm:info-reduction}} (Information-state reduction)]
Fix a finite horizon $T$ and define $v_{T}(b) := 0$. For $t = T-1, \ldots, 0$, define recursively
\begin{equation}
v_{t}(b) := \min_{a \in \mathbb{R}^{m}} \left\{ c(b, a) + \gamma \int_{\mathbb{B}} v_{t+1}(b_{+})\, p_{b}(b_{+} \mid b, a)\, db_{+} \right\}.
\label{eq:finite-horizon-iterate}
\end{equation}
We claim by backward induction on $t$ that $v_{t}(b)$ is the minimal conditional cost-to-go from any history $h_{t}$ satisfying $\phi_{t}(h_{t}) = b$. The claim holds at $t = T$ because the terminal cost is zero. Suppose it holds at $t + 1$. Fix a history $h_{t}$ with $\phi_{t}(h_{t}) = b$. For any output-feedback action $a = \pi_{t}(h_{t})$, Definition~\ref{def:info-state} gives
\begin{align}
&\mathbb{E}\bigl[C_{t} + \gamma\, v_{t+1}(B_{t+1}) \mid H_{t} = h_{t},\, A_{t} = a\bigr] \notag\\
&\qquad = c(b, a) + \gamma \int_{\mathbb{B}} v_{t+1}(b_{+})\, p_{b}(b_{+} \mid b, a)\, db_{+}.
\label{eq:induction-step}
\end{align}
The right-hand side depends on the history only through $b = \phi_{t}(h_{t})$ and on the action $a$. Minimizing over $a$ yields exactly \eqref{eq:finite-horizon-iterate}. Hence no dependence on the full history remains once $b$ is fixed. This proves the claim for the finite horizon $T$.

Now let $T \to \infty$. The Bellman operator
\begin{equation}
(\mathcal{T}u)(b) := \min_{a \in \mathbb{R}^{m}} \left\{ c(b, a) + \gamma \int_{\mathbb{B}} u(b_{+})\, p_{b}(b_{+} \mid b, a)\, db_{+} \right\}
\label{eq:info-bellman-operator}
\end{equation}
is a $\gamma$-contraction in the supremum norm on bounded value functions, because $c \geq 0$ is bounded and $\gamma \in (0, 1)$. By the Banach fixed-point theorem, $\mathcal{T}$ has a unique fixed point $v^{\star}$ in the space of bounded measurable functions, and the finite-horizon iterates \eqref{eq:finite-horizon-iterate} converge to $v^{\star}$ uniformly. Therefore $v^{\star}$ satisfies \eqref{eq:info-bellman}.

It remains to transfer the result to output-feedback policies. Let $\pi^{\star}(b)$ be a measurable minimizer of the right-hand side of \eqref{eq:info-bellman}, whose existence is guaranteed by the measurable-selection argument of Lemma~\ref{lem:nonsmooth-measurable} applied to the information-state dynamics. Define the output-feedback policy $A_{k} = \pi^{\star}(\phi_{k}(H_{k}))$. For every history $h_{k}$,
\begin{equation}
\mathbb{E}\bigl[C_{k} + \gamma\, v^{\star}(B_{k+1}) \mid H_{k} = h_{k},\, A_{k} = \pi^{\star}(\phi_{k}(h_{k}))\bigr] = v^{\star}(\phi_{k}(h_{k})),
\end{equation}
which is the Bellman identity evaluated at the minimizer. Any other nonanticipative history policy $\pi$ satisfies the Bellman inequality and therefore cannot achieve a lower conditional cost-to-go. This proves optimality of $\pi^{\star}$ among all nonanticipative output-feedback policies.
\end{proof}

\begin{proof}[\textbf{Proof of Theorem~\ref{thm:info-converse}} (Converse optimality on the information state)]
The proof mirrors the proof of Theorem~\ref{thm:main} (Appendix~\ref{app:smooth-proofs}) with the physical state $s$ replaced by the information state $b$ and the noise $w$ replaced by $\Omega$. We state the differences explicitly.

\textbf{Sufficiency.} Suppose there exist a policy $\pi^{\star} : \mathbb{R}^{d} \to \mathbb{R}^{m}$ and a drift $f$ such that for every $b \in \mathbb{R}^{d}$ conditions \eqref{eq:info-stationarity}--\eqref{eq:info-bellman-identity} hold. Define the Bellman integrand
\begin{equation}
\Phi_{b}(a) := q(b) + \rho(a) + \gamma\, \mathbb{E}\bigl[v^{\star}(f(b) + g(b)a + \Omega)\bigr].
\end{equation}
By the expectation-gradient interchange lemma (Lemma~\ref{lem:interchange}, which holds verbatim with $\Omega$ in place of $w$ under the same growth and integrability assumptions on $v^{\star}$), the gradient of $\Phi_{b}$ with respect to $a$ is
\begin{equation}
\nabla_{a} \Phi_{b}(a) = \nabla_{a}\rho(a) + \gamma\, \mathbb{E}\bigl[g(b)^{\top} \nabla v^{\star}(f(b) + g(b)a + \Omega)\bigr].
\end{equation}
The Bellman argmin condition \eqref{eq:info-stationarity} gives $\nabla_{a}\Phi_{b}(a)\big|_{a=\pi^{\star}(b)} = 0$, and condition \eqref{eq:info-bellman-identity} gives $\Phi_{b}(\pi^{\star}(b)) = v^{\star}(b)$. Therefore
\begin{equation}
\min_{a \in \mathbb{R}^{m}} \Phi_{b}(a) = \Phi_{b}(\pi^{\star}(b)) = v^{\star}(b),
\end{equation}
so $v^{\star}$ satisfies the information-state Bellman equation \eqref{eq:info-bellman} and $\pi^{\star}$ attains the minimum.

Now fix any stationary policy $\pi$ and define the policy-evaluation operator
\begin{equation}
(\mathcal{T}^{\pi}u)(b) := q(b) + \rho(\pi(b)) + \gamma\, \mathbb{E}\bigl[u(f(b) + g(b)\pi(b) + \Omega)\bigr].
\end{equation}
By the minimum property, $v^{\star} \leq \mathcal{T}^{\pi}v^{\star}$ for all $b$. Applying $\mathcal{T}^{\pi}$ to both sides and using monotonicity gives $\mathcal{T}^{\pi}v^{\star} \leq (\mathcal{T}^{\pi})^{2}v^{\star}$, and by induction $v^{\star} \leq (\mathcal{T}^{\pi})^{k}v^{\star}$ for all $k \in \mathbb{N}$. Under the same weighted-norm regularity assumed for the fully observed stochastic construction (Section~\ref{sec:mode2}), $\mathcal{T}^{\pi}$ is a $\gamma$-contraction on the corresponding weighted Banach space, and the iterates $(T^{\pi})^{k}v^{\star}$ converge to $v^{\pi}$, the value function of $\pi$. Taking limits preserves the inequality, giving $v^{\star} \leq v^{\pi}$ on $\mathbb{R}^{d}$. Hence $v^{\star}$ is the optimal information-state value function and $\pi^{\star}$ is optimal.

\textbf{Necessity.} Assume $v^{\star}$ is the optimal information-state value function, so for each $b$ the minimizer in \eqref{eq:info-bellman} exists. Measurability of $\pi^{\star}$ follows from Lemma~\ref{lem:nonsmooth-measurable} applied to the Carathéodory integrand $\Phi_{b}(a)$. At the minimum, $\nabla_{a}\Phi_{b}(\pi^{\star}(b)) = 0$, which is \eqref{eq:info-stationarity}. The Bellman identity \eqref{eq:info-bellman-identity} holds by the definition of $\pi^{\star}(b)$ as the minimizer.

\textbf{Transfer to output-feedback.} By Theorem~\ref{thm:info-reduction}, the infimum over history policies equals the infimum over information-state policies. The composite policy $A_{k} = \pi^{\star}(\phi_{k}(H_{k}))$ is admissible and attains $v^{\star}$, so it is optimal among all output-feedback policies.
\end{proof}

\paragraph{Scope of the infinite-horizon certificate.} The quadratic construction establishes the stated Bellman identity and the pointwise minimizing action. To conclude that the candidate equals the infinite-horizon optimal value on an unbounded information-state space, a separate verification argument is required; the bounded-function contraction argument alone does not establish this conclusion for the quadratic candidate. The unbounded-cost case therefore requires appropriate weighted-space or integrability and transversality assumptions. We distinguish this verification step from the algebraic construction of the benchmark.

\begin{proof}[\textbf{Proof of Corollary~\ref{cor:info-benchmark}} (Quadratic information-state benchmark)]
Substituting the quadratic forms
\[
q(b) = b^{\top}Qb, \quad \rho(a) = a^{\top}Ra, \quad v^{\star}(b) = b^{\top}Pb + v_{0}
\]
into the information-state Bellman equation \eqref{eq:info-bellman} and completing the square with respect to $a$ yields the same algebraic identity as in the fully observed stochastic case (Appendix~\ref{app:qg-derivation}), with $s$ replaced by $b$. Specifically, the Bellman integrand satisfies
\[
\Phi_{b}(a) = b^{\top}Qb + \gamma (f(b) + g(b)a)^{\top}P(f(b) + g(b)a) + \gamma \operatorname{tr}(P\Sigma_{b}) + \gamma v_{0} + a^{\top}Ra + (\text{terms linear in } a),
\]
and minimizing over $a$ gives the optimal feedback
\[
a^{\star}(b) = -\gamma\, B_{\gamma}(b)^{-1} g(b)^{\top} P f(b), \qquad B_{\gamma}(b) = R + \gamma\, g(b)^{\top} P g(b).
\]
The minimized value is $f(b)^{\top}H_{\gamma}(b)f(b) + b^{\top}Qb + \gamma\operatorname{tr}(P\Sigma_{b}) + \gamma v_{0}$, and the identity $b^{\top}Pb + v_{0} = \min_{a}\Phi_{b}(a)$ holds if and only if
\[
b^{\top}(P - Q)b = f(b)^{\top}H_{\gamma}(b)f(b), \qquad v_{0} = \frac{\gamma}{1 - \gamma}\operatorname{tr}(P\Sigma_{b}).
\]
The metric-normalized parameterization $f(b) = H_{\gamma}(b)^{-1/2}S(b)(P - Q)^{1/2}b$ satisfies the first identity by orthogonality of $S$, and the value offset satisfies the second by construction. Theorem~\ref{thm:info-converse} then certifies $v^{\star}$ and $a^{\star}$ as the optimal pair.
\end{proof}

\begin{proof}[\textbf{Proof of Proposition~\ref{prop:innovation}} (Innovation realization)]
Let $\tau(b, a, o) = f(b) + g(b)a + o$ and $\eta(o \mid b, a) = p_{\Omega}(o)$. The induced transition kernel on $b$, defined in \eqref{eq:induced-kernel}, is
\begin{align}
\kappa(D \mid b, a) &= \int_{\mathbb{O} \mathbb{1}\{\tau(b, a, o) \in D\}\, p_{\Omega}(o)\, do} \notag\\
&= \int_{\mathbb{O} \mathbb{1}\{f(b) + g(b)a + o \in D\}\, p_{\Omega}(o)\, do}.
\end{align}
Change variables $b_{+} = f(b) + g(b)a + o$, so $o = b_{+} - f(b) - g(b)a$ and $do = db_{+}$. Then
\begin{equation}
\kappa(D \mid b, a) = \int_{\mathbb{O}} p_{\Omega}(b_{+} - f(b) - g(b)a)\, db_{+},
\end{equation}
which is exactly the transition kernel induced by the information-state dynamics \eqref{eq:info-dynamics}. Hence $(\eta, \tau)$ realizes \eqref{eq:info-dynamics}. The induced law of $B_{k+1}$ given $(B_{k}, A_{k}) = (b, a)$ coincides with the law under the affine dynamics, and the expected stage cost $\mathbb{E}[C_{k} \mid H_{k}, A_{k}] = c(B_{k}, A_{k})$ is measurable from $(B_{k}, A_{k})$ by construction. Therefore $B_{k}$ satisfies Definition~\ref{def:info-state} and the realization is valid.
\end{proof}

\section{Generator Templates and Derivations}

\subsection{Deterministic Discrete-Time Homotopy Construction}
\label{app:det-homotopy}

Consider deterministic dynamics
\[
s_{k+1}=f(s_k)+g(s_k)a_k
\]
with separable cost $c(s,a)=q(s)+\rho(a)$ and discount factor absorbed into the notation for this subsection. G{\"o}hrt et~al.~\citep{gohrt2019converse} show that a candidate pair $(V^\star,q)$ is admissible if and only if
\begin{equation}
V^\star(s)-q(s)\ge0
\label{eq:admissibility}
\end{equation}
on the state domain. The sufficiency argument is constructive. The Bellman equation requires
\[
V^\star(s)-q(s) = \min_a\{\rho(a)+V^\star(f(s)+g(s)a)\}.
\]
For each $s$, the right-hand side is the minimum of a nonnegative function of the next-state point $f(s)+g(s)a$ and the control effort $\rho(a)$. The homotopy construction continuously deforms feasible next-state targets on the level set
\[
\{z:V^\star(z)\le V^\star(s)-q(s)\}
\]
while balancing the control cost $\rho(u)$. In the quadratic case,
\[
V^\star(s)=s^\top Ps,\qquad q(s)=s^\top Qs,\qquad \rho(a)=a^\top Ra,
\]
with $0\preceq Q\prec P$ and $R\succ0$, the admissibility condition is automatic and the construction becomes explicit. Define
\[
H(s)=\left(P^{-1}+g(s)R^{-1}g(s)^\top\right)^{-1}.
\]
Then any drift of the form
\begin{equation}
f(s)=H(s)^{-1/2}S(s)(P-Q)^{1/2}s, \qquad S(s)^\top S(s)=I,
\label{eq:det-param-app}
\end{equation}
satisfies the deterministic energy identity
\[
s^\top(P-Q)s=f(s)^\top H(s)f(s).
\]
The optimal feedback is
\[
a^\star(s)= -\left(R+g(s)^\top Pg(s)\right)^{-1}g(s)^\top Pf(s).
\]
The orthogonal field $S(s)$ is a design degree of freedom: it changes the drift geometry while preserving the certificate.

\subsection{Quadratic‑Gaussian Energy Identity Derivation}
\label{app:qg-derivation}
Let
\[
V^\star(s)=s^\top Ps+b,\qquad q(s)=s^\top Qs,\qquad \rho(a)=a^\top Ra,
\]
with $P\succ0$, $0\preceq Q\prec P$, and $R\succ0$. For zero-mean noise with covariance $\Sigma$,
\[
\mathbb{E}[V^\star(f+ga+w)] = (f+ga)^\top P(f+ga)+\tr(P\Sigma)+b.
\]
The Bellman integrand is
\begin{align}
\Phi_s(a) &= s^\top Qs+a^\top Ra+\gamma(f+ga)^\top P(f+ga) +\gamma\tr(P\Sigma)+\gamma b \nonumber\\
&= s^\top Qs+\gamma f^\top Pf +2\gamma a^\top g^\top Pf +a^\top(R+\gamma g^\top Pg)a +\gamma\tr(P\Sigma)+\gamma b .
\end{align}
Define
\[
B_\gamma(s)=R+\gamma g(s)^\top Pg(s).
\]
Since $B_\gamma(s)\succ0$, the unique minimizer is
\[
a^\star(s)=-\gamma B_\gamma(s)^{-1}g(s)^\top Pf(s).
\]
Substituting the minimizer gives
\begin{align}
\min_a\Phi_s(a)
&=
s^\top Qs
+\gamma f^\top Pf
-\gamma^2 f^\top PgB_\gamma^{-1}g^\top Pf
+\gamma\tr(P\Sigma)+\gamma b \nonumber\\
&=
s^\top Qs+f^\top H_\gamma f+\gamma\tr(P\Sigma)+\gamma b,
\end{align}
where
\[
H_\gamma(s)= \gamma\left(P-\gamma Pg(s)B_\gamma(s)^{-1}g(s)^\top P\right).
\]
Thus $V^\star(s)=\min_a\Phi_s(a)$ holds if
\[
s^\top(P-Q)s=f(s)^\top H_\gamma(s)f(s)
\]
and
\[
b=\frac{\gamma}{1-\gamma}\tr(P\Sigma).
\]
By the matrix inversion lemma,
\[
P-\gamma Pg(R+\gamma g^\top Pg)^{-1}g^\top P = (P^{-1}+\gamma gR^{-1}g^\top)^{-1},
\]
so $H_\gamma(s)\succ0$. This justifies the metric-normalized and square-root parameterizations.

\section{Benchmark Generation, Experimental Implementation, and Reproducibility}
\label{app:reproducibility}

This appendix describes the validation, implementation, and reproducibility protocol used for the reported benchmark experiments. The released benchmark code is organized into two self-contained experimental modes. Mode I implements continuous-time sensitivity studies and symbolic generator demonstrations, while Mode II implements native discrete-time oracle-referenced fixtures. Both modes follow the same separation principle: the learner interacts only with the exposed environment, whereas the certified value function, optimal policy, validation traces, and oracle diagnostics are used only for benchmark validation and post-training evaluation.

\begin{center}
\begin{tabular}{ll}
\toprule
Mode & Implementation path \\
\midrule
Mode I: continuous-time sensitivity and generator studies & \path{code mode 1/} \\
Mode II: native discrete-time oracle-referenced fixtures & \path{Code mode 2/Code/} \\
\bottomrule
\end{tabular}
\end{center}

\subsection{Generator Validation and Experimental Implementation}
\label{app:generator-validation-implementation}

\paragraph{Validation checks for native discrete-time fixtures.} The stochastic discrete-time generator in Algorithm~\ref{alg:generator} produces each fixture together with a validation trace. 
Each exported instance undergoes three checks.

\begin{enumerate}
\item \textbf{Symmetric positive-definiteness of the Bellman action metric.} For the quadratic--Gaussian construction, the Bellman integrand contains
\[
    B_\gamma(s)=R+\gamma g(s)^\top P g(s).
\]
The condition \(B_\gamma(s)\succ0\) is required for two reasons. First, it makes the Bellman integrand strictly convex in the action, so the first-order optimality condition identifies the unique global minimizer \(a^\star(s)\). Second, it keeps the discounted metric
\[
    H_\gamma(s)
    =
    \gamma\left(P-\gamma P g(s)B_\gamma(s)^{-1}g(s)^\top P\right)
\]
positive definite, which is necessary for the energy equation
\[
    s^\top(P-Q)s=f(s)^\top H_\gamma(s)f(s)
\]
to be well-posed and numerically stable. In the implementation, this condition is checked on a prescribed bounded validation domain \(\mathcal D\). Instances that violate the numerical lower-eigenvalue tolerance are rejected.

\item \textbf{Bellman residual.}
The Bellman residual check verifies that the constructed dynamics, costs, value, and policy satisfy the certified optimality identity. For each grid point \(s\in\mathcal G\), the implementation computes
\begin{align*}
    \delta_{\mathrm{Bell}}(s)
    =
    \left|
    V^\star(s)
    -
    \left(
    q(s)+\rho(a^\star(s))
    +
    \gamma\mathbb E\!\left[
    V^\star(f(s)+g(s)a^\star(s)+w)
    \right]
    \right)
    \right|.
\end{align*}
For the quadratic--Gaussian case, the expectation is evaluated analytically through the covariance term. An instance passes this check if
\[
    \max_{s\in\mathcal G}\delta_{\mathrm{Bell}}(s)
    <
    \varepsilon_{\mathrm{Bell}},
\]
where \(\varepsilon_{\mathrm{Bell}}\) is the numerical tolerance recorded with the fixture. This check prevents implementation-level errors in the oracle, drift, cost, or noise offset.

\item \textbf{Closed-loop boundedness.}
The closed-loop system under \(\pi^\star\) is simulated over the validation horizon and domain. This verifies that the constructed instance is not only algebraically certified but also numerically usable for finite-horizon RL evaluation. Trajectories that diverge, leave the accepted validation domain, or accumulate numerical instability cause the candidate fixture to be rejected.
\end{enumerate}

Validated native discrete-time fixtures are specified through version-controlled configuration artifacts containing the generator parameters, cost matrices, disturbance covariance, discount factor, oracle definitions for \(V^\star\) and \(\pi^\star\), validation tolerances, and evaluation seeds.

\begin{algorithm}[h]
\caption{Converse-Optimal Benchmark Generator (quadratic--Gaussian case)}
\label{alg:generator}
\footnotesize
\begin{algorithmic}[1]
\STATE \textbf{Input:} family type, parameter ranges, number of instances \(N\), parameterization (\texttt{sqrt} or \texttt{metric}), nonlinearity knob \(\alpha\), seeds
\FOR{each hyperparameter tuple \(\psi\)}
  \STATE Sample parameter vector \(\psi_i\) from the family-specific distribution \(P_\Psi\)
  \STATE Extract \((n,m,Q,R,P,\gamma,\Sigma)\) and input map \(g\) from \(\psi_i\)
  \STATE Verify \(B_\gamma(s)=R+\gamma g(s)^\top P g(s)\succ0\) on the validation domain \(\mathcal D\)
  \STATE Choose a bounded direction field \(\tilde f(s)\) or orthogonal field \(S(s)\)
  \STATE Construct \(f\) by the selected converse parameterization and compute \(a^\star(s)\)
  \STATE Validate the Bellman identity on \(\mathcal G\) and test closed-loop boundedness under \(a^\star\)
  \STATE Export the accepted fixture with oracle accessors, validation metadata, and evaluation seeds
\ENDFOR
\STATE \textbf{Output:} accepted fixture set, validator, oracle evaluators, baseline scripts, and plotting scripts
\end{algorithmic}
\end{algorithm}

\paragraph{Mode I: continuous-time sensitivity experiment.}
Mode I evaluates whether an RL training pipeline remains stable under controlled variation of certified continuous-time systems. The main sensitivity experiment compares two PPO pipelines that differ only in the PyTorch optimizer used inside Stable-Baselines3, namely Adam and RAdam. The system-generation schedule is fixed by generator seeds, while neural-network initialization and optimizer stochasticity are not frozen. Thus, the experiment measures optimizer-induced outcome variability under the same generated-system schedule rather than a single deterministic training trace.

Each generated Mode-I sensitivity instance is a three-dimensional, three-input continuous-time system
\[
    \dot{s}=f_\theta(s)+g_\theta(s)a,
    \qquad
    s\in\mathbb R^3,\quad a\in\mathbb R^3.
\]
The generator uses the converse-HJB template
\[
    f_\theta(s)
    =
    -\xi_\theta(s)
    +
    \frac{1}{4}g_\theta(s)g_\theta(s)^\top\nabla V(s),
    \qquad
    V(s)=\|s\|_2^2,
    \qquad
    \xi_\theta(s)=1.2\,\nabla V(s).
\]
The input matrix \(g_\theta(s)\) is generated symbolically using a truncated trigonometric expansion. The continuous-time system is exposed to the RL agent through a Gymnasium wrapper using explicit Euler integration. The reported optimizer-sensitivity study uses horizon \(200\), integration step \(0.01\), initial state \(s_0=(-1.5,-1.5,-1.5)^\top\), action bounds \([-10,10]^3\), observation bounds \([-10,10]^3\), and training reward
\[
    r(s,a)=-\|s\|_2^2-\|a\|_2^2,
\]
with numerical clipping to keep training traces bounded.

The Adam--RAdam comparison contains \(80\) generated-system runs per optimizer and \(160\) runs in total. The run schedule is generated from the seed/count pairs
\[
    (2,10),\qquad (4,10),\qquad (5,20),\qquad (5,40),
\]
where \((\mathrm{seed},M)\) denotes \(M\) sequentially generated systems after resetting the generator to the specified seed. The reproduction command is
\begin{verbatim}
cd "code mode 1"
python3 main.py
\end{verbatim}
The TensorBoard directory is \path{multi_opt_full_experiment/}. Stored CSV outputs for the \(160\)-run study are provided under \path{Combined_CSVs/}, and the analysis scripts compute the reported variance, histogram, boxplot, and kernel-density diagnostics:
\begin{verbatim}
python3 analysis_scripts/tensorboard_fscore.py
python3 analysis_scripts/tensorboard_hist.py
python3 analysis_scripts/tensorboard_boxplot.py
python3 analysis_scripts/Final_Rewards_Distribution_KDE.py
\end{verbatim}

All PPO hyperparameters are shared between Adam and RAdam:
\begin{center}
\begin{tabular}{lll}
\toprule
Quantity & Value & Comment \\
\midrule
Policy class & \texttt{MlpPolicy} & Stable-Baselines3 PPO \\
Feature extractor & \texttt{CustomMLP} & feature dimension \(128\) \\
Policy network & \([64,64]\) & shared architecture \\
Value network & \([64,64]\) & shared architecture \\
Optimizer & Adam or RAdam & only varied factor \\
Learning rate & \(3\times10^{-4}\) & shared \\
Rollout length & \(2048\) & \texttt{n\_steps} \\
Batch size & \(64\) & shared \\
Epochs per update & \(10\) & shared \\
Discount & \(0.99\) & shared \\
GAE parameter & \(0.95\) & shared \\
PPO clip range & \(0.1\) & shared \\
Entropy coefficient & \(0.01\) & shared \\
Gradient clipping & \(0.3\) & \texttt{max\_grad\_norm} \\
Training budget & \(600000\) steps & per generated system \\
\bottomrule
\end{tabular}
\end{center}

Mode I also includes a symbolic \(N\)-crank generator demonstration. This experiment is not used as the main oracle-referenced comparison in the paper; it demonstrates that the continuous-time converse template can generate physically interpretable nonlinear control families. The implemented example uses \(N=4\) coupled cranks with state
\[
    s=(x_1,y_1,\ldots,x_N,y_N)^\top\in\mathbb R^{2N},
\]
where \(x_i\) and \(y_i\) are the angular position and angular velocity of crank \(i\), and \(u_i\) is the torque applied at the \(i\)-th pivot. Adjacent cranks are coupled through spring-like trigonometric terms. The symbolic construction uses
\[
    f(s)=\frac{1}{4}G(s)L_GV(s)^\top-H(s),
    \qquad
    c(s,u)=L_HV(s)+u^\top u,
\]
with analytic optimal policy
\[
    u^\star(s)=-\frac{1}{2}L_GV(s)^\top .
\]
The training interface exposes PPO, A2C, and SAC:
\begin{verbatim}
python3 train_all_crank.py --run_both
\end{verbatim}
and the evaluator compares learned controllers with the analytic optimal controller and the uncontrolled baseline \(u=0\):
\begin{verbatim}
python3 eval_all_with_sim_cranks.py
\end{verbatim}
The evaluator exports state trajectories, control signals, value-function traces, stage-cost traces, animations, and CSV files under \path{plots/} and \path{plots/data/}.

\paragraph{Mode II: native discrete-time oracle-referenced fixtures.}
Mode II uses native discrete-time fixtures, so the environment transition law, oracle, learner interface, and evaluation metrics are defined in the same formalism. This removes the additional ambiguity introduced by choosing a numerical integration step for an underlying continuous-time system.

The Mode-II code contains two reported benchmark families. \textbf{ConverseArm} is a stochastic serial planar arm with state-dependent actuation. For \(n\) links,
\[
    s=(\theta_1,\omega_1,\ldots,\theta_n,\omega_n)^\top\in\mathbb R^{2n},
    \qquad
    a=(\tau_1,\ldots,\tau_n)^\top\in\mathbb R^n.
\]
The cost and value oracle are
\[
    c(s,a)=s^\top Qs+a^\top Ra,
    \qquad
    V^\star(s)=s^\top Ps+c_\Sigma,
    \qquad
    c_\Sigma=\frac{\gamma}{1-\gamma}\operatorname{tr}(P\Sigma).
\]
The input matrix is configuration dependent:
\[
    [g(s)]_{2i,i}=\cos(\theta_i),
    \qquad i=1,\ldots,n,
\]
with all other entries zero. The transition has the form
\[
    s_{k+1}=f_p(s_k)+p\,g(s_k)a_k+w_k,
    \qquad
    w_k\sim\mathcal N(0,\Sigma),
\]
where \(p\) is the control-authority parameter and \(f_p\) is generated by the stochastic quadratic converse construction. The environment returns reward \(r_k=-c(s_k,a_k)\), while all oracle diagnostics are computed in the cost convention and converted to reward convention only for plotting when explicitly stated.

\textbf{NVDEx} is the dynamically extended nonholonomic-vehicle family. The implementation name is \texttt{NUDEx-v0}; the manuscript uses the family name NVDEx. Each module has state
\[
    s_i=
    (x_i,y_i,\phi_i,
    v_i^{(0)},\ldots,v_i^{(r_v-1)},
    \omega_i^{(0)},\ldots,\omega_i^{(r_\omega-1)})^\top .
\]
The kinematic variables evolve as
\[
    x_{i,k+1}=x_{i,k}+\tau v_{i,k}^{(0)}\cos\phi_{i,k},
    \qquad
    y_{i,k+1}=y_{i,k}+\tau v_{i,k}^{(0)}\sin\phi_{i,k},
\]
\[
    \phi_{i,k+1}=\phi_{i,k}+\tau\omega_{i,k}^{(0)}.
\]
The velocity-extension chains shift forward in discrete time, and the action enters only at the highest derivative. This creates delayed actuation and longer credit-assignment paths while retaining the certified quadratic oracle.

Both Mode-II families expose exact oracle methods through the benchmark object:
\[
    \texttt{oracle.action}(s)=\pi^\star(s),
    \qquad
    \texttt{oracle.value}(s)=V^\star(s),
    \qquad
    \texttt{oracle.q\_value}(s,a)=Q^\star(s,a).
\]
The oracle-referenced baselines are
\[
    \{\mathrm{PPO},\mathrm{A2C},\mathrm{SAC},\mathrm{TD3},\mathrm{DDPG}\}.
\]
The released scripts use Stable-Baselines3 implementations with the following principal hyperparameters:
\begin{center}
\footnotesize
\setlength{\tabcolsep}{4pt}
\begin{tabular}{lp{0.68\linewidth}l}
\toprule
Algorithm & Main hyperparameters & Type \\
\midrule
PPO &
\texttt{n\_steps}=1024, \texttt{batch\_size}=256, \texttt{n\_epochs}=10,
\(\mathrm{lr}=3\cdot10^{-4}\), \texttt{ent\_coef}=\(10^{-4}\), \texttt{clip}=0.2 &
On-policy \\
A2C &
\texttt{n\_steps}=20, \(\mathrm{lr}=3\cdot10^{-4}\), \texttt{ent\_coef}=\(10^{-4}\) &
On-policy \\
SAC &
\texttt{batch\_size}=256, \texttt{buffer}=700000, \texttt{learning\_starts}=10000,
\(\mathrm{lr}=3\cdot10^{-4}\), \(\tau=0.005\), \texttt{ent\_coef=auto} &
Off-policy \\
TD3 &
\texttt{batch\_size}=256, \texttt{buffer}=700000, \texttt{learning\_starts}=25000,
\(\mathrm{lr}=3\cdot10^{-4}\), \(\tau=0.005\), \texttt{policy\_delay}=2 &
Off-policy \\
DDPG &
\texttt{batch\_size}=256, \texttt{buffer}=700000, \texttt{learning\_starts}=25000,
\(\mathrm{lr}=3\cdot10^{-4}\), \(\tau=0.005\) &
Off-policy \\
\bottomrule
\end{tabular}
\end{center}

The exact fixture grids, algorithm schedules, training budgets, seeds, and difficulty presets used by the released experiments are stored in the YAML configuration files and shell launchers. This avoids hard-coding a separate table in the manuscript that could drift from the executable benchmark configuration.

\subsection{Oracle Evaluation, Artifacts, and Reproducibility Checklist}
\label{app:oracle-evaluation-reproducibility}

\paragraph{Cost and reward conventions.}
All oracle definitions are naturally expressed in the cost convention. Since the Gymnasium environments return rewards \(r=-c\), the implementation records the convention used by each diagnostic. For a learned deterministic policy \(\hat\pi\), rollout horizon \(T\), and initial state \(s_0\), the tail-corrected cost gap is
\[
    \mathrm{Gap}_{T}^{c}(\hat{\pi};s_0)
    =
    \frac{
    \mathbb{E}\!\left[
    \sum_{k=0}^{T-1}\gamma^k c(s_k,\hat{\pi}(s_k))
    +
    \gamma^T V^\star(s_T)
    \right]
    -
    V^\star(s_0)
    }{
    |V^\star(s_0)|+\epsilon
    }.
\]
This quantity is zero for the certified optimal policy up to numerical precision and is nonnegative for policies worse than the oracle. Equivalently, if \(G^\star=-V^\star\) denotes the reward-value oracle, the reward-convention optimality gap is
\[
    \mathrm{Gap}_{T}^{r}(\hat{\pi};s_0)
    =
    \frac{
    \mathbb{E}\!\left[
    \sum_{k=0}^{T-1}\gamma^k r(s_k,\hat{\pi}(s_k))
    +
    \gamma^T G^\star(s_T)
    \right]
    -
    G^\star(s_0)
    }{
    |G^\star(s_0)|+\epsilon
    }.
\]
In this convention, zero is optimal and more negative values indicate larger suboptimality. The paper reports explicitly whether a plot uses \(\mathrm{Gap}_{T}^{c}\) or \(\mathrm{Gap}_{T}^{r}\).

The discounted cost regret is
\[
    \mathrm{Regret}_{T}^{c}(\hat{\pi};s_0)
    =
    \mathbb{E}\!\left[
    \sum_{k=0}^{T-1}
    \gamma^k
    \Big(
    c(s_k,\hat{\pi}(s_k))
    -
    c(s_k,\pi^\star(s_k))
    \Big)
    \right].
\]
Lower cost regret is better. If a reward-convention regret is plotted, it is defined by
\[
    \mathrm{Regret}_{T}^{r}=-\mathrm{Regret}_{T}^{c},
\]
and the sign convention is stated in the corresponding caption.

\paragraph{CRN evaluation schedule.}
For each stochastic fixture \(\theta\), the evaluation schedule is generated before policy evaluation:
\[
    \mathcal E_\theta
    =
    \left\{
    \left(s_0^{(j)},w_{0:T-1}^{(j)}\right)
    \right\}_{j=1}^{M}.
\]
All algorithms are evaluated on exactly the same \(\mathcal E_\theta\). If the environment is deterministic, the schedule reduces to matched start states. If the environment is stochastic, the same disturbance stream is replayed for each policy. This gives a paired design: the difference between two algorithms on trajectory \(j\) is evaluated under the same initial condition and randomness. In the implementation, evaluation seeds are deterministic offsets of the training seed, for example
\[
    \text{evaluation seed}=\text{training seed}+10000,
    \qquad
    \text{CRN metric seed}=\text{training seed}+20000.
\]
For NVDEx, the evaluation callback uses deterministic policy actions, evaluates at fixed environment-step intervals, and averages over multiple CRN-matched evaluation episodes.

Confidence intervals are computed by paired bootstrap:
\begin{enumerate}
    \item Sample trajectory indices \(j_1,\ldots,j_M\) with replacement.
    \item Recompute the metric difference using the resampled paired trajectories.
    \item Repeat for \(B_{\mathrm{boot}}\) bootstrap replicates.
    \item Report the empirical \(2.5\%\) and \(97.5\%\) quantiles.
\end{enumerate}
The released evaluation schedule includes the seeds and serialized start-state/noise arrays needed to reproduce paired evaluation.

\paragraph{Bellman-consistency diagnostics.}
The implementation also logs Bellman-consistency quantities using the known value. In the cost convention, a one-step diagnostic is
\[
    \delta_k^c
    =
    V^\star(s_k)
    -
    \left(
    c(s_k,a_k)+\gamma V^\star(s_{k+1})
    \right).
\]
For oracle actions under the exact transition law, this residual is zero in expectation up to numerical and Monte Carlo error. These diagnostics are used for validation and sanity checking; they are not exposed to the learner during training.

\paragraph{Output artifacts.}
The released experiments produce artifacts at three levels. At the generator level, they include configuration files, generator seeds, accepted/rejected instance logs, and validation outputs. At the learning level, they include baseline training scripts, hyperparameter files, trained-model directories, TensorBoard traces, and rollout CSVs. At the evaluation level, they include oracle metrics, CRN schedules, bootstrap summaries, heatmaps, trajectory plots, and diagnostic figures.

The main output locations are:
\begin{itemize}[leftmargin=1.5em,itemsep=0.15em]
    \item Mode-I optimizer-sensitivity traces under \path{multi_opt_full_experiment/} and aggregated CSVs under \path{Combined_CSVs/}.
    \item Mode-I statistical plots under \path{analysis_scripts/}, including variance-test, histogram, boxplot, and KDE diagnostics.
    \item N-crank trajectories, control traces, value traces, stage-cost traces, animations, and CSVs under \path{plots/} and \path{plots/data/}.
    \item ConverseArm logs, models, rollout CSVs, and figures under the released arm experiment directories.
    \item NVDEx logs, CSV summaries, and plotting scripts under \path{nvdex runs/} and its evaluation subdirectories.
\end{itemize}

\paragraph{Reproducibility checklist.}
The project website and repository include:
\begin{enumerate}
    \item Generator source code and version hash.
    \item Fixture/configuration files containing \((P,Q,R,\Sigma,g,f,\gamma,V^\star,\pi^\star)\) or executable equivalents.
    \item Generator seeds and accepted/rejected instance logs.
    \item Validation-domain specifications and tolerance values.
    \item CRN evaluation schedules.
    \item Baseline training scripts and hyperparameter files.
    \item Metric scripts for optimality gap, regret, policy error, Bellman residual, and bootstrap intervals.
    \item Figure-generation scripts for tables, heatmaps, trajectories, and distributional plots.
    \item Environment, Python, CUDA, and dependency versions.
    \item Hardware and wall-clock summaries for the reported runs: Intel\textregistered{} Core\textsuperscript{TM} i9-9900 CPU, RTX 3080 12 GB, and 16 GB DDR4 RAM.
\end{enumerate}

This structure makes the benchmark independently auditable. Another group can regenerate or reload fixtures, verify Bellman consistency, train policies without oracle access, and reproduce oracle-referenced evaluation using the same paired start-state and disturbance schedules. Figure~\ref{fig:signed_gap_distributions} shows that the mean values in Table~\ref{tab:overall_performance} do not fully describe the failure modes. Several algorithm--regime pairs have heavy negative tails, meaning that some evaluated runs or fixtures are not merely slightly suboptimal but orders of magnitude farther from the certified optimum.

\begin{figure*}[h]
\centering
\includegraphics[width=1.0\textwidth]{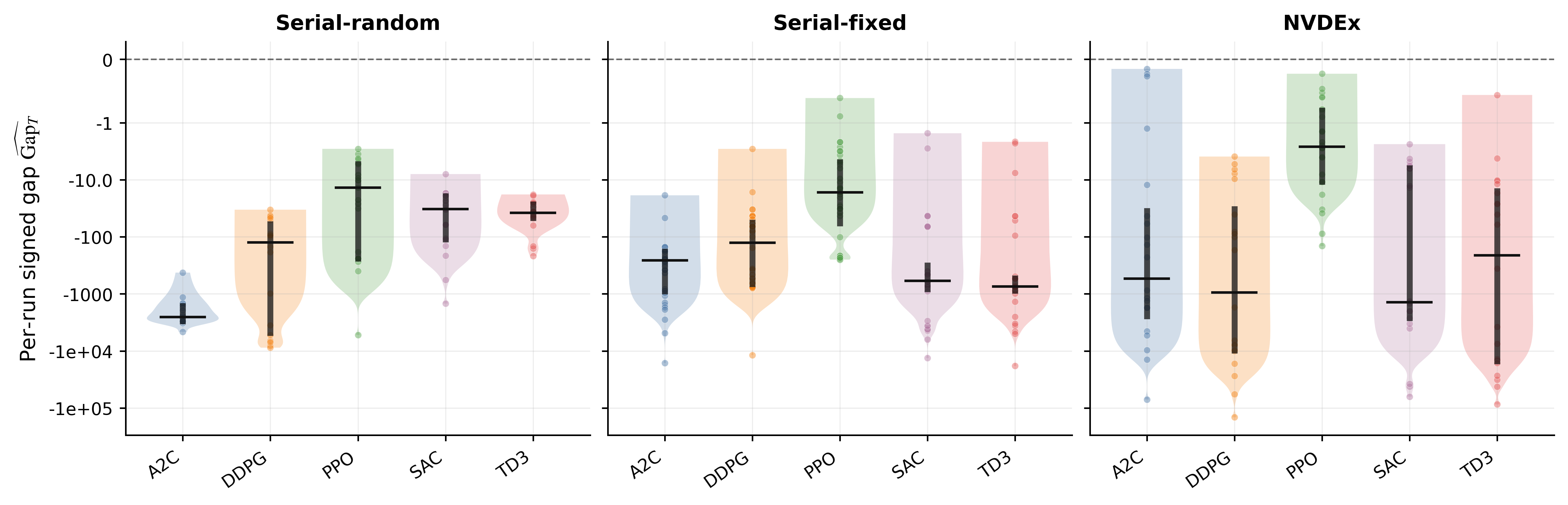}
\caption{Distribution of reward-convention optimality gaps by regime and algorithm. Points correspond to available run/fixture evaluation rows; violin widths show the empirical distribution of optimality gaps. The dashed line marks the certified optimum at zero. Black horizontal markers indicate central tendency. Unlike Figure~\ref{fig:table3_companion}, this is not a confidence-interval plot; it is a distributional diagnostic that exposes variability and lower-tail failures.}
\label{fig:signed_gap_distributions}
\end{figure*}

\begin{table*}[h]
\centering
\footnotesize
\setlength{\tabcolsep}{4pt}
\begin{tabular}{llcccc}
\toprule
Experiment & Algorithm & Optimality gap & Optimality gap 95\% CI & Regret (mean) & Regret 95\% CI \\
\midrule
Serial-random & A2C  & -2.36e+03 & [-2.79e+03, -1.93e+03] & 8.24e+03 & [6.83e+03, 9.58e+03] \\
Serial-random & DDPG & -2.29e+03 & [-3.83e+03, -9.12e+02] & 9.28e+03 & [3.50e+03, 1.63e+04] \\
Serial-random & PPO  & -3.32e+02 & [-8.45e+02, -4.92e+01] & 1.51e+03 & [1.92e+02, 3.93e+03] \\
Serial-random & SAC  & -1.65e+02 & [-3.47e+02, -4.63e+01] & 6.90e+02 & [1.72e+02, 1.53e+03] \\
Serial-random & TD3  & \textbf{-5.85e+01} & \textbf{[-8.64e+01, -3.57e+01]} & \textbf{2.14e+02} & \textbf{[1.25e+02, 3.35e+02]} \\
\midrule
Serial-fixed  & A2C  & -1.11e+03 & [-2.15e+03, -4.58e+02] & 5.78e+04 & [3.12e+04, 9.23e+04] \\
Serial-fixed  & DDPG & -6.48e+02 & [-1.34e+03, -2.51e+02] & 3.06e+04 & [1.61e+04, 5.22e+04] \\
Serial-fixed  & PPO  & \textbf{-6.46e+01} & \textbf{[-9.56e+01, -3.68e+01]} & \textbf{4.08e+03} & \textbf{[2.30e+03, 6.14e+03]} \\
Serial-fixed  & SAC  & -1.51e+03 & [-2.44e+03, -8.00e+02] & 9.03e+04 & [5.18e+04, 1.33e+05] \\
Serial-fixed  & TD3  & -1.53e+03 & [-2.69e+03, -7.61e+02] & 8.31e+04 & [5.05e+04, 1.23e+05] \\
\midrule
NVDEx         & A2C  & -5.60e+03 & [-1.33e+04, -1.06e+03] & 5.94e+02 & [1.34e+02, 1.33e+03] \\
NVDEx         & DDPG & -1.46e+04 & [-3.13e+04, -3.37e+03] & 3.22e+03 & [4.76e+02, 7.39e+03] \\
NVDEx         & PPO  & \textbf{-1.58e+01} & \textbf{[-3.07e+01, -4.57e+00]} & \textbf{2.97e+00} & \textbf{[7.01e-01, 5.82e+00]} \\
NVDEx         & SAC  & -7.76e+03 & [-1.56e+04, -1.15e+03] & 1.56e+03 & [1.13e+02, 3.13e+03] \\
NVDEx         & TD3  & -1.22e+04 & [-2.24e+04, -4.20e+03] & 2.78e+03 & [6.96e+02, 5.67e+03] \\
\bottomrule
\end{tabular}
\caption{Oracle-referenced comparison on native discrete-time fixtures. Metrics are mean and 95\% CI via paired bootstrap under CRN. For optimality gap, zero is optimal and values closer to zero are better; for regret, smaller is better.}
\label{tab:overall_performance}
\end{table*}

\end{document}